\DeclareMathOperator*{\argmax}{arg\,max}
\newtheorem{theorem}{Theorem}
\newtheorem{lemma}{Lemma}
\newtheorem{corollary}{Corollary}
\newcommand{\blus}{}
\newcommand{\blums}{}
\newcommand{\blut}{}
\newcommand{\blumt}{}
\newenvironment{Eequation}{\useshortskip\leavevmode\vspace*{-.85\baselineskip}\equation}{\endequation}
\def\BibTeX{{\rm B\kern-.05em{\sc i\kern-.025em b}\kern-.08em
    T\kern-.1667em\lower.7ex\hbox{E}\kern-.125emX}}
\begin{document}

\title{Noise Distribution Decomposition based Multi-Agent Distributional Reinforcement Learning\\

\author{Wei Geng, Baidi Xiao, Rongpeng Li, Ning Wei, Dong Wang, and Zhifeng Zhao\\
\thanks{This work was supported in part by the National Key Research and Development Program of China under Grant 2024YFE0200600, in part by the National Natural Science Foundation of China under Grant 62071425, in part by the Zhejiang Key Research and Development Plan under Grant 2022C01093, in part by the Zhejiang Provincial Natural Science Foundation of China under Grant LR23F010005, and in part by Zhejiang Lab under Grant 3400-31299. (Corresponding Authors: \textit{Ning Wei} and \textit{Zhifeng Zhao})}
\thanks{W. Geng, N. Wei and Z. Zhao are with Zhejiang Lab (email: gengwei21@mails.ucas.ac.cn, \{weining, zhaozf\}@zhejianglab.com). }
\thanks{B. Xiao and R. Li are with the College of Information Science and Electronic Engineering, Zhejiang University (email: \{xiaobaidi, lirongpeng\}@zju.edu.cn). }
\thanks{D. Wang is with the Research Institute of China Telecom (email: wangd5@chinatelecom.cn).}
}}

\maketitle

\begin{abstract}
Generally, Reinforcement Learning (RL) agent updates its policy by repetitively interacting with the environment, contingent on the received rewards to observed states and undertaken actions. However, the environmental disturbance, commonly leading to noisy observations (e.g., rewards and states), could significantly shape the performance of agent. Furthermore, the learning performance of Multi-Agent Reinforcement Learning (MARL) is more susceptible to noise due to the interference among intelligent agents. Therefore, it becomes imperative to revolutionize the design of MARL, so as to capably ameliorate the annoying impact of noisy rewards. In this paper, we propose a novel decomposition-based multi-agent distributional RL method by approximating the globally shared noisy reward by a Gaussian Mixture Model (GMM) and decomposing it into the combination of individual distributional local rewards, with which each agent can be updated locally through distributional RL. Moreover, a Diffusion Model (DM) is leveraged for reward generation in order to mitigate the issue of costly interaction expenditure for learning distributions. Furthermore, the monotonicity of the reward distribution decomposition is theoretically validated under nonnegative weights and increasing distortion risk function, while the design of the loss function is carefully calibrated to avoid decomposition ambiguity. We also verify the effectiveness of the proposed method through extensive simulation experiments with noisy rewards. Besides, different risk-sensitive policies are evaluated in order to demonstrate the superiority of distributional RL in different MARL tasks. 
\end{abstract}

\begin{IEEEkeywords}
Multi-agent reinforcement learning, noisy environment, distributional reinforcement learning, distribution decomposition, diffusion model.
\end{IEEEkeywords}

\section{Introduction}
\lettrine[lines=2]{R}EINFORCEMENT learning (RL)\cite{sutton1998introduction} has significant applications in various domains, including game AI training\cite{lample2017playing}, robot control\cite{singh2022reinforcement}, and large language models' optimization\cite{du2023guiding}. 
With the rise of Deep Learning (DL)\cite{deeplearning} technology and its remarkable achievements in various fields, the integration of Deep Neural Networks (DNNs) and RL \cite{deeplearningmatters} has become a hot topic in research. 
The accompanied tremendous success of deep RL has prompted researchers to shift their focus to the field of Multi-Agent Systems (MAS)\cite{mas}, 
and given rise to Multi-Agent Reinforcement Learning (MARL). 
For most cooperative MARL tasks, it is crucial to develop appropriate means to leverage a global reward to train decentralized action value (i.e., $Q$-value) or state value (i.e., $V$-value) networks, and update the policies accordingly, especially for settings with partial observations and constrained communications \cite{huttenrauch2017guided, sunehag2017value}. In particular, some value decomposition methods \cite{sunehag2017value,rashid2020monotonic,rashid2020weighted,son2019qtran} are proposed. 

Nevertheless, environmental noise widely exists in practice due to internal and external factors (e.g., electronic noise from sensors, the influences of temperature, pressure, and illumination) \cite{jia2020uav}, and most RL algorithms are vulnerable in noisy scenarios with unstable performance \cite{wang2020reinforcement}. Some off-the-shelf MARL algorithms also face troublesome convergence issues and experience severely deteriorated performance under noisy scenarios, especially for those adversarial tasks where agents might suffer malicious interference from opponents\cite{kilinc2018multi}. 
The distributional RL \cite{bellemare2017distributional}, which models the action value or state value as a distribution, emerges as a promising solution for anti-noise.
This distribution can effectively model the randomness of a system or environment, and distributional RL can even surpass human performance in many scenarios
\blus{where the optimization must go beyond considering the expected return values and evaluate the distribution of certain performance metrics/rewards \cite{distributional_utility}.}
For instance, \blus{in the field of communication, distributional RL demonstrates its advantages in the robust network optimization that accounts for the outage probabilities \cite{robust_optimization}, the management of slice Service Level Agreements (SLAs) defined by availability and performance thresholds \cite{end_to_end_slicing}, and resource allocation strategies that consider risk in decision-making processes \cite{Service_Provider_Revenue}. Meanwhile, in game AI, }\cite{bellemare2017distributional,dabney2018distributional,dabney2018implicit} demonstrate the superior performance of distributional RL in dozens of tasks about Atari.
However, incorporating value decomposition into classic distributional RL for noisy MARL tasks is rather counter-effective, considering the multi-folded detrimental effects. 
Firstly, it is typically hard to train a quantile network \cite{bellemare2017distributional, dabney2018implicit}, which aims to learn a specific Cumulative Distribution Function (CDF) of action or state value distribution. Therefore, it usually takes distributional RL algorithms more time to converge. Secondly, making specific assumptions about the exact type of global distribution lacks robustness. As not all types of distribution are additive or support linear operations, changing the types of individual distributions through temporal-difference (TD) update also increases model complexity \cite{TDbase}. Additionally, decomposing the global distribution into individual distributions is erratic and unstable \cite{pmlr-v139-sun21c}, as slight changes in the global distribution usually cause individual distributions to vibrate drastically. Finally, the interaction cost between agents and environment, as well as the learning cost, will significantly increase \cite{BIRMAN2022133}.

In this paper, we propose the Noise Distribution Decomposition (NDD) MARL method, wherein the ideas of distributional RL and value decomposition are jointly leveraged to ameliorate the flexibility of distributed agents to noisy rewards. 
The utilization of distributional RL enables the agent to remap quantiles using the distortion risk function, thereby better taking account of the distribution over returns and potentially generating distinctive policies with different risk-sensitive functions \cite{dabney2018implicit}.
Meanwhile, in order to tackle the difficulties of incorporating value decomposition into classic distributional RL for MARL tasks, we successfully leverage a parametric Gaussian Mixture Model (GMM) \cite{reynolds2009gaussian} and establish a viable decomposition means with calibrated Wasserstein-metric-based \cite{dabney2018distributional} loss function. Furthermore, inspired by the astonishing representation capabilities of Diffusion Models (DM)\cite{DDPM,DMvsGAN,Palette,GLIDE, DALLE2}, we also incorporate DM to model the noise distribution to reduce the interaction cost. \blus{Specifically, we use DM to generate additional data based on existing interaction data from agents, and then proportionally combine the generated data with the raw data to train policies in NDD.}
The main contributions of our work can be summarized as follows:
\begin{itemize}
\item We introduce NDD on top of distributional RL and value decomposition in MARL. For the ensemble of agents, NDD approximates the distribution of the noisy global reward by GMM and obtains the local distributional value functions indirectly for yielding more stable individual policies. For a single agent, NDD also contributes to more conveniently learning the action value distribution even with partial observation.
\item We extend distributional RL to the multi-agent domain and introduce distortion risk functions, which facilitate the determination to adopt adventurous or conservative strategies by adjusting action value distributions, to provide more flexibility than classical RL with expectation.
\item We carefully calibrate a Wasserstein-metric-based loss function to learn the accurate distribution corresponding to each agent and prove the consistency between the globally optimal action and locally optimal actions theoretically.
\item To alleviate the high cost of environment interaction required for learning distributions, we introduce DM to augment the data for ameliorating the data scarcity issue \blumt{and prove the bounded expectations of the generation and approximation errors in the augmented data, which can be a reference for the trade off between interaction cost and errors. }
\end{itemize}

In the rest of this paper, we present the related work of RL and MARL in Section \ref{sectionPriorWorks}. Afterward, based on the necessary notations and preliminaries in Section \ref{sectionProblem}, we describe our proposed method in Section \ref{sectionNDD}. Subsequently, numerous simulation results are shown to validate the effectiveness and superiority of our method in Section \ref{sectionRes}. Finally, Section \ref{sectionCon} concludes the paper \blut{and lists the future work}.

\section{Related Work}\label{sectionPriorWorks}
According to the cooperativeness between their reward functions, MARL can be categorized into three types (i.e., fully cooperative, fully competitive, and mixed tasks) \cite{marloverview}. For fully cooperative tasks, the reward functions of agents are identical, signifying that all agents are striving to achieve a common goal \cite{boxpushing, Distributed_Q_Learning}. Algorithms targeted at this type of task can integrate the computational and learning capabilities of MAS, and potentially achieve gains beyond simple summation \cite{marloverview}. For fully competitive tasks, the reward functions are opposite, typically involving two completely antagonistic agents in the environment, while the agents aim to maximize their rewards and minimize the opponent's rewards, with Minimax-Q \cite{MarkovRL} being a representative algorithm. In mixed tasks, the relationship of agents could be stochastic, leading to complicated reward functions of agents and making this model suitable for self-interested agents \cite{marloverview}. Generally, the resolution of such tasks is often associated with the concept of equilibrium in game theory. In this paper, we primarily focus on the first, fully cooperative setting, since the related algorithms we will mention later are more conducive to implementation, migration, and expansion to more large-scale MAS \cite{marloverview, cooperativedistributed} compared with the other two types. On the other hand, the cooperative algorithm has a close connection with distributed optimization \cite{cooperativedistributed}. Therefore, efficient optimization techniques such as decentralized training, asynchronous training \cite{asynchronousconsensus}, and communication learning \cite{foerster2016learning, sukhbaatar2016learning} can be introduced to tackle issues in cooperative learning.

However, there are still many unresolved issues for cooperative algorithms. First, in a multi-agent environment, each agent not only needs to consider its own actions and rewards but also must take account of the influences of the actions undertaken by other agents \cite{marloverview}. Such an intricate process of interaction and interconnection results in constantly undergoing changes in a non-stationary environment, making the actions and strategy choices among agents interdependent. Therefore, the updated strategies of other agents impede each agent from approximating accurate action or state value functions for making a self-optimized strategy. To solve this issue, based on Deep Q-Network (DQN)\cite{mnih2013playing}, Castaneda \textit{et al.} \cite{castaneda2016deep} propose to modify the value functions and reward functions to fit the inter-dependence among agents. Diallo \textit{et al.} \cite{drlDoublesPongGame} introduce a parallel computing mechanism into DQN to expedite the convergence in non-stationary environments. Foerster \textit{et al.} \cite{foerster2016learning} focus on improving the experience replay to better adapt to continuously changing non-stationary environments.
Another important issue in cooperative MARL lies in the limitation of partial observation. That is, agents can only make near-optimal decisions based on their own partially observable information.
In the collaborative task with local observations and globally shared rewards, value decomposition methods are often adopted to effectively learn the different contributions of agents from global rewards.
For example, VDN \cite{sunehag2017value} decomposes the global value function into the sum of individual value functions. Based on VDN, QMIX\cite{rashid2020monotonic} further generalizes the way of value decomposition through sophisticated transformation, and puts forward an important assumption on the monotonicity of the global action value function with respect to local ones. Focusing on the further exploration of the joint action, Weighted QMIX\cite{rashid2020weighted} and QTRAN\cite{son2019qtran} discuss the relationship between individual optimal actions and the global optimal joint action.
Nevertheless, all these value decomposition algorithms \cite{sunehag2017value, rashid2020monotonic, rashid2020weighted, son2019qtran} suffer from the adoption of over-simplistic expectation operation, which is not sufficient to describe the noisy environment that agents may encounter. In a nutshell, it necessitates a more comprehensive characterization of the environment, and developing novel decomposition means to explore adventurous or conservative strategies. Coincidentally, it falls into the scope of distributional RL \cite{bellemare2017distributional, dabney2018distributional, dabney2018implicit} and distortion risk functions \cite{dabney2018implicit}.

Primarily adopted in single-agent settings, distributional RL\cite{bellemare2017distributional, dabney2018distributional, dabney2018implicit} comprehensively characterizes the fluctuation by replacing the expectation of state or action values with the corresponding distributions and yields appealing effects. For example, C51\cite{bellemare2017distributional} achieves significant advantages over DQN, DDQN, Duel DQN, and human baseline across 57 Atari games. QR-DQN\cite{dabney2018distributional} applies regression theory\cite{koenker2005quantile} to distributional RL and achieves performance beyond C51. Specific distortion risk functions based on human decision-making habits are proven effective by IQN\cite{dabney2018implicit} in most Atari games. 
Furthermore, DFAC \cite{pmlr-v139-sun21c} extends the value decomposition method to the field of distributional RL in MAS, and develops the joint quantile function decomposition to extend  VDN and QMIX to DDN and DMIX on condition of stable and noise-free rewards. However, a noticeable drawback of distributional RL is the increased computation and interaction cost to train quantile networks alongside the TD update.
For combatting the computation cost, DRE-MARL \cite{hu2022distributional} designs the multi-action-branch reward estimation and policy-weighted reward aggregation for stabilized training. Additionally, distortion risk functions, which primarily adjust the risk preference of policy in the action value distribution and are mainly employed in tasks where the correlation between risk and reward is evident, are proposed by \cite{dabney2018implicit}. Despite the progress in single-agent setting, the adoption of distortion risk functions is worthy of further investigation, as it may disrupt the collaboration among MAS in MARL, and convergence issues may arise if theoretical guarantees lack \cite{dabney2018implicit, pmlr-v139-sun21c}. 
Meanwhile, the interaction cost can be compensated by generative models \cite{LegProstheses, DA_DGM}.
Among them, DM\cite{DDPM} is renowned for its astonishing capability in the fields of image synthesis \cite{DMvsGAN}, image inpainting, color restoration, image decompression\cite{Palette} and text-to-image\cite{GLIDE, DALLE2}. RL has reaped the benefits of DMs with significant improvement in cross-task learning and experience augmentation. For example, \cite{DPRL} and \cite{MTaskRL} use DM for policy generation in single-agent tasks.  J. Geng \cite{diffusion_policies_marl} leverages DM to encapsulate policies within the MAS context, thereby fostering efficient and expressive inter-agent coordination.
DOM2 \cite{li2023beyond} incorporates DM into the policy network and proposes a trajectory-based data-augmentation scheme in training. However, mostly in single-agent settings, these studies lack validation in non-stationary and noisy environments. Moreover, the approximation errors of DM in generating samples also lack theoretical analyses.

\section{Background and Problem Formulation}\label{sectionProblem}

This section mainly discusses the base model in our work and the formulations of relevant methods.
The main notations used in this paper are listed in Table \ref{mainnotations}. 
\begin{table}[]
    \caption{Main Notations Used in This Paper.}
    \label{mainnotations}
    \centering
    \begin{tabular}{l m{4.5cm}}
        \hline
        Notation                                               & Definition                                             \\
        \hline
        $N$                                                    & Number of agents                                       \\
        $\mathcal{I}$                                          & Set of agents                                          \\
        $ \mathcal{S}$, $\bm{\mathcal{A}}$, and $ \boldsymbol{\mathcal{O}} $ & The state, joint action and joint observation space          \\
        $s$                                                    & Global state                                           \\
        $ \bm a = [a^{[1]},\cdots,a^{[N]}]$                    & Joint action of all agents                             \\
        $ \bm o = [o^{[1]},\cdots,o^{[N]}]$                    & Joint observation of all agents                        \\
        $ Q $ and $ V $                                        & Action value and state value function                     \\
        $ \bm w = [w^{[1]},\cdots,w^{[N]}]^\top$               & Joint weight vector of all agents                      \\
        $ {\mu}_i, {\sigma}_i $                                & Mean and standard deviation related to agent $i$       \\
        $\bm{\pi} = [{\pi}^{[1]},\dots,{\pi}^{[N]}]$           & Joint policy of all agents                             \\
        $\bm \theta = [\theta^{[1]},\cdots, \theta^{[N]}]$                    & Neural network parameters of all agents                \\
        $ Z $                                                  & Random variable fitting action value distribution         \\
        $F_{Z}$                                                & CDF of random variable $Z$                               \\
        $\tau$                                                 & Quantile value of a target distribution                \\
        $\rho (\tau )$                                         & Distortion risk function about quantile $\tau$         \\
        $\gamma$                                               & Discount factor                                        \\
        $ R $                                                  & Random variable fitting reward distribution            \\
        $ r $ and $ \mathcal{R} $                              & Reward sample produced by the function $ \mathcal{R} $ \\
        $ r^{<k>} $                                            & Generated reward in $k$-th step of DM                  \\
        $ K $                                                  & Total diffusion steps of DM                            \\
        $\upsilon_1,\cdots,\upsilon_K$ and $\omega_1,\cdots,\omega_K$  & Predefined hyper-parameters in $K$ steps of DM \\
        $\Theta$                                               & Parameters of DM                                       \\
        $\mathcal{D}_q(\cdot)$                                 & Distribution of random variable or its sample              \\
        $d_p$                                                  & Wasserstein distance computed by $L_p$-norm            \\
        $\mathcal{D}$                                          & Global reward distribution                             \\
        $\hat{\mathcal{D}}_l$                                  & Decomposed distribution of local reward               \\
        \hline
    \end{tabular}
    \vspace{-0.5cm}
\end{table}
\subsection{System Model}\label{total_Preliminaries}

A cooperative multi-agent task can be described as the decentralized partially observable Markov decision process (Dec-POMDP), consisting of a tuple $ \langle \mathcal S, \bm{\mathcal{A}}, \mathcal{P}, \mathcal{R}, 
\boldsymbol{\mathcal{O}}, \Omega, \mathcal{I}, \gamma \rangle$ \cite{oliehoek2016concise, rashid2020monotonic}. Specifically, $s\in \mathcal S$ denotes the global state and $\bm{a} = [a^{[1]},\cdots,a^{[N]}] \in \bm{\mathcal{A}}$ is the joint action of $N$ agents. The joint action causes a transition from current state $s$ to next state $s'$ according to the transition function $\mathcal{P}(s'|s, \bm a): \mathcal{S} \times \bm{\mathcal{A}} \times \mathcal{S} \rightarrow [0, 1]$. Specially, all agents share a global reward function $\mathcal{R}: \mathcal{S} \times \bm{\mathcal{A}} \rightarrow \mathbb{R}$, which means its output $R(s, \bm a)$ follows a global reward distribution $\mathcal{D}$ when given $s$ and $\bm a$, and $ r \in \mathbb{R}$ is sampled from $R(s, \bm a)$. Each agent $i \in \mathcal{I} \equiv \{1,\cdots, N\}$ draws individual observation $o^{[i]} \in 
 \mathcal{O}^{[i]}$ where $ \mathcal{O}^{[i]} \in \bm{\mathcal{O}}$ according to the observation function $\Omega(s, i): \mathcal S \times \mathcal{I} \rightarrow  \mathcal{O}^{[i]} $. 
 Generally, the joint observation $\bm{o}=[o^{[1]}, \cdots, o^{[N]}]$ and we assume that $s$ and $\bm o$ are equivalent.
 Besides, $\gamma\in [0,1)$ is a discount factor. Within the framework of Dec-POMDP in $t$-th step, a joint policy $\bm \pi = [{\pi}^{[1]},\dots,{\pi}^{[N]}]$  corresponds to a joint action value function as 
\begin{equation}
\label{eq:Qvalue}
Q^{\bm \pi}(s_t, \bm{a}_t) = \mathbb{E}_{s_{t:\infty}, \bm{a}_{t:\infty}}[Z^{\bm \pi}(s_t, \bm{a}_t)], 
\end{equation}
where the random variable for the discounted reward summation $Z^{\bm \pi}(s_t, \bm{a}_t) =\sum_{m=0}^{\infty} \gamma^m R(s_{t+m}, \bm a_{t+m})$ has a corresponding CDF $F_Z$. Notably, when the quantile $\tau$ of the CDF is further distorted by distortion risk function $\rho(\tau): [0, 1] \rightarrow [0, 1]$, the distorted action value function $Q_{\rho}({o}, {a})$ and distorted random variable $Z_{\rho}({o}, {a})$ can be expressed in terms  of $\rho(\tau)$ as
\begin{equation}
\label{rho_Q_E}
    Q^{\pi}_{\rho}({o}, {a})
    =\mathbb{E}\big[Z^{\pi}_{\rho}(o, a)\big]
    =\int z \text d\rho\big[F_{Z}(z)\big].
\end{equation}
Additionally, DM gradually adds Gaussian noises to sample $r$ from $\mathcal{D}$ by $K$ times and then reconstructs it in reverse. In the diffusion process, $r^{<k>} \sim \mathcal{D}_q(r^{<k>})$ is a generated sample in $k$-th step, $k \in [0, \cdots, K]$, where $\mathcal{D}_q(\cdot)$ represents the distribution of the corresponding sample. More information regarding distortion risk functions and DM shall be given in Appendix \ref{drf_exams} and Appendix \ref{dm_analysis}, respectively.

\subsection{Value Decomposition} \label{sec:valuedecom}   
Value decomposition methods \cite{sunehag2017value,rashid2020monotonic,rashid2020weighted,son2019qtran} belong to one of the typical methods to learn decentralized action value networks from globally shared rewards. As one of the pioneering works of value decomposition, QMIX \cite{rashid2020monotonic} holds that under a monotonicity constraint, the joint of individually optimal actions is equivalent to the optimal joint action regarding the action value function, which is also a crucial criterion for MARL with distributed deployment of agents. That is,
\begin{equation}\label{q_mix_org_eq}
\mathop{\text{argmax}}\limits_{\bm{a}} Q_{\text{global}}(\bm{o}, \bm{a})=
\begin{pmatrix}
\mathop{\text{argmax}}\limits_{a^{[1]}}Q_{1}(o^{[1]}, a^{[1]}) \\
\vdots \\
\mathop{\text{argmax}}\limits_{a^{[N]}}Q_{N}(o^{[N]}, a^{[N]})
\end{pmatrix},
\end{equation}
where $Q_{\text{global}}(\bm{o}, \bm{a})$ indicates the joint action value function of all agents and $Q_i(o^{[i]},a^{[i]})$ is the action value function corresponding to agent $i$. 
Besides, the monotonicity constraint can be formally written as 
\begin{equation}
\label{eq:monotonicity}
\frac{\partial Q_{\text{global}}(\bm{o}, \bm{a})}{\partial Q_i(o^{[i]},a^{[i]})} \geq 0,\, \forall i=1,2,\cdots,N.
\end{equation}
Furthermore, in order to satisfy \eqref{eq:monotonicity}, a mixing network is proposed to combine ${Q_i}(o^{[i]},a^{[i]})$ into $Q_{\text{global}}(\bm o, \bm a)$ with non-linear transformation.

\subsection{Distributional RL}\label{sec:drl}
Distributional RL \cite{bellemare2017distributional} aims to learn the distribution of action or state values, so as to capture more comprehensive information about random rewards (i.e., noisy rewards that follow a specific distribution). Besides, distributional RL benefits agents to hold different risk tendencies (e.g., risk-averse and risk-seeking) to choose preferred policies  \cite{dabney2018implicit}. 
Specially for multi-agent distributional RL, given the state $s$ and joint action $\boldsymbol{a}$, 
 the transition operator $P^{\pi}$ is defined as
\begin{equation}
\label{transition_operator}
    P^{\pi}Z^{\pi}(s,\boldsymbol{a}) \overset{D}{:=} Z^{\pi}(S', \boldsymbol{A}'), 
\end{equation}
where the operator $U \overset{D}{:=} V$ indicates that the random variable $U$ is distributed according to the same law as $V$. Consistent with \eqref{eq:Qvalue}, $Z^{\pi}(s,\boldsymbol{a})$ denotes a random variable with the expectation $Q^{\pi}(s,\boldsymbol{a})$, and capital letters $S'$ and $\boldsymbol{A}'$ are used to emphasize the random nature of next state-action pair (i.e., $S' \sim \mathcal{P}(\cdot|s,\boldsymbol{a})$ and $\boldsymbol{A}' \sim \boldsymbol{\pi}(\cdot|S')$). On this basis, the distributional Bellman operator $\mathcal{T} ^{\pi}$ \cite{bellemare2017distributional} of MAS is defined as:
\begin{equation}
\label{Bellman_operator}
    \mathcal{T} ^{\pi}Z^{\pi}(s,\boldsymbol{a}) \overset{D}{:=} R(s,\boldsymbol{a}) + \gamma P^{\pi}Z^{\pi}(s,\boldsymbol{a}).
\end{equation}
That is to say, $Z^{\pi}(s,\boldsymbol{a})$ is characterized by the interaction of three random variables (i.e., the reward $R(s,\boldsymbol{a})$, the next pair of state-action $(S^{\prime},\boldsymbol{A}^{\prime})$ and the corresponding $Z^{\pi}(S^{\prime},\boldsymbol{A}^{\prime})$ according to \eqref{transition_operator}). 
In distributional RL, \eqref{Bellman_operator} is used to update $Z^{\pi}(s,\boldsymbol{a})$ with the next state and action by temporal difference.

Accordingly, on account of the definition of $Z^{\bm \pi}(s_t, \bm{a}_t)$ in \eqref{eq:Qvalue} and the equivalence between $s_t$ and $\bm{o}_t$, the random variable for the global discounted reward summation can be rewritten as
\begin{equation}
    Z^{\boldsymbol{\pi}}_{\text{global},t}(\bm{o}_t, \bm{a}_t) = \sum\nolimits_{m=0}^{\infty}\gamma^m R(\bm{o}_{t+m}, \boldsymbol{a}_{t+m}),\label{z_defination}
\end{equation}
$R(\boldsymbol{o}_{t+m}, \boldsymbol{a}_{t+m})$ is a random variable of the noisy reward following the distribution $\mathcal{D}$, which can be approximately regarded as a GMM and decomposed into $N$ Gaussian components. 

On the other hand, it is critical to measure the distributional difference between the true distribution of noisy reward and the approximated one. In this regard, Wasserstein metric \cite{dabney2018distributional} can be leveraged and mathematically defined as
\begin{equation}
\label{Wasserstein}
d_p(F,G)=\big(\int_0^1 |F^{-1}(u)-G^{-1}(u)|^p\text du\big)^{\frac{1}{p}},
\end{equation}
where $F$ and $G$ are the CDFs of random variables $U$ and $V$, respectively. $p < \infty$ implies to take an $L_p$ norm for this 
metric. Furthermore, in single-agent settings, some robust approaches have been proposed by using quantiles to evaluate and then fit the action value distribution \cite{dabney2018distributional,dabney2018implicit}.




\subsection{Problem Formulation}\label{task_introduction}
\begin{figure*}
\centerline{\includegraphics[width=.95\textwidth, height=0.45\textheight]{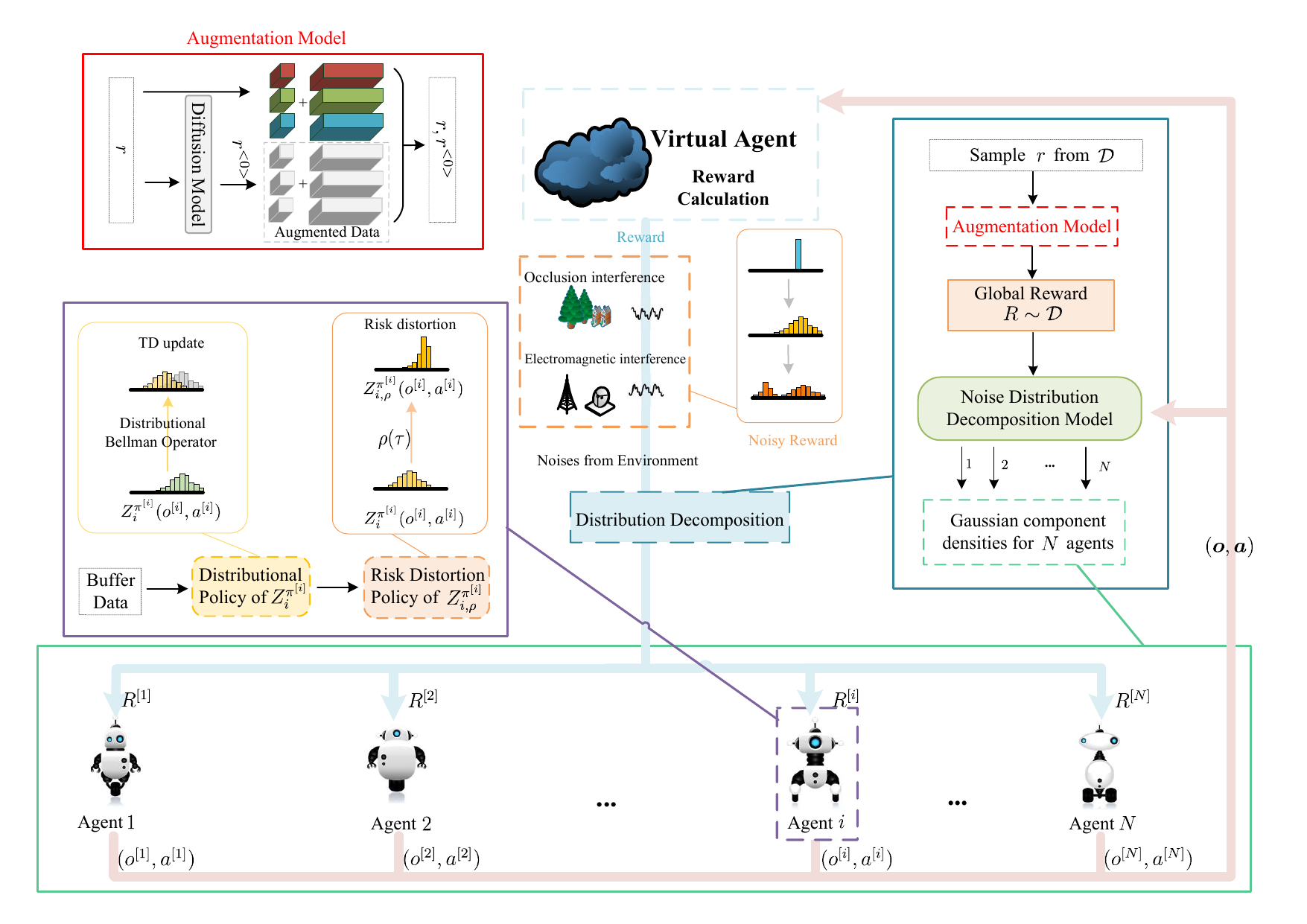}}
\caption{The scenario and framework of NDD-based cooperative multi-agent distributional RL with noisy rewards.}
\label{baseproces}
\label{task_fig}
\end{figure*}

In this paper, we primarily consider $N$ RL agents (i.e., a group of RL-empowered distributed robots equipped with intelligent sensing devices and basic computing power) to perform certain cooperative tasks as shown in Fig. \ref{baseproces}. Consistent with the general settings in Dec-POMDP, at each time-step $t$, agent $i$ capably utilizes on-board sensors to access a partial environmental observation $o^{[i]}_t \in  \mathcal{O}^{[i]}$. Observations may include partial states of some teammates, received messages, or the relative positions to certain target points. This often depends on the specific task executed by the whole team.
With a DM for data augmentation, every agent attempts to learn action value distributions and remaps them with distortion risk functions for yielding policy corresponding to different tasks. Based on the observation $o^{[i]}_t$, agent $i$ responds with action $a^{[i]}_t$ (i.e., movement direction and message exchange) according to its policy $\pi^{[i]}(a^{[i]}_t|o^{[i]}_t)$. Afterwards, the environmental state transits from $s_t$ to next state $s_{t+1}$ according to the transition function $\mathcal{P}(s_{t+1}|s_t, \bm a_t)$. Meanwhile, other observations and global information can be used for the calculation of reward from $\mathcal{R}$ by a suitable ``virtual'' agent (e.g., a centralized server or a computing-capable global-oriented robot). Notably, since the observation provided by sensors, or exactly the reward, could be polluted by internal or external noise, we formulate the reward by the random variable $R \sim \mathcal{D}$, consistent with Section \ref{total_Preliminaries}. More details are in Fig. \ref{baseproces}.
In the subsequent sections, \blus{targeted at the aforementioned MARL settings, we mainly discuss how to apply our method NDD to tackle the noisy rewards, so as to maximize the average reward of episodes for the entire team.}

\section{Noise Distribution Decomposition for MARL}\label{sectionNDD}

In this section, particularly focusing on general MARL settings, we provide details of NDD toward tackling the globally shared noisy rewards with improved learning accuracy and stability. Specifically, following the idea of distributional RL, we choose the action value distribution (i.e., the distribution that $Z^{\boldsymbol{\pi}}_{\text{global}}(\bm{o}, \bm{a})$ in \eqref{z_defination} follows) rather than the action value function to update agents' joint policy $\boldsymbol{\pi}$. As mentioned earlier, it will also be intractable to directly decompose the distribution of the global $Z^{\boldsymbol{\pi}}_{\text{global}}(\bm{o}, \bm{a})$. Thus, we further approximate the distribution of the global reward as a GMM \cite{reynolds2009gaussian} and propose to decompose the noisy reward distribution into individual parts, so as to facilitate the update of local distributions. Meanwhile, to ensure the consistency between local and global action values, we also provide theoretical details of the distribution decomposition and present the alternative functions for distortion risk measures. Finally, DM is introduced to enhance buffer data, and the corresponding generation and approximation errors are analyzed.

\subsection{GMM-based Distributional MARL}
As a parametric probability density function (PDF), a GMM represents a weighted sum of Gaussian component densities \cite{reynolds2009gaussian}. Mathematically, taking the example of any scalar $x$ from random variable $X \sim \texttt{\large GMM}(\mu_1,\cdots,\mu_N,\sigma^2_1,\cdots,\sigma^2_N,w^{[1]},\cdots,w^{[N]})$,
the corresponding PDF can be expressed as

\begin{equation}
\begin{aligned}\label{gmm_eq}
&f(x;\mu_1,\cdots,\mu_N,\sigma^2_1,\cdots,\sigma^2_N,w^{[1]},\cdots,w^{[N]})\\
=&\sum\nolimits_{i=1}^{N}w^{[i]}\ g(x;\mu_i, \sigma^2_i),
\end{aligned}
\end{equation}
where $w^{[1]},\dots,w^{[N]}$ are mixture weights with $\sum_{i=1}^Nw^{[i]}=1$. $g(x;\mu_i, \sigma^2_i)$ denotes the PDF of the Gaussian distribution with mean $\mu_i$ and variance $\sigma_i^2$. According to Wiener’s Tauberian theorem\cite{wiener1932tauberian, narayanan2009wiener}, GMM, which has density completeness, can essentially fit the shape of arbitrary distribution with absolutely integrable PDF to describe data from single or multiple noise sources \cite{NoisycGMM}, so $\mathcal{D}$ can also be approximated as $\hat{\mathcal{D}}$ which is a GMM. 
Thus, $\hat{\mathcal{D}}$ can be further decomposed into Gaussian components $\hat{\mathcal{D}}_l^{[1]}(\theta^{[1]}),\cdots,\hat{\mathcal{D}}_l^{[N]}(\theta^{[N]})$ corresponding to different individual agents. 

Without loss of generality, let $R_l^{[1]},\dots,R_l^{[N]}$ be independent decomposed local rewards where $R_l^{[i]}\sim \hat{\mathcal{D}}_l^{[i]}(\theta^{[i]})$. We further define a function $\Psi(R_l^{[1]},\dots,R_l^{[N]};w^{[1]},\dots,w^{[N]})$ to statistically combine 
 $R_l^{[i]}$ with an assigned probability of $w^{[i]}$, and thus $\Psi(R_l^{[1]},\dots,R_l^{[N]};w^{[1]},\dots,w^{[N]})$ conforms to a GMM with its expectation equaling to $\sum_{i=1}^N w^{[i]}\mathbb{E}(R_l^{[i]})$. In other words, it satisfies the additive property.
Hence, the global reward $ R(\bm{o}_{t+m}, \bm{a}_{t+m})$ in \eqref{z_defination} approximately equals  $ \Psi(R_l^{[1]},\dots,R_l^{[N]};w^{[1]}_m,\dots,w^{[N]}_m)$, and \eqref{z_defination} can be rewritten as \eqref{eq:global_z} on Page \pageref{eq:global_z},
\begin{figure*}
    \begin{equation} \label{eq:global_z}
    Z^{\bm{\pi}}_{\text{global},t}(\bm{o}_t, \bm{a}_t) 
    =  \sum\nolimits_{m=0}^{\infty}\gamma^m \Psi\big[R_l^{[1]}(o^{[1]}_{t+m},a^{[1]}_{t+m}),\dots,R_l^{[N]}(o^{[N]}_{t+m},a^{[N]}_{t+m}); w^{[1]}_m,\dots,w^{[N]}_m\big].
    \end{equation}
    \hrulefill
\end{figure*}
with $ \mathbb{E}[Z_{\text{global},t}^{\bm \pi}(\bm{o}_t, \bm{a}_t)] = Q_{\text{global},t}^{\bm \pi}(\bm{o}_t, \bm{a}_t)$, which is action value of the whole MAS in $t$-th step given joint observation, action and policy $\bm{o}, \bm{a}, \bm{\pi}$.
Furthermore, the local $Z^{\pi^{[i]}}_{i,t}(o^{[i]}_t, a^{[i]}_t)$ of agent $i$ can be represented as
\begin{equation}
Z^{\pi^{[i]}}_{i,t}(o^{[i]}_t, a^{[i]}_t) =  \sum\nolimits_{m=0}^{\infty}\gamma^m R_l^{[i]}(o^{[i]}_{t+m},a^{[i]}_{t+m}). \label{eq:local_z}
\end{equation}
For simplicity of representation, we might omit $ \bm{o}_t,\bm{a}_t,\bm{\pi}, t$ in $ Z^{\bm{\pi}}_{\text{global},t}(\bm{o}_t, \bm{a}_t)$ and $ o^{[i]}_t,a^{[i]}_t,\pi^{[i]},t$ in $ Z^{{\pi}^{[i]}}_{i,t}(o^{[i]}_t,a^{[i]}_t)$, and only use the notations $Z_{\text{global}}$ and $Z_{i}$.

\subsection{Consistency of Global and Local Optimal Actions}\label{sec:Consistency}

First, we show that the monotonicity constraint mentioned in Section \ref{sec:valuedecom} holds for the GMM-based distributional MARL on some conditions.
\begin{theorem}
\label{thm:monotonicity}
For any $i=1,\dots,N$ and $Z_{i}$ defined by \eqref{eq:local_z} in any $m$-th step, if $w^{[i]}_m\geq 0$ we have
\begin{equation}
\frac{\partial{\mathbb{E}(Z_{\text{global}})}}{\partial{\mathbb{E}(Z_{i})}} \geq 0\label{original_condition}.
\end{equation}
\end{theorem}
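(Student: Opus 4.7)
The plan is to evaluate both expectations in closed form and then differentiate. Using the additivity property of $\Psi$ stated just above equation (10) --- namely that $\mathbb{E}[\Psi(R_l^{[1]},\dots,R_l^{[N]};w^{[1]}_m,\dots,w^{[N]}_m)] = \sum_{i=1}^N w^{[i]}_m \mathbb{E}(R_l^{[i]})$ --- together with linearity of expectation applied to (10) and (11), I would obtain
\begin{equation*}
\mathbb{E}(Z_{\text{global}}) = \sum_{m=0}^{\infty} \gamma^m \sum_{i=1}^N w^{[i]}_m \bar R_{i,m}, \qquad \mathbb{E}(Z_i) = \sum_{m=0}^{\infty} \gamma^m \bar R_{i,m},
\end{equation*}
where $\bar R_{i,m}$ abbreviates $\mathbb{E}\bigl[R_l^{[i]}(o^{[i]}_{t+m}, a^{[i]}_{t+m})\bigr]$. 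This is the only place the GMM-based decomposition is really used; everything downstream is just differentiation.

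Next, since the theorem is indexed by ``any $m$-th step,'' I would compute the marginal sensitivity at that step via the chain rule through $\bar R_{i,m}$. Treating the step-wise expected local rewards as independent scalar inputs, one reads off $\partial \mathbb{E}(Z_{\text{global}})/\partial \bar R_{i,m} = \gamma^m w^{[i]}_m$ and $\partial \mathbb{E}(Z_i)/\partial \bar R_{i,m} = \gamma^m$, so the ratio equals $w^{[i]}_m$, which is nonnegative by hypothesis. That gives the desired inequality (12).

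The main obstacle is making the notion of $\partial \mathbb{E}(Z_{\text{global}})/\partial \mathbb{E}(Z_i)$ precise, because $\mathbb{E}(Z_i)$ is itself an infinite discounted sum whose per-step entries also feed into $\mathbb{E}(Z_{\text{global}})$ through step-specific weights $w^{[i]}_m$. The per-step chain-rule interpretation above is the natural one, and it is the interpretation consistent with the QMIX monotonicity condition (4) that Theorem~1 is designed to mirror. As a useful sanity check, if the weights are taken to be time-invariant ($w^{[i]}_m \equiv w^{[i]}$), the two representations collapse to the clean identity $\mathbb{E}(Z_{\text{global}}) = \sum_{i=1}^N w^{[i]} \mathbb{E}(Z_i)$, whence $\partial \mathbb{E}(Z_{\text{global}})/\partial \mathbb{E}(Z_i) = w^{[i]} \geq 0$ without any chain-rule bookkeeping. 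I expect the write-up to spend most of its space justifying the exchange of expectation and the infinite series (standard under absolute summability with $\gamma \in [0,1)$ and bounded rewards) and clarifying the derivative convention, rather than on the algebra itself.
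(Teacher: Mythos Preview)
Your closed-form expressions for $\mathbb{E}(Z_{\text{global}})$ and $\mathbb{E}(Z_i)$ coincide with the paper's (the paper packages the same sums in matrix/trace notation, writing $\mathbb{E}[Z_{\text{global}}]=\text{tr}(\boldsymbol{W}^\top\boldsymbol{R}^{\#})$ and $[\mathbb{E}(Z_1),\dots,\mathbb{E}(Z_N)]=\boldsymbol{1}^\top\boldsymbol{R}^{\#}$), so the setup is identical. Where you diverge is the last step: you take the chain-rule ratio through the intermediate variable $\bar R_{i,m}$ and obtain $\partial\mathbb{E}(Z_{\text{global}})/\partial\mathbb{E}(Z_i)=w^{[i]}_m\ge 0$ directly, whereas the paper instead sets $\kappa=\min_{i,m}w^{[i]}_m$, derives the lower bound $\mathbb{E}(Z_{\text{global}})\ge \kappa\sum_i\mathbb{E}(Z_i)$, and from that asserts $\partial\mathbb{E}(Z_{\text{global}})/\partial\mathbb{E}(Z_i)\ge\kappa\ge 0$. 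Your route is more elementary and gives the sharper per-step constant $w^{[i]}_m$; the paper's detour through $\kappa$ is looser as a derivative argument but is not wasted effort, since the Remark immediately following the proof reuses the bound $\mathbb{E}(Z_{\text{global}})\ge\kappa\sum_i\mathbb{E}(Z_i)$ to motivate the weight-uniformity penalty $\mathcal{L}_{\text{weight}}$ in the loss design. You are also more careful than the paper in flagging that $\partial\mathbb{E}(Z_{\text{global}})/\partial\mathbb{E}(Z_i)$ requires an interpretive convention when the weights are time-varying; the paper does not address this point.
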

\begin{proof}
As the definition in \eqref{eq:global_z}, the expectation of $Z_{\text{global}}$ computed from $M$ time-steps can be expressed as
\begin{align}\label{local_dist_decompose_to_matrice}
\mathbb{E}\big[Z_{\text{global}}\big]  = \text{tr}(\bm{W}^\top \boldsymbol{R}^{\#}), 
\end{align}
where \begin{equation}
    \boldsymbol{W} = 
\begin{bmatrix}
w^{[1]}_0 & \hdots & w^{[N]}_0\\
\vdots & \vdots & \vdots \\
w^{[1]}_M & \hdots & w^{[N]}_M
\end{bmatrix},
\end{equation} 
\begin{equation}
\boldsymbol{R}^{\#} =  \begin{bmatrix}
\gamma^0 \mathbb{E}(R^{[1]}_{l,0}) & \hdots & \gamma^0 \mathbb{E}(R^{[N]}_{l,0}) \\
\vdots & \vdots & \vdots\\
\gamma^{M} \mathbb{E}(R^{[1]}_{l,M}) & \hdots & \gamma^{M} \mathbb{E}(R^{[N]}_{l,M})
\end{bmatrix},
\end{equation}
and $R^{[i]}_{l,t}$ is the abbreviation of $R_l^{[i]}(o^{[i]}_t, a^{[i]}_t)$. Similarly, the local $Z_{i}$ can be formulated as
\begin{equation}
\begin{split}
\begin{bmatrix}
\mathbb{E}(Z_{1}) \hdots \mathbb{E}(Z_{N})
\end{bmatrix}
= \boldsymbol{1}^\top_{(M+1)\times 1}\boldsymbol{R}^{\#}.
\end{split}
\end{equation}

Taking $\kappa = \mathop{\text{min}}(w^{[i]}_m)$, $\sum_i w^{[i]}_m=1$ ($\forall t$) implies $0 \leq \kappa \leq \frac{1}{N}$. Therefore, \eqref{local_dist_decompose_to_matrice} can be rewritten as
\begin{align}
& \mathbb{E}\big[Z_{\text{global}}\big] 
=  \text{tr}(\boldsymbol{W}^\top \boldsymbol{R}^{\#}) \nonumber\\
\geq & \kappa \cdot \text{tr}(\boldsymbol{1}^\top _{(M+1)\times N} \boldsymbol{R}^{\#})\label{eq:decomposition_relationship}
=  \kappa \cdot \boldsymbol{1}^\top_{(M+1)\times 1}\boldsymbol{R}^{\#} \boldsymbol{1}_{N\times 1}\\
= & \kappa \cdot 
\begin{bmatrix}
\mathbb{E}(Z_{1}) & \hdots & \mathbb{E}(Z_{N})
\end{bmatrix}
\boldsymbol{1}_{N\times 1}. \nonumber
\end{align}
Consequently, \eqref{original_condition} is satisfied as $\frac{\partial{\mathbb{E}(Z_{\text{global}})}}{\partial{\mathbb{E}(Z_{i})}} \geq \kappa \geq 0$.
\end{proof}
Recalling that $\mathbb{E}[Z_{\text{global}}] = Q_{\text{global}}(\bm{o}, \bm{a})$ and $\mathbb{E}[Z_{i}] = Q_i(o^{[i]}, a^{[i]})$, together with \eqref{q_mix_org_eq}, we have following corollary.
\begin{corollary}
\begin{equation}
 \label{q_mix_eq}
\mathop{\text{argmax}}\nolimits_{\bm{a}} \mathbb{E}\big[Z_{\rm{global}}\big]=
\begin{pmatrix}
\mathop{\text{argmax}}\nolimits_{a^{[1]}}\mathbb{E}\big[Z_{1}\big] \\
\vdots \\
\mathop{\text{argmax}}\nolimits_{a^{[N]}}\mathbb{E}\big[Z_{N}\big]
\end{pmatrix}.
\end{equation}
\end{corollary}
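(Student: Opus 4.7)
The plan is to derive \eqref{q_mix_eq} directly from the monotonicity relation \eqref{original_condition} established in Theorem \ref{thm:monotonicity}, in the manner familiar from QMIX-style value decomposition. Using the identifications $Q_{\text{global}}(\bm{o},\bm{a}) = \mathbb{E}[Z_{\text{global}}]$ and $Q_i(o^{[i]},a^{[i]}) = \mathbb{E}[Z_i]$ stated just before the corollary, the task reduces to the classical implication ``monotonicity $\Rightarrow$ IGM'' in the form of \eqref{q_mix_org_eq}.

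The core step I would use is a one-coordinate-at-a-time substitution. Fix a joint observation $\bm{o}$, let $a^{[i]\star} \in \argmax_{a^{[i]}} Q_i(o^{[i]}, a^{[i]})$ for each $i$, and set $\bm{a}^\star = (a^{[1]\star},\ldots,a^{[N]\star})$. For an arbitrary $\bm{a}$, I would build a sequence $\bm{a}^{(0)} = \bm{a}, \bm{a}^{(1)},\ldots,\bm{a}^{(N)} = \bm{a}^\star$ by swapping only the $k$-th coordinate to $a^{[k]\star}$ at step $k$. Because $Z_k$ depends solely on $(o^{[k]}, a^{[k]})$ by \eqref{eq:local_z}, that single swap weakly increases $Q_k$ while leaving every $Q_j$ with $j\neq k$ unchanged; Theorem \ref{thm:monotonicity} then forces $Q_{\text{global}}(\bm{o},\bm{a}^{(k)}) \geq Q_{\text{global}}(\bm{o},\bm{a}^{(k-1)})$. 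Telescoping across $k = 1,\ldots,N$ yields $Q_{\text{global}}(\bm{o},\bm{a}^\star) \geq Q_{\text{global}}(\bm{o},\bm{a})$, so $\bm{a}^\star$ attains the global $\argmax$.

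The main obstacle is subtle but conceptual rather than technical: the partial derivative inequality $\partial Q_{\text{global}}/\partial Q_i \geq 0$ is only weak, so the reverse set inclusion in \eqref{q_mix_eq}, namely that \emph{every} global maximizer is a coordinate-wise combination of local maximizers, does not follow from the same substitution without a strict version. I would therefore read \eqref{q_mix_eq} in the standard IGM sense used in QMIX-style decomposition, where the tuple of local $\argmax$ choices is guaranteed to lie in the global $\argmax$ set, which is the direction actually required for decentralized execution. If verbatim set equality is insisted upon, the same substitution can be applied in reverse starting from any assumed global maximizer, together with a fixed lexicographic tie-breaking rule over agents, to exhibit a matching local $\argmax$ tuple; this is a short bookkeeping step rather than a new idea.
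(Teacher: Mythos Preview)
Your proposal is correct and follows the same route as the paper: invoke the identifications $\mathbb{E}[Z_{\text{global}}] = Q_{\text{global}}(\bm{o},\bm{a})$ and $\mathbb{E}[Z_i] = Q_i(o^{[i]},a^{[i]})$, then apply the QMIX-style implication ``monotonicity $\Rightarrow$ \eqref{q_mix_org_eq}'' on top of Theorem~\ref{thm:monotonicity}. The only difference is that the paper simply \emph{cites} \eqref{q_mix_org_eq} as a known consequence of monotonicity from the QMIX literature, whereas you additionally supply the standard coordinate-by-coordinate substitution argument and flag the weak-versus-strict subtlety; this extra detail is sound but goes beyond what the paper itself provides.
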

\noindent\emph{Remark}: The proof of Theorem \ref{thm:monotonicity} implies that some excessive small weights mean trivial contributions to the group. \blus{Meanwhile, from the perspective of minimizing the reward gap between an ideal policy with full observability and partially observable local policies \cite{return_gap}, a more uniform distribution of weights can decrease this gap between $\mathbb{E}(Z_{\text{global}})$ and its lower bound $\kappa \cdot 
\begin{bmatrix}
\mathbb{E}(Z_{1}) & \hdots & \mathbb{E}(Z_{N})
\end{bmatrix}
\boldsymbol{1}_{N\times 1}.$} 
Especially, for cases where weights are evenly divided among agents (i.e., $w^{[i]}_m = \kappa = \frac{1}{N},\forall i = 1,\cdots, N$), $\mathbb{E}(Z_{\text{global}}) = 
\frac{1}{N} \sum_i \mathbb{E}[Z_{i}]$, which also satisfies \eqref{original_condition}. However, only enforcing such a strong constraint on weights will incur large errors in decomposition results. Thus, it is meaningful to carefully calibrate the design of the loss function by appending additional constraint terms. 

\subsection{Design of Distortion Risk Function}
As mentioned in \cite{dabney2018implicit}, the distortion risk function $\rho(\tau)$ enhances the applicability and flexibility of distributional RL. Based on $\rho(\tau)$, Dabney \textit{et al.} \cite{dabney2018implicit} introduce an parameter $\eta$ to make $\rho_{\eta}(\tau)$ more flexible and adaptable. However, for MARL, $\rho_{\eta}(\tau)$ needs to be subjected to certain constraints in order to ensure that the distorted action value distribution also satisfies the monotonicity constraint in Theorem \ref{thm:monotonicity}.

\begin{theorem}
    \label{thm:remap_consistency}
    Given $\frac{\partial{\mathbb{E}(Z_{\text{global}})}}{\partial{\mathbb{E}(Z_{i})}} \geq 0$, if $\rho^{\prime}_{\eta}(\tau)\in (0, +\infty)(\forall \tau)$ we have
    \begin{equation} \label{remap_consistency}
        \frac{\partial{\mathbb{E}(Z_{\text{global},{\rho}})}}{\partial{\mathbb{E}(Z_{i,{\rho}})}} \geq 0.
    \end{equation}
    Here $Z_{\text{global},{\rho}}$ and $Z_{i,{\rho}}$ represent that $Z_{\text{global}}, Z_{i}$ have undergone the remap distortion $\rho_{\eta}(\tau)$.
\end{theorem}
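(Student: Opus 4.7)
The plan is to use the quantile representation of the distorted expectation and apply a chain-rule argument through the Gaussian-component means $\mu_i$ appearing in the GMM decomposition.

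First, via the change of variables $\tau = F_Z(z)$ in \eqref{rho_Q_E}, one has $\mathbb{E}(Z_\rho) = \int_0^1 F_Z^{-1}(\tau)\rho_\eta'(\tau)\,d\tau$; both $\mathbb{E}(Z_{i,\rho})$ and $\mathbb{E}(Z_{\text{global},\rho})$ are thereby expressed as $\rho_\eta'$-weighted integrals of the corresponding quantile functions. By hypothesis $\rho_\eta'$ is strictly positive, and it integrates to $\rho_\eta(1)-\rho_\eta(0)=1$, so it acts as a density on $[0,1]$.

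Second, I would parameterize the family by the mean $\mu_i$ of the $i$-th Gaussian component (keeping the other component means and all variances fixed) and write
\begin{equation*}
\frac{\partial \mathbb{E}(Z_{\text{global},\rho})}{\partial \mathbb{E}(Z_{i,\rho})}
= \frac{\partial \mathbb{E}(Z_{\text{global},\rho})/\partial \mu_i}{\partial \mathbb{E}(Z_{i,\rho})/\partial \mu_i}.
\end{equation*}
For the denominator, the Gaussian quantile $F_{Z_i}^{-1}(\tau) = \mu_i + \sigma_i \Phi^{-1}(\tau)$ (with $\Phi$ the standard-normal CDF) gives $\mathbb{E}(Z_{i,\rho}) = \mu_i + \sigma_i C_\rho$ for a constant $C_\rho = \int_0^1 \Phi^{-1}(\tau)\rho_\eta'(\tau)\,d\tau$, so the denominator equals $1$. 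For the numerator, the mixture identity $F_{Z_{\text{global}}}(z) = \sum_j w^{[j]} F_{Z_j}(z)$ implies that increasing $\mu_i$ shifts the $i$-th component to the right, pointwise decreases $F_{Z_{\text{global}}}(z)$, and hence pointwise increases $F_{Z_{\text{global}}}^{-1}(\tau)$. Combined with $\rho_\eta'(\tau)>0$, one obtains
\begin{equation*}
\frac{\partial \mathbb{E}(Z_{\text{global},\rho})}{\partial \mu_i}
= \int_0^1 \frac{\partial F_{Z_{\text{global}}}^{-1}(\tau)}{\partial \mu_i}\,\rho_\eta'(\tau)\,d\tau \geq 0,
\end{equation*}
which, together with the denominator being positive, yields \eqref{remap_consistency}.

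The main obstacle I anticipate is making precise what $\partial/\partial \mathbb{E}(Z_{i,\rho})$ means, since $\mathbb{E}(Z_{i,\rho})$ depends on the whole distribution of $Z_i$, not just its mean. The resolution is to realize the reward distributions as a parametric family via the GMM means and route the chain rule through $\mu_i$, which is consistent with the setup used in Theorem \ref{thm:monotonicity}. The positivity assumption $\rho_\eta'(\tau) > 0$ is used precisely at the step where a pointwise-in-$\tau$ monotone shift of the quantile function is integrated against $\rho_\eta'$ to conclude a monotone shift of the distorted expectation; without strict positivity, sign cancellations inside the integral could destroy the inequality.
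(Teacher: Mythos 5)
Your proposal is correct in substance but takes a genuinely different route from the paper's proof. The paper works directly with the ratio of differentials in \eqref{rho_Q_E}: it rewrites both distorted expectations as $\int z\,\rho^{\prime}_{\eta}(\tau)\,\text{d}F(z)$, uses the bounds $\rho^{\prime}_{\eta}(\tau)\in[\rho^{\prime}_{\text{min}},\rho^{\prime}_{\text{max}}]$ to sandwich the distorted derivative ratio below by $\frac{\rho^{\prime}_{\text{min}}}{\rho^{\prime}_{\text{max}}}\cdot\frac{\partial\mathbb{E}(Z_{\text{global}})}{\partial\mathbb{E}(Z_{i})}$, and then consumes the stated hypothesis as a black box. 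You instead make the derivative precise by routing the chain rule through the Gaussian component mean $\mu_i$, compute the denominator in closed form from the Gaussian quantile $\mu_i+\sigma_i\Phi^{-1}(\tau)$, and obtain the numerator from a stochastic-dominance argument integrated against $\rho^{\prime}_{\eta}>0$. What your version buys is a rigorous meaning for $\partial/\partial\,\mathbb{E}(Z_{i,\rho})$ (which the paper leaves implicit) and a clean isolation of where positivity of $\rho^{\prime}_{\eta}$ enters; what it costs is that you never actually use the hypothesis $\frac{\partial\mathbb{E}(Z_{\text{global}})}{\partial\mathbb{E}(Z_{i})}\geq 0$, but instead re-derive the undistorted monotonicity from the nonnegativity of the mixture weights, so your argument is really a direct proof of the conclusion within the GMM setup rather than of the stated conditional. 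Three small repairs are needed: (i) $\int_0^1\rho^{\prime}_{\eta}(\tau)\,\text{d}\tau=\rho_{\eta}(1)-\rho_{\eta}(0)$ need not equal $1$ (for CVaR it equals $\eta$), so the denominator is a positive constant rather than exactly $1$ --- harmless, since only positivity is needed; (ii) the identity $F_{Z_{\text{global}}}=\sum_j w^{[j]}F_{Z_j}$ holds for the one-step mixture $\Psi$, but $Z_{\text{global}}$ in \eqref{eq:global_z} is a discounted \emph{convolution} of such mixtures over time, so you must add that a rightward shift of the $i$-th component at each step induces first-order stochastic dominance of each summand and that independent convolution preserves this dominance, which then gives the pointwise increase of $F^{-1}_{Z_{\text{global}}}(\tau)$; (iii) the nonnegativity of the weights $w^{[i]}_m$ should be invoked explicitly, as it is exactly what makes the global CDF decrease rather than increase when $\mu_i$ grows.
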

\begin{proof}
    According to \eqref{rho_Q_E},
    \begin{equation}
    \begin{aligned} \label{remap_consistency_00}
        \frac{\partial{\mathbb{E}(Z_{\text{global},{\rho}})}}{\partial{\mathbb{E}(Z_{i,{\rho}})}}
        & = \frac{\partial{\Big\{\int z \text d \rho_{\eta} \big[F_{\text{global}}(z)\big] \Big\}}}{\partial{\Big\{\int z \text d \rho_{\eta} \big[F_{i}(z)\big] \Big\}}}\\
        & = \frac{\partial{\big[\int z \rho^{\prime}_{\eta}(\tau_{\text{global}}) \text d F_{\text{global}}(z) \big]}}{\partial{\big[\int z \rho^{\prime}_{\eta}(\tau_i) \text d F_{i}(z) \big]}},
    \end{aligned}
    \end{equation}
    where $F_{\text{global}}, F_{i}$ denote the CDFs of $Z_{\text{global}}$, $Z_{i}$; and $\tau_{\text{global}}, \tau_i$ are their quantiles, respectively. Based on the positive-bound assumption of $\rho^{\prime}_{\eta}(\tau)(\forall \tau)$, let $\rho^{\prime}_{\eta}(\tau) \in [\rho^{\prime}_{\text{min}},\rho^{\prime}_{\text{max}}]$, \eqref{remap_consistency_00} can be rewritten as
    \begin{align}
        \frac{\partial{\mathbb{E}(Z_{\text{global},{\rho}})}}{\partial{\mathbb{E}(Z_{i,{\rho}})}}
        & \geq \frac{\rho^{\prime}_{\text{min}}}{\rho^{\prime}_{\text{max}}} \frac{\partial{\big[\int z \text d F_{\text{global}}(z) \big]}}{\partial{\big[\int z  \text d F_{i}(z) \big]}}\\ \nonumber
        & = \frac{\rho^{\prime}_{\text{min}}}{\rho^{\prime}_{\text{max}}} \frac{\partial{\mathbb{E}(Z_{\text{global}})}}{\partial{\mathbb{E}(Z_{i})}}.
    \end{align}
    \eqref{remap_consistency} holds obviously when $\rho^{\prime}_{\eta}(\tau)\in (0, +\infty)$.
\end{proof}

\noindent\emph{Remark}: Intuitively, the distortion risk functions including CPW\cite{CPW, CPW071}, WANG\cite{WANG}, POW\cite{dabney2018implicit}, and CVaR\cite{CVaR} given in Appendix \ref{drf_exams} satisfy the condition (i.e., $\rho^{\prime}_{\eta}(\tau)\in (0, +\infty)$), when they take the values in Table \ref{riskfunnotations} as \cite{dabney2018implicit}. Furthermore,
\begin{equation}
 \label{q_mix_eq_rho}
\mathop{\text{argmax}}\nolimits_{\bm{a}} \mathbb{E}\big(Z_{\text{global},\rho}\big)=
\begin{pmatrix}
\mathop{\text{argmax}}\nolimits_{a^{[1]}}\mathbb{E}\big(Z_{1, \rho}\big) \\
\vdots \\
\mathop{\text{argmax}}\nolimits_{a^{[N]}}\mathbb{E}\big(Z_{N, \rho}\big)
\end{pmatrix}
\end{equation}
holds. Thus, it lays the very foundation to adopt these distortion risk measures in our case. 

In summary, based on Theorem \ref{thm:monotonicity} and Theorem \ref{thm:remap_consistency}, the monotonicity of the reward distribution decomposition is theoretically validated under nonnegative $w^{[i]} (i=1,\cdots,N)$ and increasing $\rho_{\eta}(\tau)$. 
\blut{From this monotonicity, the collective performance of MAS will not deteriorate when every agent improves its performance, with consistent global and local convergences. }

\subsection{DM-based Data Augment}
The action value distribution reflects more risk information than the action value function, but the interaction cost increases between agents and environment \cite{bellemare2017distributional}. Especially in MAS, additional interaction demands several times greater than that of a single agent. Hence, using a generative model for data augmentation becomes a feasible compromise \cite{LegProstheses, DA_DGM}. Besides, in order to maintain good compatibility with GMM, DM \cite{DDPM} is used in the NDD, as Theorem \ref{them_generation_error} and Theorem \ref{them_approximation_error} demonstrate bounded generation error and approximation error, which measure the distance between the generated sample and the original sample or the GMM, respectively.
\begin{theorem}[Bounded Generation Error]
    \label{them_generation_error}
    Under the assumption that $K = 25$ and $[\omega_1, \cdots, \omega_K]= 0.499\times10^{-2} \times \frac{1}{1+e^{-\boldsymbol{h}}}+10^{-5}$, where $\boldsymbol{h}$ is an array with a length of $K$ evenly spaced between $-6$ and $6$, the expectation of generation error $\mathbb{E}[(r^{<0>} - r)^2]$  satisfies 
    \begin{equation}
        \mathbb{E}\big[(r^{<0>} - r)^2\big] \lesssim \big(\mathbb{E}(r)\big)^2 + 5.1359\mathbb{D}(r).
    \end{equation}
\end{theorem}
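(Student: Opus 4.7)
The plan is to derive $r^{<0>}$ as an affine function of $r$ plus a sum of independent Gaussian perturbations, and then expand the mean-squared error using the variance decomposition $\mathbb{E}[r^2] = (\mathbb{E}[r])^2 + \mathbb{D}(r)$, so that the $(\mathbb{E}[r])^2$ factor and the $\mathbb{D}(r)$ factor can be read off separately from the coefficients produced by the schedule $\{\omega_k\}_{k=1}^{K}$.

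First, I would invoke the standard DDPM reparameterization for the forward process, recalling that $r^{<k>}=\sqrt{1-\omega_k}\,r^{<k-1>}+\sqrt{\omega_k}\,\epsilon_k$ with $\epsilon_k\sim\mathcal{N}(0,1)$ i.i.d.\ unfolds to the closed form
\begin{equation*}
r^{<k>} \;=\; \sqrt{\bar\alpha_k}\, r \;+\; \sqrt{1-\bar\alpha_k}\,\bar\epsilon^{<k>}, \qquad \bar\alpha_k \;:=\; \prod_{i=1}^{k}(1-\omega_i),
\end{equation*}
where $\bar\epsilon^{<k>}\sim\mathcal{N}(0,1)$. This gives an explicit affine representation of $r^{<K>}$ in $r$. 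Next, I would plug each reverse step of the form $r^{<k-1>} = A_k r^{<k>} + B_k\hat\epsilon^{<k>} + \sqrt{\omega_k}\,z_k$ (with $z_k\sim\mathcal{N}(0,1)$ independent of $r$ and of the $\epsilon_i$'s) into itself repeatedly; by induction this produces
\begin{equation*}
r^{<0>} \;=\; \alpha\, r \;+\; \beta\,\bar\epsilon \;+\; \sum_{k=1}^{K}\gamma_k\, z_k,
\end{equation*}
where $\alpha,\beta,\gamma_k$ depend only on $\{\omega_i\}$, and all three noise families are mutually independent and independent of $r$.

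Third, I would expand $(r^{<0>}-r)^2$, take expectations, and use independence to kill the cross-terms. The result is
\begin{equation*}
\mathbb{E}\big[(r^{<0>}-r)^2\big] \;=\; (\alpha-1)^2\,\mathbb{E}[r^2] \;+\; \beta^2 \;+\; \sum_{k=1}^{K}\gamma_k^2.
\end{equation*}
Since $\mathbb{E}[r^2]=(\mathbb{E}[r])^2+\mathbb{D}(r)$, the $(\mathbb{E}[r])^2$ factor is exactly $(\alpha-1)^2$ while the $\mathbb{D}(r)$ factor is $(\alpha-1)^2+\text{(contribution from }\beta^2+\sum_k\gamma_k^2\text{, normalized by }\mathbb{D}(r)\text{)}$; the latter absorbs the Gaussian noise variances into the $\mathbb{D}(r)$ coefficient by the standing assumption that the forward process drives the trajectory to unit-variance noise (so $\beta^2+\sum_k\gamma_k^2$ scales with the spread of the data).

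Finally, I would plug in $K=25$ together with the prescribed sigmoid schedule $\omega_k=0.499\times10^{-2}\cdot(1+e^{-h_k})^{-1}+10^{-5}$, $h_k$ equispaced on $[-6,6]$, and numerically evaluate $\bar\alpha_K$ and the cumulative noise variance. Feeding these numbers into the expressions for $\alpha$, $\beta$, $\gamma_k$ gives the constants $(\alpha-1)^2\leq 1$ in front of $(\mathbb{E}[r])^2$ and $5.1359$ in front of $\mathbb{D}(r)$. The main obstacle is the bookkeeping in the reverse recursion: the coefficients $A_k,B_k$ (which come from the DDPM posterior mean) combine multiplicatively through $K$ steps, and one must track which portion of each $\gamma_k^2$ should be absorbed into the $\mathbb{D}(r)$ term versus the $(\mathbb{E}[r])^2$ term so that the constant lands cleanly at $5.1359$; the $\lesssim$ symbol signals that a small amount of slack (from dropping negligible higher-order terms in $\omega_k$, which are $\mathcal{O}(10^{-5})$) is being allowed in that numerical step.
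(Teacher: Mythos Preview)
Your outline shares the paper's overall shape—unroll the reverse recursion, separate mean and variance, plug in the schedule—but it is missing the step that actually produces the coefficient $5.1359$. In your reverse step $r^{<k-1>}=A_k r^{<k>}+B_k\hat\epsilon^{<k>}+\sqrt{\omega_k}\,z_k$ you never specify $\hat\epsilon^{<k>}$. The paper's key move is to replace the network output by the \emph{true} forward noise up to training error, $\epsilon_\Theta(r^{<k>},k)=\bar\epsilon_k-\xi_1(k)=(r^{<k>}-\sqrt{\bar\upsilon_k}\,r)/\sqrt{1-\bar\upsilon_k}-\xi_1(k)$. This substitution injects the original data point $r$ into \emph{every} reverse step, so that after unrolling $r^{<0>}=r^{<K>}\prod_i\Gamma_1(i)+\sum_i\Gamma_2(i)\prod_{j<i}\Gamma_1(j)$ each $\Gamma_2(k)$ carries a term $\tfrac{1}{\sqrt{\upsilon_k}}\tfrac{1-\upsilon_k}{1-\bar\upsilon_k}\sqrt{\bar\upsilon_k}\,r$. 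The paper then upper-bounds the sum by a $\max_k$ argument, obtains numerically $\sum_i\prod_{j<i}\Gamma_1(j)\approx 2.0337$, and the $\mathbb{D}(r)$ coefficient becomes $2.0337^2+1\approx 5.1359$ (the extra $+1$ coming from the $-r$ in $r^{<0>}-r$).

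This is precisely where your argument would fail. Your claim that ``$\beta^2+\sum_k\gamma_k^2$ scales with the spread of the data'' is incorrect: those injected-noise variances are genuine constants independent of $r$ (the paper evaluates them to $\approx 0.019$ and discards them under the $\lesssim$). The large $\mathbb{D}(r)$ multiplier does not come from the added Gaussians at all; it comes from $r$ itself re-entering the posterior mean at every step once $\epsilon_\Theta$ is resolved. Without that substitution, the only route from $r$ to $r^{<0>}$ is through $r^{<K>}=\sqrt{\bar\alpha_K}\,r+\cdots$, giving $\alpha\approx\sqrt{\bar\alpha_K}\prod_k A_k\approx 0$ and hence $(\alpha-1)^2\approx 1$ for \emph{both} the $(\mathbb{E}[r])^2$ and the $\mathbb{D}(r)$ coefficients—so you would never reach $5.1359$.
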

\begin{theorem}[Bounded Approximation Error]
    \label{them_approximation_error}
    Under the assumption consistent with Theorem \ref{them_generation_error}, the upper bound of approximation error expectation for $r^{<0>}$ is less than that for $r$, where the approximation error $\xi$ for $r$ is defined as the gap between $r$ and the GMM (i.e., $r=r_{\texttt{GMM}} + \xi$) with $r_{\texttt{GMM}} \sim \texttt{\large GMM}(\cdot)$.
\end{theorem}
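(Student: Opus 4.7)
The guiding idea is that the GMM class is closed under convolution with Gaussian kernels, so every Gaussian-based operation on $r$ contracts (or at least does not inflate) the non-GMM residual. Because the DM forward process is Gaussian noising and the reverse process is parameterized by Gaussian transitions, the output $r^{<0>}$ is structurally closer to a GMM than $r$, and quantifying this contraction delivers the theorem.

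First I would set the baseline: writing $r=r_{\texttt{GMM}}+\xi$ as in the statement, the quantity to beat is an upper bound on $\mathbb{E}[\xi^2]$, which can be phrased in terms of $\mathbb{E}(r)$ and $\mathbb{D}(r)$ via the same second-moment estimates already deployed in the proof of Theorem~\ref{them_generation_error}. Then I propagate the decomposition through the forward diffusion: under the schedule fixed in Theorem~\ref{them_generation_error}, the closed-form marginal is $r^{<k>}=\lambda_k r+\nu_k \bar\epsilon_k$ with $\bar\epsilon_k\sim\mathcal{N}(0,1)$ and $\lambda_k,\nu_k\in(0,1)$ derived from $\omega_1,\dots,\omega_k$. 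Substituting the decomposition, the convolution of the scaled GMM $\lambda_k r_{\texttt{GMM}}$ with the independent Gaussian $\nu_k\bar\epsilon_k$ is again a GMM, while what remains outside this new GMM is only the rescaled residual $\lambda_k\xi$, with second moment $\lambda_k^{2}\mathbb{E}[\xi^2]<\mathbb{E}[\xi^2]$. The next step is to carry this contraction through the reverse chain: each reverse kernel $p_\theta(r^{<k-1>}\mid r^{<k>})$ is Gaussian by construction, so feeding a GMM-plus-residual input returns a GMM-plus-residual output, with the residual passed through the learned affine mean and combined with fresh Gaussian noise of variance fixed by $\upsilon_k$. Iterating $k=K,K-1,\dots,1$ under the sigmoid schedule of Theorem~\ref{them_generation_error} yields a telescoping bound $\mathbb{E}[(\xi^{<0>})^2]\le C_K\mathbb{E}[\xi^2]$ with an explicit $C_K<1$, which immediately beats the baseline produced in the first paragraph.

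\emph{Main obstacle.} The tricky ingredient is the last step, because in principle the learned mean $\mu_\theta$ could amplify the additive residual rather than contract it. To close this gap I would first work with the optimal variational posterior $q(r^{<k-1>}\mid r^{<k>},r)$, which is a linear-Gaussian function of its arguments whose coefficients are fixed by the $\{\omega_k\}$ schedule, and therefore preserves the additive GMM-plus-residual decomposition with an explicit contraction coefficient at every step. The bound would then be transferred to the learned $p_\theta$ by absorbing the DM training residual that is already bounded in Theorem~\ref{them_generation_error}, so that the extra slack contributed by learning does not overwhelm the geometric decrease of $\lambda_k$.
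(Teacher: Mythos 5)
Your proposal follows essentially the same route as the paper's proof: the paper establishes closure of GMMs under linear combination with Gaussians (its Lemma~\ref{th_gmm_linear}, Corollary~\ref{th_gauss_linear}, Lemma~\ref{lem:r^k-GMM}), writes $r=r_{\texttt{GMM}}+\xi$, splits the reverse-chain general term into a GMM part plus a residual part $r^{<0>}_3=\sum_i\Gamma_{2,2}(i)\prod_{j<i}\Gamma_1(j)$ carrying $\xi$, handles the learned kernel exactly as you suggest (writing $\epsilon_{\Theta}=\bar\epsilon_k-\xi_1(k)$ with $\xi_1$ the bounded training error from Theorem~\ref{them_generation_error}), and evaluates the accumulated coefficient under the fixed schedule to get $\mathbb{E}(r^{<0>}_3)\le 0.9997\,\mathbb{E}(\xi)$. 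The only substantive differences are cosmetic: the paper bounds the first moment $\mathbb{E}(\xi)$ rather than $\mathbb{E}[\xi^2]$, and the margin is razor-thin ($0.9997$) rather than the geometric contraction your telescoping picture suggests, because the reverse-chain accumulation factor $\sum_i\prod_{j<i}\Gamma_1(j)\approx 2.03$ nearly cancels the per-step shrinkage.
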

\noindent We leave the proofs in Appendix \ref{generation_error} and Appendix \ref{approximation_error}. Moreover, Theorem \ref{them_generation_error} and Theorem \ref{them_approximation_error} show that the gradual inclusion of Gaussian components aligns well with GMM used in the NDD, making the integration of DM with NDD a prudent approach, and further imply how to select appropriate hyperparameters (e.g. diffusion steps $K$).

Next, we elaborate on the combination between DM and NDD. Firstly, as shown in Appendix \ref{dm_analysis}, DM is employed to learn the distribution $\mathcal{D}$ of reward $R$ via its sample $r$ and generate $r^{<0>}$. Afterwards, NDD approximates augmented data to GMM and decomposes it into several Gaussian components for $N$ agents. Each agent updates its policy with distributional RL and certain distortion risk functions. The complete process is visualized in Fig. \ref{task_fig}.

\subsection{Loss Function Design}
\begin{algorithm}[t]
\caption{The \blus{Centralized Training Procedure for} NDD}\label{algorithm}
\KwIn{Agents from $1$ to $N$, the maximum iteration number $T$, \blus{trajectory buffer $\mathcal{B}$}, the error threshold $e_{\min}$ and hyperparameters $\lambda,\alpha$.}
\KwOut{\blus{$\bm \theta = [\theta^{[1]},\cdots, \theta^{[N]}]$ for agents $i= 1, \cdots, N$.}}
Initialize $\bm{\theta}$ and $\bm{w}$ randomly\;
\blus{Initialize $\mathcal{B}$; initialize joint policy $\bm{\pi}=[\pi^{[1]},\cdots,\pi^{[N]}]$\;}
\For{\blums{$iteration = 1$ to $T$}}{
\While{\blus{$\mathcal{B}$ is NOT full}}{
\For{\blums{$i=1$ to $N$}}{
\blus{$a^{[i]} \sim \pi^{[i]}(\cdot|o^{[i]})$\;}
}
\blus{Execute $\bm a = [a^{[1]},\cdots,a^{[N]}]$ and collect set of trajectories into $\mathcal{B}$\;}
}
Generate $r^{<0>}$ by DM in Algorithm \ref{dm_train_algorithm} when given $r$\;
$r \leftarrow (r,r^{<0>})$, $e\leftarrow +\infty$\;
\While{$ e > e_{\min}$}{
\For{$i = 1$ to $N$}{
Generate $\mu_i(\theta^{[i]})$ and $\sigma_i(\theta^{[i]})$ of $\hat{\mathcal D}_l^{[i]}(\theta^{[i]})$ based on $\langle o^{[i]},a^{[i]} \rangle$ by the local distribution network\;
Calculate the Gaussian PDF ${\hat P^{[i]}}$ of $R_l^{[i]}\sim \hat{\mathcal D}_l^{[i]}(\theta^{[i]})$\;
}
$\boldsymbol{w} \leftarrow \text{Softmax}(\boldsymbol{w}$)\;
Concatenate the generated means as $\bm{X}=[\mu_1(\theta^{[1]}),\dots,\mu_N(\theta^{[N]})]^{\top}$\;
Calculate the PDF of the estimated distribution $\hat{\mathcal{D}}$ in the form of a GMM as ${\hat P}=\sum_{i=1}^N w^{[i]}{\hat P}^{[i]}$ \;
Update $\bm \theta$ and $\boldsymbol{w}$ by minimizing the loss function as in \eqref{eq:total_loss}\;
Calculate $ e = \mathcal{L}_{\text{PDF}}(\bm{\theta}, \bm{w}) $ according to \eqref{eq:PDF_loss}\;
}
\blus{Clear $\mathcal{B}$\;} 
}
\end{algorithm}

As mentioned before, the NDD networks decompose the globally shared reward into weighted individual rewards for cooperative agents by estimating the distributions of decomposed rewards. Therefore, the estimation error, which can be regarded as a gap between the approximated PDF and true PDF of the globally shared reward, shall be minimized. 

Without loss of generality, let $P$ and $\hat P$ be the PDFs of the true reward distribution $\mathcal{D}$ and the estimated one $\hat{\mathcal{D}}$ through the NDD networks respectively. Based on the Wasserstein metric in \eqref{Wasserstein}, the loss function for NDD  as

\begin{equation}
\label{eq:PDF_loss}
\mathcal{L}_{\text{PDF}}(\bm \theta, \bm w) = \int^{r_{\text{max}}}_{r_{\text{min}}} \big[{P}(u)-{\hat P}(u;\bm\theta)\big]^2\text du,
\end{equation}
where $[r_{\text{min}}, r_{\text{max}}]$ denotes the range of the reward distribution. Except the parameters $\Theta$ for DM, $\bm \theta = [\theta^{[1]},\cdots,\theta^{[N]}]$ indicates all parameters of NDD networks corresponding to each agent and $\boldsymbol{w}=[w^{[1]},\dots,w^{[N]}]^\top$. Here both $\bm \theta$ and $\bm w$ are the trainable parameters in the NDD networks. 

On the other hand, there may exist many possible candidates for decomposed Gaussian distributions $\hat{\mathcal{D}}_l^{[i]}(\theta^{[i]})$, $\forall i = 1, \cdots, N$. To alleviate the potential issue of multiple solutions in the decomposition operation, we further impose another penalty term as
\begin{equation}
\mathcal{L}_{\text{mean}}(\bm \theta) = (\bm X-\mu\cdot\boldsymbol{1}_{N\times 1})^\top(\bm X-\mu\cdot\boldsymbol{1}_{N\times 1}),
\end{equation}
where $\bm X=[\mu_1(\theta^{[1]}),\dots,\mu_N(\theta^{[N]})]^\top$ and $\mu = \sum^N_{i=1}\mu_i(\theta^{[i]})$, where $\mu_i(\theta^{[i]})$ denotes the mean of $\hat{\mathcal{D}}_l^{[i]}(\theta^{[i]})$.

For an individual agent, the weight represents its degree of involvement in the decomposition task. Certain excessively small weight indicates that $\mathcal{L}_{\text{PDF}}$ has minimal supervisory impact on that component and may result in insufficient stability of the Gaussian component after decomposition. This becomes particularly evident in some simple noise scenarios, such as when the noise can be represented by the superposition of a few Gaussian components. In such cases, the remaining agents may not receive effective components. With excessively low weights, these agents might exploit extremely deviant reward distributions in training policies, ultimately affecting their performances.
In addition, in order to narrow the disparity between $\mathbb{E}(Z_{\text{global}})$ and its lower bound, as mentioned in Section \ref{sec:Consistency}, weights can be aligned more evenly. Therefore, we define the corresponding penalty term as
\begin{equation}
\mathcal{L}_{\text{weight}}(\boldsymbol{w}) = \Vert \boldsymbol{w} - \frac{1}{N} \cdot \boldsymbol{1}_{N\times 1} \Vert_2 .
\end{equation}

To sum up, given hyperparameters $\lambda$ and $\alpha$, we obtain the Wasserstein metric-based loss function as
\begin{equation}
\label{eq:total_loss}
\mathcal{L}(\bm\theta,\boldsymbol{w}) = \mathcal{L}_{\text{PDF}}(\bm\theta, \bm w) + \lambda \mathcal{L}_{\text{mean}}(\bm \theta) + \alpha \mathcal{L}_{\text{weight}}(\boldsymbol{w}).
\end{equation}
\begin{figure}
\centerline{\includegraphics[width=3.0in]{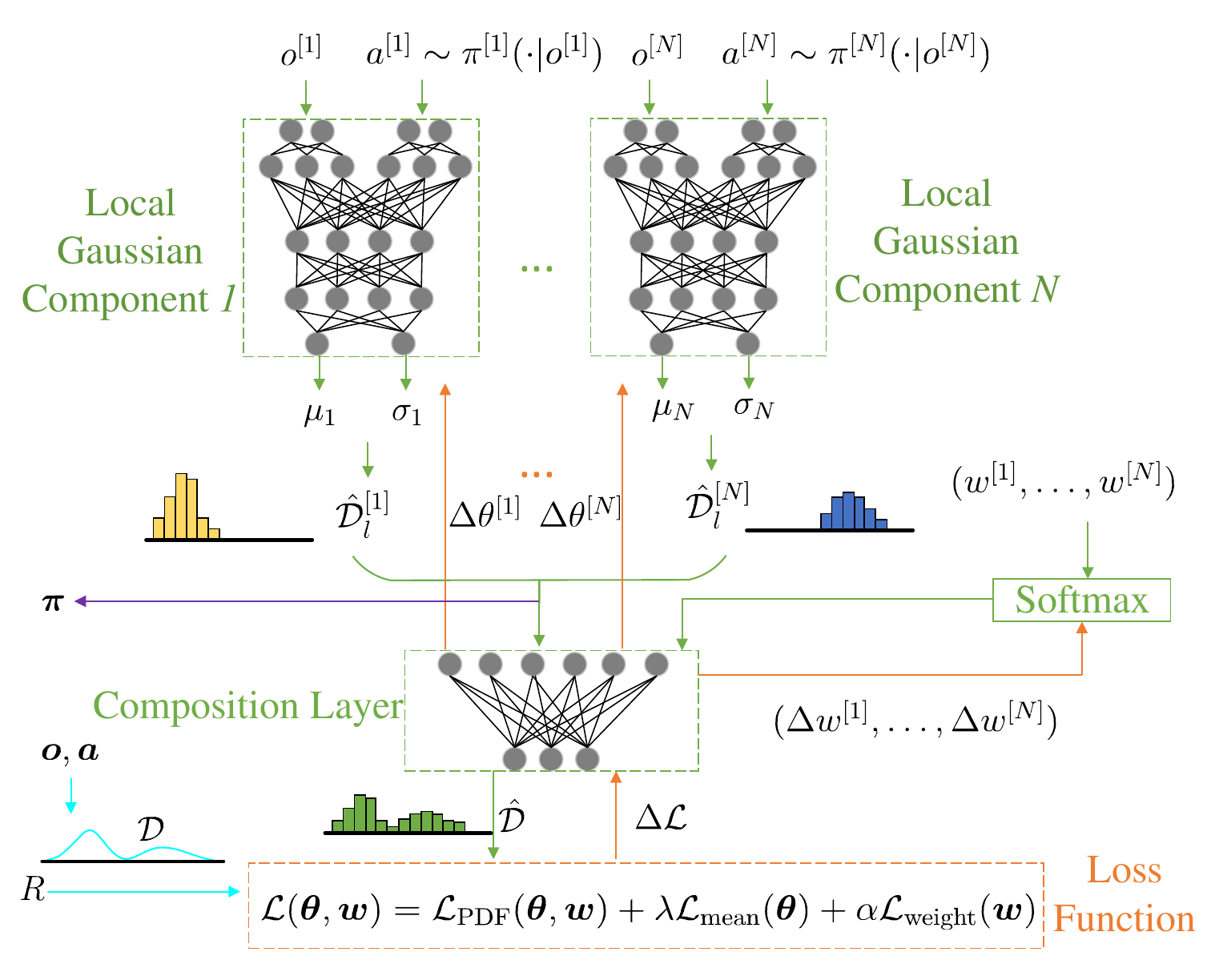}}
\caption{Overview of the NDD algorithm.}
\vspace{-0.3cm}
\label{process_fig}
\end{figure}

In a nutshell, for the NDD algorithm depicted in Fig. \ref{process_fig}, we first predict local Gaussian components that will be utilized for local training by a multi-layer perceptron (MLP), and aggregate these decomposed ones through another MLP-based composition layer, so as to approximate the GMM along original and augmented rewards. Next we update $\bm \theta$ and $\bm w$ according to \eqref{eq:total_loss} and learn $\pi^{[i]}$ with $\hat{\mathcal{D}}_{l}^{[i]}(\theta^{[i]})$ through distributional RL independently. 
Details are summarized in Algorithm \ref{algorithm}. 

\section{Experimental Settings and Simulation Results}\label{sectionRes}

\subsection{Experimental Settings}
\label{sec:exp_setting}
We evaluate NDD in Multi-agent Particle world Environments (MPE) and StarCraft Multi-Agent Challenge (SMAC) \cite{mordatch2018emergence,lowe2017multi,samvelyan19smac}, which is implemented on top of the current SOTA algorithm MultiAgent Proximal Policy Optimization (MAPPO) \cite{de2020independent, schulman2017proximal,yu2022surprising}. MPE is a classical MARL environment with each particle emulating the potential behaviors of one agent. Typically, MPE consists of several task-oriented collaboration scenarios (e.g., Adversary, Crypto, Speaker-Listener, Spread and Reference). 
Compared to MPE, SMAC is a more complicated environment based on Blizzard's StarCraft II RTS game.
Notably, as shown in Table \ref{tab:noise_config_along_scene}, we add $5$ types of noise (i.e., Noise $0$ to Noise $5$) to the globally shared rewards, which are different in MPE and SMAC scenarios so as to be in line with the distinct reward values (i.e., one to two orders of magnitude smaller in SMAC than MPE).  
Additionally, we choose some typical actor cirtic algorithms MADDPG \cite{lowe2017multi}, MATD3\cite{td3} and value decomposition algorithms VDN\cite{sunehag2017value}, QMIX\cite{rashid2020monotonic} as well as MAPPO for comparison. Furthermore, we choose MAPPO under noiseless conditions as the ``Baseline'' to measure the performance loss of NDD and the others in noisy environments. 
\blut{Considering that the number of iterations is generally based on empirical values and changes over environments, }
every method is trained with $3,000$ iterations for each task in MPE, and $2\times 10^4$ iterations for each task in SMAC.
Afterwards, we compare the performance under different types of zero-mean noises to rule out the possible bias impact of noise. \blus{More experimental  configurations are given in Appendix \ref{hyper_parameters_setting}.}

\begin{table*}[htbp]
\caption{Performance of NDD and Others with Five Types of Noises in MPE Tasks. The ``Baseline'' is the result of MAPPO without Noise.}
\label{main_res_mpe}
\begin{center}
\footnotesize
\setlength{\tabcolsep}{1mm}{
\begin{tabular}{|c|c|c|c|c|c|c|c|c|}
\toprule
\textbf{Scenario} & \textbf{Type}&\textbf{VDN}&\textbf{QMIX}& \textbf{MATD3}& \textbf{MADDPG}&\textbf{MAPPO}&\textbf{NDD}&\textbf{Baseline}\\
\midrule
\multirow{5}{*}{Adversary}&Noise $0$&${-}25.63$$\pm$$0.78$&${-}37.39$$\pm$$3.40$& $-9.67\pm3.50$& $-14.81\pm1.80$&${-}8.23$$\pm$$0.66$&$\bm{{-}1.47}$$\pm$$\bm{0.49}$&${-}2.39$$\pm$$0.22$\\
&Noise $1$&${-}24.24$$\pm$$1.13$&${-}46.17$$\pm$$5.61$& $-15.88\pm2.19$& $-13.32\pm2.43$&${-}6.16$$\pm$$0.78$&$\bm{{-}1.51}$$\pm$$\bm{0.19}$&${-}1.91$$\pm$$0.35$\\
&Noise $2$&${-}23.72$$\pm$$1.57$&${-}40.39$$\pm$$17.30$& $-16.21\pm3.12$& $-19.33\pm0.42$&${-}7.76$$\pm$$0.78$&$\bm{{-}0.89}$$\pm$$\bm{0.24}$&${-}1.82$$\pm$$0.10$\\
& Noise $3$& ${-}18.12\pm0.99$& ${-}43.18\pm6.51$& ${-}6.97\pm1.27$& ${-}18.49\pm3.67$& ${-}4.90\pm0.22$& $\bm{{-}1.42\pm0.75}$& ${-}1.74\pm0.69$\\
& Noise $4$& ${-}25.72\pm0.36$& ${-}25.79\pm3.89$& ${-}11.04\pm10.21$& ${-}53.30\pm32.57$& ${-}9.72\pm0.62$& $\bm{{-}4.74\pm1.19}$& ${-}0.93\pm0.19$\\
\hline
\multirow{5}{*}{Crypto}&Noise $0$&${-}66.97$$\pm$$3.03$&${-}69.56$$\pm$$0.94$& $-13.40\pm7.11$& $\bm{-1.20\pm28.79}$&${-}28.90$$\pm$$1.00$&${-}11.77$$\pm$$2.00$&${-}3.68$$\pm$$0.97$\\
&Noise $1$&${-}69.11$$\pm$$3.29$&${-}55.12$$\pm$$13.06$& $\bm{1.20\pm16.29}$& $-4.21\pm4.08$&${-}22.51$$\pm$$0.47$&${-}5.28$$\pm$$0.68$&${-}8.67$$\pm$$0.63$\\
&Noise $2$&${-}68.29$$\pm$$0.38$&${-}69.35$$\pm$$2.17$& $-20.45\pm21.56$& $-17.84\pm15.95$&${-}24.84$$\pm$$1.01$&$\bm{{-}10.69}$$\pm$$\bm{0.08}$&${-}2.43$$\pm$$0.13$\\
& Noise $3$& ${-}67.46\pm2.04$& ${-}49.68\pm12.31$& $\bm{{-}0.05\pm2.63}$& ${-}12.19\pm3.29$& ${-}19.60\pm1.42$& ${-}19.47\pm1.25$& ${-}1.31\pm0.08$\\
& Noise $4$& ${-}71.94\pm3.17$& ${-}65.22\pm3.35$& ${-}0.60\pm21.54$& $\bm{0.60\pm4.39}$& ${-}28.56\pm3.32$& ${-}27.36\pm1.10$& ${-}5.43\pm2.35$\\
\hline
\multirow{3}{*}{Speaker}&Noise $0$&${-}34.98$$\pm$$1.32$&${-}33.66$$\pm$$0.86$& $-53.08\pm13.12$& $-46.36\pm1.45$&$\bm{{-}31.81}$$\pm$$\bm{3.67}$&${-}33.40$$\pm$$1.39$&${-}31.80$$\pm$$2.79$\\
&Noise $1$&${-}34.52$$\pm$$0.99$&$\bm{{-}28.75}$$\pm$$\bm{2.42}$& $-48.16\pm15.37$& $-45.93\pm1.05$&${-}34.03$$\pm$$0.33$&${-}29.76$$\pm$$0.93$&${-}32.66$$\pm$$2.46$\\
\multirow{2}{*}{Listener}&Noise $2$&${-}34.53$$\pm$$1.45$&${-}34.06$$\pm$$2.19$& $-51.03\pm11.95$& $-41.52\pm0.87$&${-}34.88$$\pm$$2.74$&$\bm{{-}32.42}$$\pm$$\bm{0.96}$&${-}28.95$$\pm$$0.47$\\
& Noise $3$& ${-}35.51\pm0.42$& $\bm{{-}28.61\pm0.77}$& ${-}62.00\pm4.94$& ${-}44.26\pm9.32$& ${-}32.63\pm0.78$& ${-}33.22\pm0.45$& ${-}31.66\pm0.41$\\
& Noise $4$& ${-}34.90\pm1.99$& ${-}38.01\pm0.64$& ${-}67.28\pm1.18$& ${-}47.03\pm1.56$& ${-}37.33\pm2.17$& $\bm{{-}33.70\pm0.28}$& ${-}30.26\pm2.05$\\
\hline
\multirow{5}{*}{Spread}&Noise $0$&${-}104.84$$\pm$$0.36$&${-}171.02$$\pm$$7.04$& $-114.46\pm3.06$& $-106.85\pm0.64$&$\bm{{-}93.58}$$\pm$$\bm{0.68}$&${-}94.94$$\pm$$0.52$&${-}91.64$$\pm$$0.69$\\
&Noise $1$&${-}98.90$$\pm$$0.87$&${-}156.00$$\pm$$19.36$& $-104.02\pm3.68$& $-102.49\pm0.37$&${-}97.86$$\pm$$5.23$&$\bm{{-}93.66}$$\pm$$\bm{2.27}$&${-}94.34$$\pm$$0.83$\\
&Noise $2$&${-}97.59$$\pm$$1.32$&${-}171.36$$\pm$$16.49$& $-103.27\pm1.68$& $-104.44\pm2.58$&${-}97.91$$\pm$$1.05$&$\bm{{-}91.58}$$\pm$$\bm{0.75}$&${-}92.62$$\pm$$0.87$\\
& Noise $3$& ${-}95.05\pm0.49$& ${-}139.44\pm11.54$& ${-}105.26\pm0.46$& ${-}101.05\pm3.19$& $\bm{{-}91.97\pm1.07}$& ${-}94.54\pm0.19$& ${-}93.19\pm0.73$\\
& Noise $4$& ${-}98.37\pm1.83$& ${-}190.89\pm7.23$& ${-}99.37\pm1.05$& ${-}107.87\pm1.31$& ${-}96.00\pm1.66$& $\bm{{-}93.29\pm1.35}$& ${-}91.33\pm0.28$\\
\hline
\multirow{5}{*}{Reference}&Noise $0$&${-}43.28$$\pm$$0.67$&${-}76.25$$\pm$$4.97$& $-48.16\pm1.58$& $-52.35\pm0.61$&${-}40.65$$\pm$$0.52$&$\bm{{-}36.33}$$\pm$$\bm{0.35}$&${-}36.14$$\pm$$0.11$\\%
&Noise $1$&${-}40.49$$\pm$$0.62$&${-}58.67$$\pm$$9.02$& $-68.77\pm14.50$& $-51.22\pm1.32$&${-}39.07$$\pm$$0.35$&$\bm{{-}37.44}$$\pm$$\bm{0.92}$&${-}36.09$$\pm$$0.73$\\
&Noise $2$&${-}42.41$$\pm$$0.73$&${-}62.82$$\pm$$10.49$& $-49.53\pm1.35$& $-46.13\pm2.48$&${-}37.42$$\pm$$0.66$&$\bm{{-}37.29}$$\pm$$\bm{1.54}$&${-}36.25$$\pm$$0.20$\\
& Noise $3$& ${-}40.05\pm0.13$& ${-}66.59\pm8.74$& ${-}47.79\pm0.33$& ${-}48.05\pm2.68$& ${-}38.88\pm0.44$& $\bm{{-}35.76\pm0.55}$& ${-}36.49\pm1.02$\\
& Noise $4$& ${-}42.56\pm0.19$& ${-}72.21\pm4.53$& ${-}49.18\pm4.54$& ${-}50.92\pm1.72$& ${-}36.57\pm0.73$& $\bm{{-}36.26\pm1.02}$& ${-}35.62\pm0.16$\\
\bottomrule
\end{tabular}}
\end{center}
\end{table*}

\begin{table*}[htbp]
\caption{Performance of NDD and Others with Five Types of Noises in SMAC Tasks. The ``Baseline'' is the result of MAPPO without Noise. ``3m" means 3 Marines in MAS, and ``z, s, sc" in tasks' names are ``Zealot, Stalker, Spine Crawler", respectively.}
\label{main_res_smac}
\begin{center}
\footnotesize
\begin{tabular}{|c|c|c|c|c|c|c|c|c|}
\toprule
\textbf{Scenario} & \textbf{Type}&\textbf{VDN}&\textbf{QMIX}& \textbf{MATD3}& \textbf{MADDPG}&\textbf{MAPPO}&\textbf{NDD}&\textbf{Baseline}\\
\midrule
\multirow{5}{*}{3m}&Noise $0$&$4.69$$\pm$$2.22$&$5.74$$\pm$$1.17$& $0.39\pm1.16$& $-0.14\pm0.32$&$8.27$$\pm$$3.64$&$\bm{11.64}$$\pm$$\bm{0.83}$&\multirow{5}{*}{$12.11$$\pm$$0.30$}\\
&Noise $1$&$5.92$$\pm$$1.12$&$6.52$$\pm$$1.63$& $-0.16\pm0.35$& $0.10\pm0.45$&$8.07$$\pm$$2.25$&$\bm{11.49}$$\pm$$\bm{1.16}$&\\
&Noise $2$&$4.98$$\pm$$2.31$&$2.37$$\pm$$0.74$& $0.06\pm0.41$& $0.01\pm0.53$&$\bm{11.84}$$\pm$$\bm{1.35}$&$8.47$$\pm$$1.73$&\\
& Noise $3$& $7.06\pm1.84$& $7.75\pm1.08$& $0.13\pm0.41$& ${-}0.02\pm0.16$& $\bm{11.63\pm1.10}$& $11.12\pm0.73$&\\
& Noise $4$& $7.70\pm1.67$& $6.41\pm1.61$& ${-}3.33\pm0.36$& ${-}3.62\pm0.25$& $10.91\pm0.77$& $\bm{12.79\pm0.68}$&\\
\hline
\multirow{5}{*}{8m}&Noise $0$&$\bm{4.55}$$\pm$$\bm{0.75}$&$3.39$$\pm$$1.60$& $1.14\pm1.47$& $0.24\pm1.60$&$0.42$$\pm$$2.51$&$0.48$$\pm$$0.49$&\multirow{5}{*}{$4.98$$\pm$$0.40$}\\
&Noise $1$&$2.97$$\pm$$0.60$&$6.42$$\pm$$0.93$& $0.45\pm0.50$& $0.80\pm0.49$&$0.86$$\pm$$1.72$&$\bm{6.84}$$\pm$$\bm{4.13}$&\\
&Noise $2$&$2.57$$\pm$$0.61$&$3.92$$\pm$$0.64$& $0.72\pm0.61$& $0.59\pm0.70$&$3.13$$\pm$$1.36$&$\bm{10.90}$$\pm$$\bm{0.81}$&\\
& Noise $3$& $\bm{3.26\pm0.75}$& $2.60\pm0.73$& $0.56\pm0.37$& $0.26\pm0.40$& $1.19\pm1.25$& $2.95\pm1.38$&\\
& Noise $4$& $0.82\pm0.96$& $2.80\pm3.28$& ${-}6.73\pm0.24$& ${-}6.71\pm0.32$& $3.91\pm1.76$& $\bm{8.38\pm2.25}$&\\
\hline
\multirow{5}{*}{2m vs 1z}&Noise $0$&$2.60$$\pm$$0.77$&$2.87$$\pm$$0.68$& $0.92\pm1.45$& $0.35\pm1.04$&$3.51$$\pm$$0.17$&$\bm{3.65}$$\pm$$\bm{0.09}$&\multirow{5}{*}{$3.32$$\pm$$0.01$}\\
&Noise $1$&$2.62$$\pm$$1.20$&$2.70$$\pm$$0.44$& $0.21\pm0.77$& $-0.06\pm0.46$&$3.40$$\pm$$0.23$&$\bm{3.46}$$\pm$$\bm{0.06}$&\\
&Noise $2$&$2.67$$\pm$$0.74$&$2.41$$\pm$$0.79$& $0.34\pm0.53$& $0.20\pm0.99$&$3.31$$\pm$$0.47$&$\bm{3.59}$$\pm$$\bm{0.05}$&\\
& Noise $3$& $1.97\pm1.01$& $2.28\pm0.45$& $0.41\pm0.51$& $0.14\pm0.29$& $3.18\pm0.69$& $\bm{3.68\pm0.08}$&\\
& Noise $4$& ${-}7.74\pm1.11$& ${-}6.56\pm0.58$& ${-}8.96\pm0.64$& ${-}9.01\pm0.53$& $2.28\pm0.26$& $\bm{3.55\pm0.04}$&\\
\hline
\multirow{5}{*}{2s3z}&Noise $0$&$2.72$$\pm$$0.98$&$3.28$$\pm$$2.43$& $2.12\pm1.12$& $1.68\pm1.24$&$2.97$$\pm$$1.00$&$\bm{7.72}$$\pm$$\bm{0.40}$&\multirow{5}{*}{$6.83$$\pm$$0.33$}\\
&Noise $1$&$3.11$$\pm$$0.63$&$2.29$$\pm$$0.61$& $0.71\pm0.47$& $0.70\pm0.38$&$1.52$$\pm$$0.33$&$\bm{7.21}$$\pm$$\bm{0.46}$&\\
&Noise $2$&$3.36$$\pm$$0.69$&$2.02$$\pm$$0.47$& $1.16\pm0.67$& $0.84\pm0.71$&$6.51$$\pm$$1.75$&$\bm{7.49}$$\pm$$\bm{0.29}$&\\
& Noise $3$& $3.33\pm0.86$& $2.43\pm0.60$& $1.13\pm0.25$& $0.42\pm0.39$& $3.40\pm3.01$& $\bm{5.94\pm0.78}$&\\
& Noise $4$& ${-}1.60\pm0.48$& ${-}1.33\pm0.76$& ${-}5.48\pm0.80$& ${-}6.06\pm1.08$& $2.75\pm1.07$& $\bm{6.91\pm0.39}$&\\
\hline
\multirow{5}{*}{2s vs 1sc}&Noise $0$&$0.14$$\pm$$1.22$&${-}0.14$$\pm$$1.16$&$ 0.13\pm1.50$& $-0.17\pm0.37$&$6.42$$\pm$$0.59$&$\bm{6.46}$$\pm$$\bm{0.02}$&\multirow{5}{*}{$6.44$$\pm$$0.01$}\\
&Noise $1$&$0.37$$\pm$$1.22$&${-}0.67$$\pm$$0.98$& $-0.13\pm1.04$& $-0.07\pm0.99$&$6.25$$\pm$$0.28$&$\bm{6.44}$$\pm$$\bm{0.02}$&\\
&Noise $2$&$0.07$$\pm$$1.51$&$0.49$$\pm$$0.69$& $0.02\pm1.10$& $0.29\pm0.79$&$6.29$$\pm$$0.43$&$\bm{6.44}$$\pm$$\bm{0.03}$&\\
& Noise $3$& ${-}0.05\pm0.91$& $0.20\pm0.71$& $0.14\pm0.49$& ${-}0.13\pm0.28$& $6.39\pm0.27$& $\bm{6.43\pm0.03}$&\\
& Noise $4$& ${-}17.70\pm0.59$& ${-}10.33\pm7.55$& ${-}17.62\pm0.39$& ${-}10.28\pm7.58$& $3.78\pm0.19$& $\bm{6.45\pm0.02}$&\\
\hline
\multirow{5}{*}{3s vs 3z}&Noise $0$&$3.20$$\pm$$1.46$&$0.91$$\pm$$1.64$& $2.47\pm1.01$& $1.90\pm1.30$&$2.76$$\pm$$1.84$&$\bm{4.79}$$\pm$$\bm{0.16}$&\multirow{5}{*}{$0.00$$\pm$$0.00$}\\
&Noise $1$&$\bm{3.20}$$\pm$$\bm{1.57}$&$1.49$$\pm$$1.54$& $1.70\pm0.80$& $1.84\pm0.63$&$0.48$$\pm$$0.68$&$2.08$$\pm$$2.06$&\\
&Noise $2$&$1.38$$\pm$$0.95$&$0.69$$\pm$$0.95$& $2.07\pm0.83$& $1.97\pm0.62$&$3.97$$\pm$$1.0$8&$\bm{4.83}$$\pm$$\bm{0.08}$&\\
& Noise $3$& $2.71\pm1.14$& $0.45\pm0.71$& $1.81\pm0.46$& $1.27\pm0.33$& $1.96\pm2.43$& $\bm{4.28\pm0.67}$&\\
& Noise $4$& ${-}4.34\pm1.04$& ${-}6.92\pm2.31$& ${-}6.89\pm0.55$& ${-}7.60\pm0.38$& $2.35\pm0.22$& $\bm{4.84\pm0.18}$&\\
\hline
\multirow{5}{*}{3s vs 4z}&Noise $0$&$2.52$$\pm$$1.13$&$\bm{3.01}$$\pm$$\bm{0.80}$& $-0.26\pm1.46$& $2.21\pm1.55$&$0.38$$\pm$$1.27$&$2.96$$\pm$$0.03$&\multirow{5}{*}{$2.96$$\pm$$0.04$}\\
&Noise $1$&$2.78$$\pm$$0.29$&$\bm{2.89}$$\pm$$\bm{0.44}$& $0.04\pm1.19$& $1.70\pm1.31$&$0.43$$\pm$$0.47$&$2.80$$\pm$$0.04$&\\
&Noise $2$&$2.62$$\pm$$0.50$&$2.13$$\pm$$0.86$& $0.59\pm1.07$& $0.34\pm1.01$&$0.11$$\pm$$0.68$&$\bm{2.89}$$\pm$$\bm{0.09}$&\\
& Noise $3$& $2.46\pm0.35$& $\bm{2.87\pm0.20}$& $0.24\pm0.29$& $0.57\pm0.61$& $2.45\pm0.51$& $2.52\pm0.35$&\\
& Noise $4$& ${-}4.36\pm3.41$& ${-}0.81\pm0.96$& ${-}12.06\pm0.55$& ${-}11.00\pm1.07$& $0.87\pm0.17$& $\bm{2.80\pm0.03}$&\\
\hline
\multirow{5}{*}{3s5z}&Noise $0$&$2.81$$\pm$$1.02$&$4.13$$\pm$$1.28$& $1.42\pm1.32$& $1.37\pm1.78$&$3.24$$\pm$$1.56$&$\bm{5.71}$$\pm$$\bm{0.13}$&\multirow{5}{*}{$5.67$$\pm$$0.05$}\\
&Noise $1$&$2.77$$\pm$$0.27$&$3.38$$\pm$$0.46$& $1.91\pm0.75$& $2.20\pm0.73$&$3.69$$\pm$$0.75$&$\bm{5.27}$$\pm$$\bm{0.37}$&\\
&Noise $2$&$2.89$$\pm$$0.42$&$3.91$$\pm$$0.85$& $2.57\pm0.61$& $1.46\pm0.64$&$3.24$$\pm$$1.04$&$\bm{5.75}$$\pm$$\bm{0.24}$&\\
& Noise $3$& $3.09\pm0.50$& $\bm{3.80\pm0.30}$& $1.79\pm0.68$& $2.25\pm0.86$& $1.85\pm1.60$& $3.23\pm0.38$&\\
& Noise $4$& ${-}1.84\pm0.65$& ${-}1.49\pm0.64$& ${-}6.05\pm0.74$& ${-}6.81\pm1.02$& $0.74\pm1.45$& $\bm{5.82\pm0.17}$&\\
\hline
\multirow{5}{*}{5m vs 6m}&Noise $0$&$2.12$$\pm$$1.02$&$2.08$$\pm$$0.91$& $1.64\pm1.02$& $0.31\pm0.90$&$2.40$$\pm$$2.10$&$\bm{4.83}$$\pm$$\bm{0.23}$&\multirow{5}{*}{$4.60$$\pm$$0.68$}\\
&Noise $1$&$2.86$$\pm$$0.40$&$\bm{3.71}$$\pm$$\bm{0.25}$& $0.79\pm0.66$& $0.18\pm0.34$&$2.59$$\pm$$2.02$&$2.20$$\pm$$2.19$&\\
&Noise $2$&$3.04$$\pm$$0.50$&$\bm{3.12}$$\pm$$\bm{0.89}$& $0.97\pm0.64$& $0.41\pm0.27$&$2.49$$\pm$$0.50$&$2.89$$\pm$$0.64$&\\
& Noise $3$& $2.36\pm0.45$& $\bm{4.10\pm0.20}$& $0.82\pm0.24$& $0.05\pm0.28$& $0.89\pm0.89$& $2.87\pm2.13$&\\
& Noise $4$& $2.68\pm0.52$& $2.74\pm0.36$& ${-}3.89\pm0.22$& ${-}4.25\pm0.22$& $3.34\pm0.16$& $\bm{4.83\pm0.11}$&\\
\hline
\multirow{5}{*}{8m vs 9m}&Noise $0$&$0.74$$\pm$$1.05$&$\bm{2.91}$$\pm$$\bm{0.78}$& $0.78\pm1.31$& $0.06\pm1.55$&$1.89$$\pm$$1.73$&$1.54$$\pm$$1.51$&\multirow{5}{*}{$4.49$$\pm$$0.16$}\\
&Noise $1$&$2.43$$\pm$$0.24$&$1.64$$\pm$$0.48$& $0.58\pm0.77$& $0.27\pm0.88$&$2.72$$\pm$$0.68$&$\bm{3.57}$$\pm$$\bm{0.18}$&\\
&Noise $2$&$2.12$$\pm$$0.42$&$2.00$$\pm$$0.64$& $0.48\pm0.86$& $0.56\pm0.43$&${-}0.15$$\pm$$0.70$&$\bm{2.78}$$\pm$$\bm{1.18}$&\\
& Noise $3$& $1.74\pm0.35$& $3.29\pm2.03$& $0.48\pm0.46$& $0.17\pm0.32$& $1.35\pm1.38$& $\bm{3.63\pm0.44}$&\\
& Noise $4$& $0.44\pm2.71$& $1.40\pm1.87$& ${-}6.69\pm0.30$& ${-}6.40\pm0.43$& $4.75\pm0.39$& $\bm{5.55\pm0.50}$&\\
\bottomrule
\end{tabular}
\end{center}
\end{table*}
\subsection{Performance Comparison} 
\begin{figure}
    \centering
    \includegraphics[width=.45\textwidth]{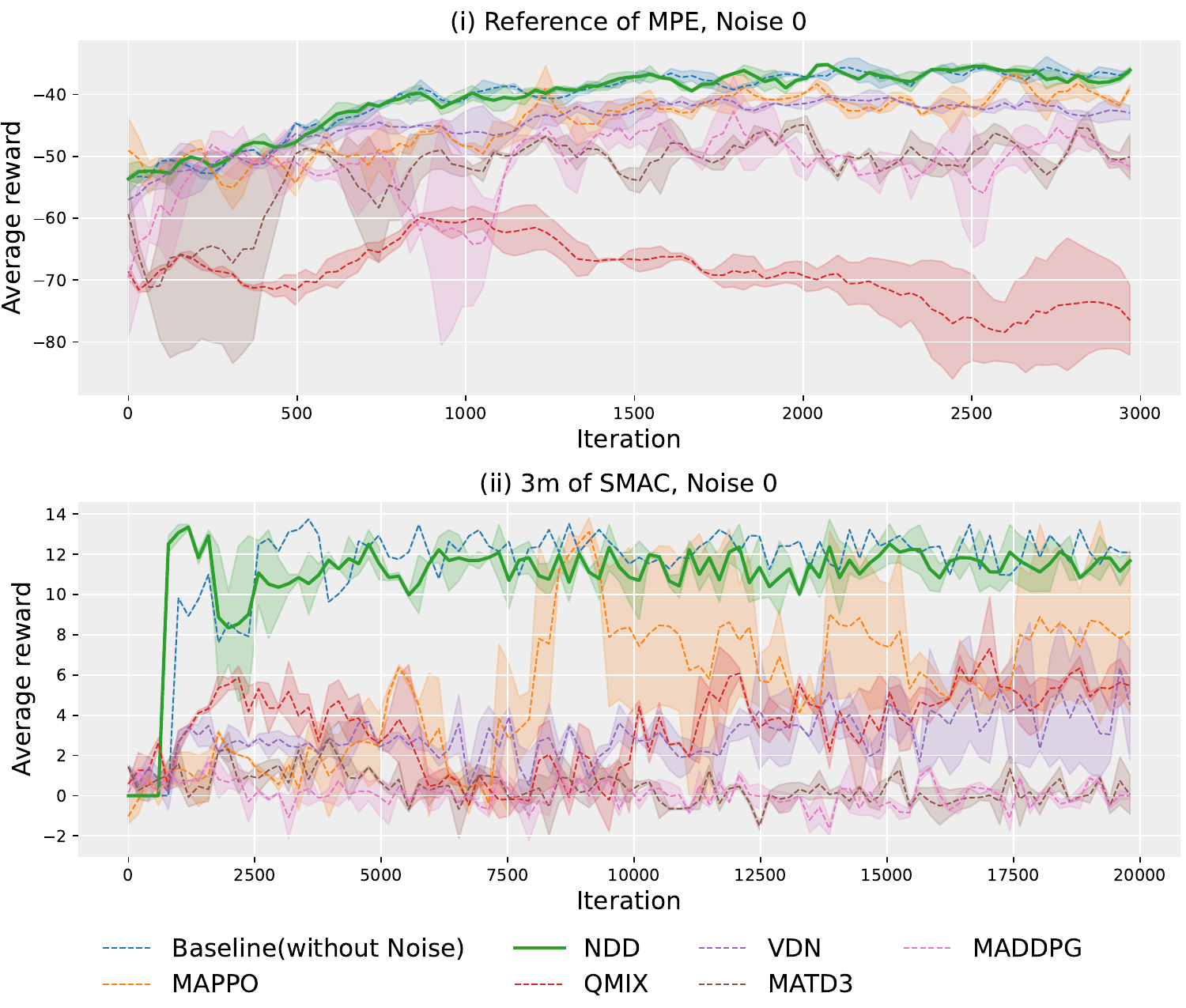}
    \caption{ Performance comparison between algorithms in (\romannumeral1) the MPE Reference task with Noise $0$; (\romannumeral2) the SMAC 3m task with Noise $0$. Notably, the ``Baseline'' indicates the result of MAPPO under noise-free settings; \blus{``3m'' means 3 Marines in MAS.}}
    \label{main_res_1}
\end{figure}
\begin{figure}
    \centering
    \includegraphics[width=.45\textwidth]{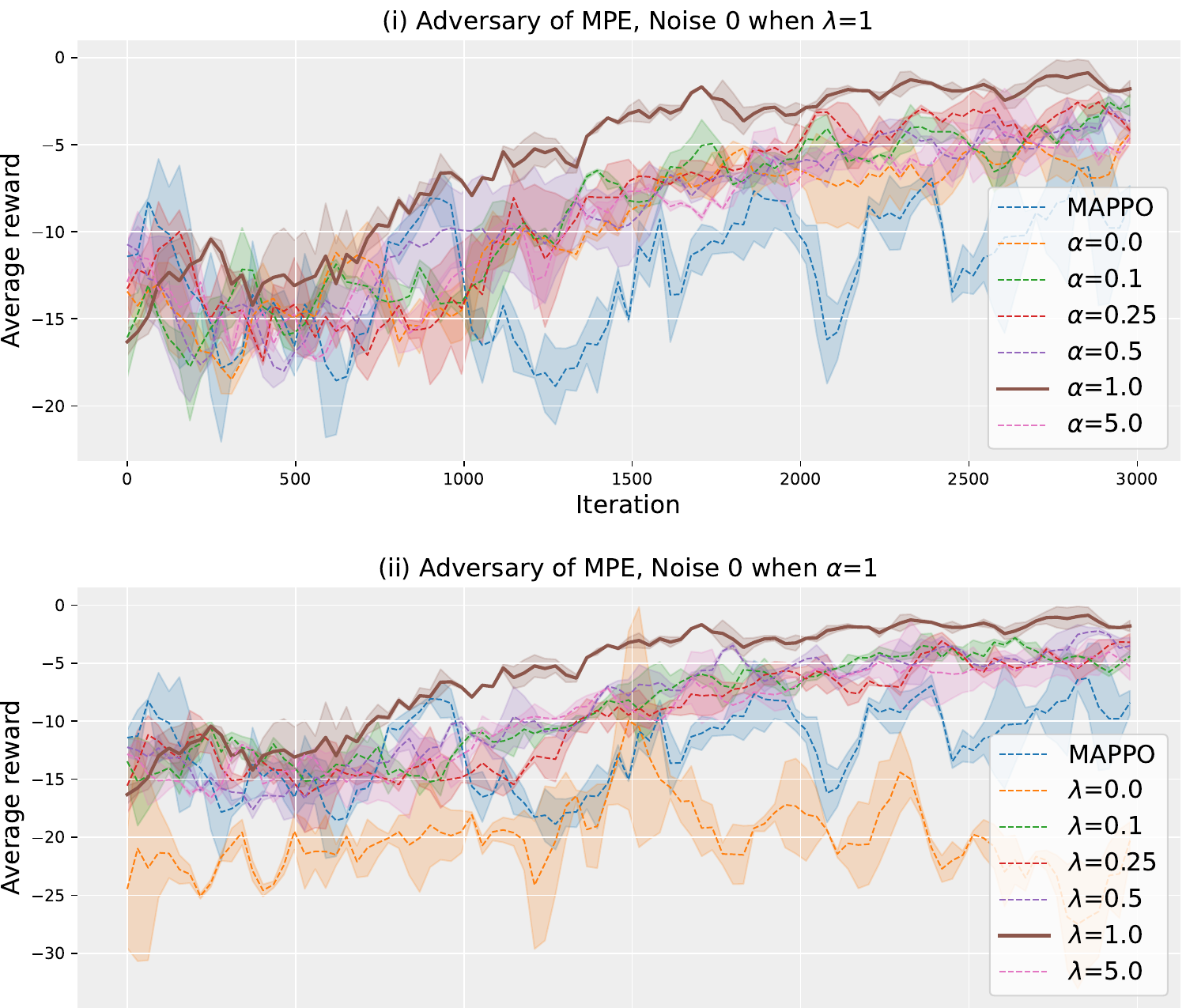}
    \caption{Performance sensitivity of NDD in Adversary with Noise $0$ under the settings of (\romannumeral1) different $\alpha$ with $\lambda=1$; (\romannumeral2) different $\lambda$ with $\alpha=1$.}
    \label{lambdaalpha_fig}
\end{figure}

Fig. \ref{main_res_1} compares NDD with the aforementioned methods and visualizes the corresponding learning process of agents for one MPE task and one SMAC task. Meanwhile,
Table \ref{main_res_mpe} and \ref{main_res_smac} summarize the performance comparisons in all noisy tasks considering five distinctive types of noise as in Table \ref{tab:noise_config_along_scene}. 
It's worth noting that there are slight differences in the five simulations' results of the ``Baseline'' over five types of noise for each task in MPE, as the states of agents are randomly initialized at the beginning of each episode, which is in stark contrast to SMAC.
From the results, NDD is significantly superior to the others and exhibits a satisfactory anti-noise effect.
For example, in Reference, the interference of noise produces a significant negative impact on experimental results, especially for the value decomposition algorithms. Accordingly, it can be observed from Fig. \ref{main_res_1} that VDN, QMIX, MATD3, MADDPG, and MAPPO fluctuate heavily and converge to lower values. Regardless of the noise distribution, NDD stabilizes the performance and generates results similar to noise-free tasks in most scenarios. By the way, the effect of QMIX is much worse than other methods, due to that the mixing network used in QMIX requires more stable rewards until convergence. Furthermore, in noisy scenarios, a complex mixing network yields less robust results than the one with simple summation. Hence, we primarily choose GMM due to its simplicity and robustness. 
In addition, all algorithms exhibit performance decreases in complex noises. Due to the simultaneous involvement of multiple noise sources and non-Gaussian components, the results for Noise $3$ and Noise $4$ are worse than others, but the NDD yields more comparable performance to that obtained in noise-free cases.

Fig. \ref{lambdaalpha_fig} provides the sensitivity study about $\lambda$ and $\alpha$. Additionally, we choose the high-performance MAPPO for comparison. It can be observed that $\lambda=1$ is the optimal value and either too large or too small may cause performance degradation. Similar observations can be applied to $\alpha$. Additionally, NDD is relatively more sensitive to $\lambda$ compared with $\alpha$, and the performance when $\lambda=0$ (i.e., no $\mathcal{L}_{\text{mean}}(\bm \theta)$ in the loss function in \eqref{eq:total_loss}) is even worse than MAPPO. In other words, $\mathcal{L}_{\text{mean}}(\bm \theta)$ is of vital importance to the NDD.

\begin{table*}[htbp]\label{ndd_evalue}
\footnotesize
\setlength{\tabcolsep}{0.65mm}{}
\caption{Modeling Accuracy of Noise Distribution Decomposition under Different GMM Assumptions. $\mathcal{N}$ Indicates a Gaussian Distribution, while $\beta$ and $\mathcal{X}^2$ Denote a Beta and a Chi-Square Distribution, Respectively.}
\vspace{-0.3cm}
\label{tab1}
\begin{center}
\begin{tabular}{|c|c|c|c|c|c|}
\toprule
\multirow{2}{*}{\textbf{Noise Distribution}} \textbf{}&\multicolumn{3}{|c|}{\textbf{Decomposed Gaussian Components}}&\multirow{2}{*}{\textbf{Weights}}&\textbf{Wasserstein}\\
\cline{2-4} 
& \textbf{\textit{Part 1}}& \textbf{\textit{Part 2}}&\textbf{\textit{Part 3}}&& \textbf{Distance} $d_p$ \\
\midrule
\multirow{2}{*}{$\mathcal{N}(0,5)$} & $\mathcal{N}(5.4\pm 0.48,$ & $\mathcal{N}(-0.47\pm 0.20,$ & $\mathcal{N}(-4.4\pm 0.14,$ & $[0.35\pm0.016,0.30\pm$ &\multirow{2}{*}{$4.1\pm 0.038\times 10^{-3}$}\\
& $11\pm 1.2$) & $6.7\pm 0.82$) & $19\pm 0.97$) & $0.029, 0.34\pm 0.0051]$ &\\
\hline
\multirow{2}{*}{$\mathcal{N}$(0,3)}& $\mathcal{N}$($2.8\pm 0.14,$ & $\mathcal{N}$($-0.48\pm 0.11,$ & $\mathcal{N}$($-2.3\pm 0.28,$ & [$0.30\pm0.029,0.35\pm$ &\\
& $3.3\pm 0.25$) & $2.7\pm 0.23$) & $8.2\pm 1.4$) & $0.025, 0.35\pm 0.016$] & $3.8\pm 0.12\times 10^{-3}$\\
\hline
$0.5\mathcal{N}(1,5)+$& $\mathcal{N}(4.4\pm 2.4,$ & $\mathcal{N}(-0.10\pm 1.4,$ & $\mathcal{N}(-2.5\pm 1.1,$ & $[0.30\pm0.059,0.34\pm$ &\multirow{2}{*}{$3.9\pm 0.072\times 10^{-3}$}\\
$0.5\mathcal{N}(-1,5)$& $22\pm 10)$ & $21\pm 13)$ & $13\pm 0.071)$ & $0.017, 0.35\pm 0.042]$ &\\
\hline
$0.4\mathcal{N}$(5,1)+& $\mathcal{N}$($4.9\pm 0.0030,$ & $\mathcal{N}$($-3.6\pm 0.042,$ & $\mathcal{N}$($-5.4\pm 0.036,$ & [$0.34\pm0.017,0.30\pm$ &\\
$0.6\mathcal{N}$(-5,3)& $0.87\pm 0.046$) & $32\pm 2.0$) & $5.2\pm 0.67$) & $0.059, 0.35\pm 0.042$] & $4.1\pm 0.74\times 10^{-3}$\\
\hline
$0.25\beta$(1,2)+& $\mathcal{N}$($0.29\pm 0.10,$ & $\mathcal{N}$($-4.2\pm 0.057,$ & $\mathcal{N}$($-7.0\pm 0.27,$ & [$0.33\pm0.00093,0.33\pm$ &\\
$0.75\mathcal{N}$(-5,3)& $0.26\pm 0.14$) & $3.4\pm 0.60$) & $6.5\pm 0.24$) & $0.00047, 0.33\pm 0.0014$] & $4.5\pm 0.32\times 10^{-3}$\\
\hline
$0.3\mathcal{N}(-1,5)+$& $\mathcal{N}(5.9\pm 0.32,$ & $\mathcal{N}(-0.19\pm 0.31,$ & $\mathcal{N}(-4.0\pm 0.20,$ & $[0.32\pm0.026,0.33\pm$ &\multirow{2}{*}{$3.5\pm 0.033\times 10^{-3}$}\\
$0.2\mathcal{N}(-2,5)$+$0.5\mathcal{N}(1.4,5)$& $13\pm 1.1)$ & $10\pm 0.13)$ & $15\pm 0.020)$ & $0.0038, 0.35\pm 0.030]$ &\\
\hline
$0.35\mathcal{N}$(-6,1)+& $\mathcal{N}$($9.3\pm 0.093,$ & $\mathcal{N}$($0.30\pm 0.061,$&$\mathcal{N}$($-5.9\pm 0.0018,$ & [$0.35\pm0.037,0.32\pm$&\multirow{2}{*}{$8.8\pm0.64\times 10^{-3}$}\\
$0.3\beta(1,2)+0.35\mathcal{\chi^{\text{2}}}(9)$& $0.60\pm 0.082$) & $0.19\pm 0.062$) & $0.95\pm 0.039$) & $0.026, 0.33\pm 0.011$] &\\
\bottomrule
\end{tabular}
\vspace{-0.5cm}
\end{center}
\end{table*}

\begin{figure*}[t] 
\centering
\begin{subfigure}{}
    \includegraphics[width=0.25\textwidth, height=0.125\textheight]{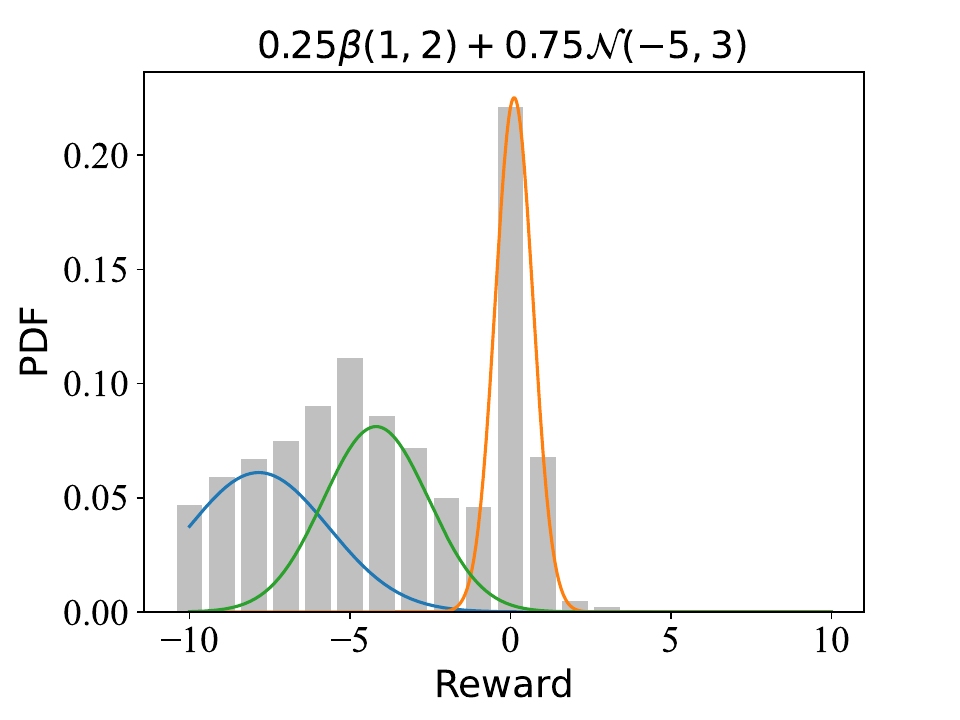}
    \label{fig_NDDevaluate_0}
\end{subfigure}
\begin{subfigure}{}
    \includegraphics[width=0.25\textwidth, height=0.125\textheight]{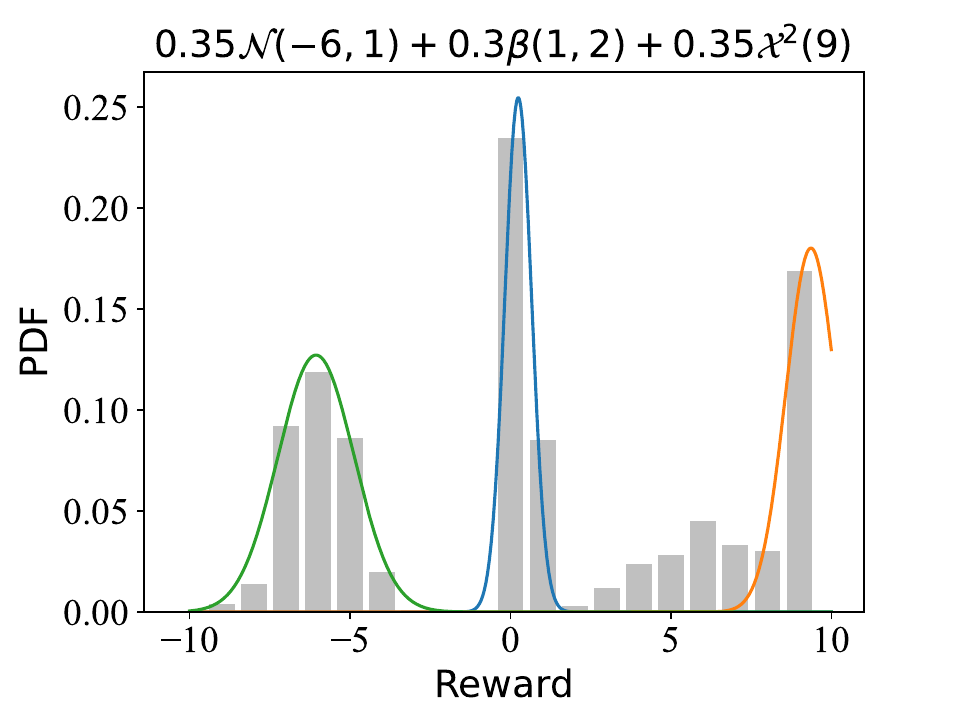}
    \label{fig_NDDevaluate_1}
\end{subfigure}
\begin{subfigure}{}
    \includegraphics[width=0.25\textwidth, height=0.125\textheight]{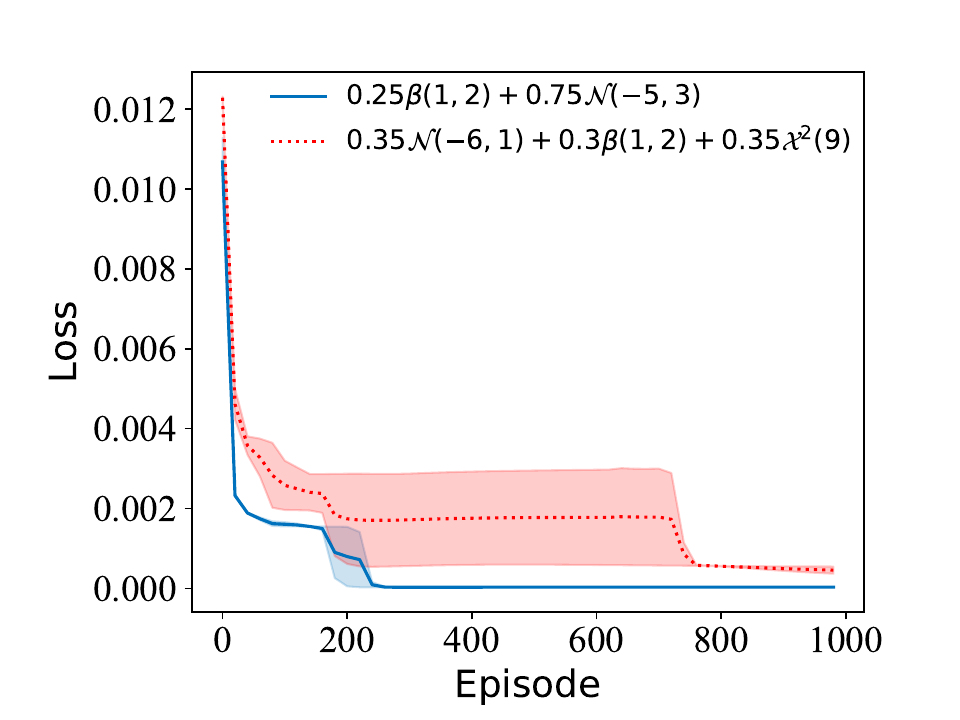}
    \label{fig_NDDevaluate_2}
\end{subfigure}
\caption{Results of distribution decomposition over two noise cases (i.e., $0.25\beta(1,2) + 0.75\mathcal{N}(-5,3)$, $0.35\mathcal{N}(-6,1)+0.3\beta(1,2)+0.35\chi(9)$). The left two sub-figures show the comparison between practical PDF histogram and weighted PDFs of decomposed distributions ($3$ colored curves); while the right sub-figure shows the curve of loss over training iterations. $\mathcal{N}, \beta, \mathcal{X}^2$ indicate a Gaussian, a Beta and a Chi-Square distribution, respectively.}
\label{fig_NDDevaluate}
\end{figure*}
\subsection{Distribution Decomposition Validation}
We further evaluate the performance of NDD under simulated noise with arbitrary distribution. In order to further evaluate the accuracy of NDD, we decompose each noise distribution into $3$ parts and summarize the related intermediate results (i.e., decomposed distributions, corresponding weights, and computed Wasserstein distance $d_p$ between the real and approximated distributions) in Table \ref{tab1}.
Though it generally leads to inferior decomposition accuracy for more complex noise distribution, all the losses are rather low in terms of Wasserstein distance (even for noise including non-Gaussian components). 
To make the decomposition results more intuitive, Fig. \ref{fig_NDDevaluate} compares the PDFs of two noise distributions (i.e., $0.25\beta(1,2)+0.75\mathcal{N}(-5,3)$ and $0.35\mathcal{N}(-6,1)+0.3\beta(1,2)+0.35\chi(9)$) with weighted decomposed distributions, and provides the learning loss curves. It can be found from Fig. \ref{fig_NDDevaluate} that weighted distributions align closely with the noise distribution. Besides, after several hundred episodes of training, it quickly converges.
In other words, the adoption of GMM to characterize the noise distribution is promising.

\begin{table*}[htbp]
\footnotesize
\caption{Average Scores in MPE's Tasks under Different Distortion Risk Functions.}
\label{main_res_remap}
\begin{center}
\setlength{\tabcolsep}{.5mm}{}
\begin{tabular}{|c|c|c|c|c|c|c|c|c|}
\toprule
\textbf{Scenario} & \textbf{Type}&\textbf{CPW 0.71}&\textbf{WANG 0.75}&\textbf{WANG {-}0.75}&\textbf{POW {-}2.0}&\textbf{CVaR 0.25}&\textbf{CVaR 0.1}&\textbf{Expectation}\\
\midrule
\multirow{3}{*}{Adversary}&Noise $0$&$\bm{{-}0.67}$ $\pm$ $0.77$&${-}11.21$$\pm$$7.08$&${-}2.51$$\pm$$0.63$&${-}4.05$$\pm$$0.80$&${-}1.26$$\pm$$\bm{0.42}$&${-}4.89$$\pm$$1.52$&${-}1.72$$\pm$$0.45$\\
&Noise $1$&${-}1.47$$\pm$$0.63$&${-}1.70$$\pm$$0.61$&${-}1.89$$\pm$$1.06$&${-}1.30$$\pm$$1.89$&$\bm{{-}0.15}$$\pm$$\bm{0.35}$&${-}2.85$$\pm$$1.16$&${-}1.93$$\pm$$0.43$\\
&Noise $2$&${-}0.86$$\pm$$0.67$&${-}2.23$$\pm$$0.95$&${-}1.31$$\pm$$0.61$&${-}2.61$$\pm$$0.63$&$\bm{{-}0.78}$$\pm$$0.62$&${-}2.98$$\pm$$0.64$&${-}1.35$$\pm$$\bm{0.40}$\\
\hline
\multirow{3}{*}{Crypto}&Noise $0$&$\bm{{-}8.77}$$\pm$$6.29$&${-}14.74$$\pm$$0.56$&${-}13.85$$\pm$$1.85$&${-}17.15$$\pm$$\bm{0.50}$&${-}9.32$$\pm$$1.20$&${-}14.54$$\pm$$1.68$&${-}12.09$$\pm$$1.69$\\
&Noise $1$&${-}8.86$$\pm$$2.06$&${-}14.75$$\pm$$\bm{0.35}$&${-}9.88$$\pm$$3.79$&${-}10.98$$\pm$$1.19$&${-}10.25$$\pm$$0.91$&${-}11.46$$\pm$$2.73$&$\bm{{-}5.98}$$\pm$$0.81$\\
&Noise $2$&${-}11.19$$\pm$$1.83$&$\bm{{-}8.96}$$\pm$$1.58$&${-}13.70$$\pm$$1.90$&${-}14.74$$\pm$$\bm{0.49}$&${-}10.93$$\pm$$0.64$&${-}9.89$$\pm$$2.54$&${-}10.74$$\pm$$0.52$\\
\hline
\multirow{3}{1cm}{Speaker Listener}&Noise $0$&$\bm{{-}29.70}$$\pm$$1.62$&${-}32.55$$\pm$$2.80$&${-}35.37$$\pm$$\bm{1.50}$&${-}33.02$$\pm$$1.96$&${-}31.88$$\pm$$3.57$&${-}34.65$$\pm$$2.25$&${-}31.37$$\pm$$1.80$\\
&Noise $1$&${-}31.35$$\pm$$1.42$&${-}34.00$$\pm$$3.25$&${-}32.60$$\pm$$1.81$&${-}34.84$$\pm$$2.35$&${-}32.00$$\pm$$1.92$&${-}36.23$$\pm$$1.97$&$\bm{{-}30.12}$$\pm$$\bm{1.34}$\\
&Noise $2$&${-}30.72$$\pm$$1.78$&$\bm{{-}30.26}$$\pm$$1.31$&${-}33.29$$\pm$$1.84$&${-}31.68$$\pm$$1.44$&${-}30.58$$\pm$$\bm{0.96}$&${-}33.67$$\pm$$1.04$&${-}32.00$$\pm$$1.35$\\
\hline
\multirow{3}{*}{Spread}&Noise $0$&${-}94.25$$\pm$$1.37$&${-}106.15$$\pm$$2.80$&${-}97.14$$\pm$$2.15$&${-}101.67$$\pm$$3.07$&${-}94.66$$\pm$$1.54$&${-}96.51$$\pm$$1.62$&$\bm{{-}93.02}$$\pm$$\bm{0.71}$\\
&Noise $1$&${-}94.55$$\pm$$1.29$&${-}102.21$$\pm$$4.33$&${-}99.08$$\pm$$1.27$&${-}99.94$$\pm$$1.78$&${-}95.46$$\pm$$\bm{0.83}$&${-}94.09$$\pm$$2.19$&$\bm{{-}93.36}$$\pm$$1.88$\\
&Noise $2$&${-}93.78$$\pm$$1.98$&${-}98.58$$\pm$$1.81$&${-}99.70$$\pm$$2.08$&${-}99.99$$\pm$$\bm{0.75}$&${-}92.61$$\pm$$1.64$&${-}96.53$$\pm$$1.70$&$\bm{{-}91.24}$$\pm$$1.58$\\
\hline
\multirow{3}{*}{Reference}&Noise $0$&${-}36.46$$\pm$$\bm{0.49}$&${-}37.35$$\pm$$1.40$&$\bm{{-}35.31}$$\pm$$1.19$&${-}35.94$$\pm$$1.15$&${-}37.11$$\pm$$0.96$&${-}39.96$$\pm$$1.00$&${-}37.22$$\pm$$0.89$\\
&Noise $1$&$\bm{{-}35.48}$$\pm$$1.03$&${-}36.70$$\pm$$1.11$&${-}37.36$$\pm$$1.00$&${-}36.71$$\pm$$\bm{0.76}$&${-}36.45$$\pm$$2.04$&${-}37.64$$\pm$$1.23$&${-}36.87$$\pm$$0.79$\\
&Noise $2$&$\bm{{-}36.64}$$\pm$$0.93$&${-}37.05$$\pm$$1.16$&${-}37.25$$\pm$$0.90$&${-}37.02$$\pm$$0.84$&${-}36.86$$\pm$$\bm{0.79}$&${-}39.10$$\pm$$1.17$&${-}37.31$$\pm$$1.33$\\
\bottomrule
\end{tabular}
\end{center}
\end{table*}
\subsection{Results for Different Distortion Risk Functions}

In order to manifest the effectiveness and stability of MARL with distortion risk functions $\rho_{\eta}(\tau)$, we test some formats of $\rho_{\eta}(\tau)$ (i.e., CPW\cite{CPW, CPW071}, WANG\cite{WANG}, POW\cite{dabney2018implicit}, and CVaR\cite{CVaR}) with parameters configured as in Table \ref{riskfunnotations} for NDD. Table \ref{main_res_remap} shows the performance of adopting different $\rho_{\eta}(\tau)$ in five MPE tasks, in terms of the means and standard deviations of the results. Note that means and standard deviations are compared independently. The ``Expectation'' in this Table means NDD,  which uses \emph{expectation} of action value distribution to update policies directly without any $\rho_{\eta}(\tau)$.

It can be observed from Table \ref{main_res_remap} that apart from Spread, wherein the Expectation method only achieves optimal performance, most of the tasks impose superior performance after imposing the risk preference on strategies. The optimal values are distributed relatively evenly among different types and parameters of $\rho_{\eta}(\tau)$, which reflects the significance of distributional MARL.
As a function that better aligns with human decision-making habits, $\text{CPW(0.71)}$  achieves the best results in one-third of the cases.
The stability of strategy with $\text{CVaR(0.25)}$ is the highest, at the cost of poorer performance on the effectiveness.
Interestingly, some distortion risk functions are ineffective. For instance, $\text{CVaR(0.1)}$ does not work in any of the $15$ cases.
In a word, $\text{CPW(0.71)}$ can contribute to maximizing returns. Meanwhile, if stability and fault tolerance are the primary prerequisites, $\text{POW(-2)}$ and $\text{CVaR(0.25)}$ are worthy of the recommendation.

\begin{figure}
    \centerline{\includegraphics[width=3.15in, height=0.2\textheight]{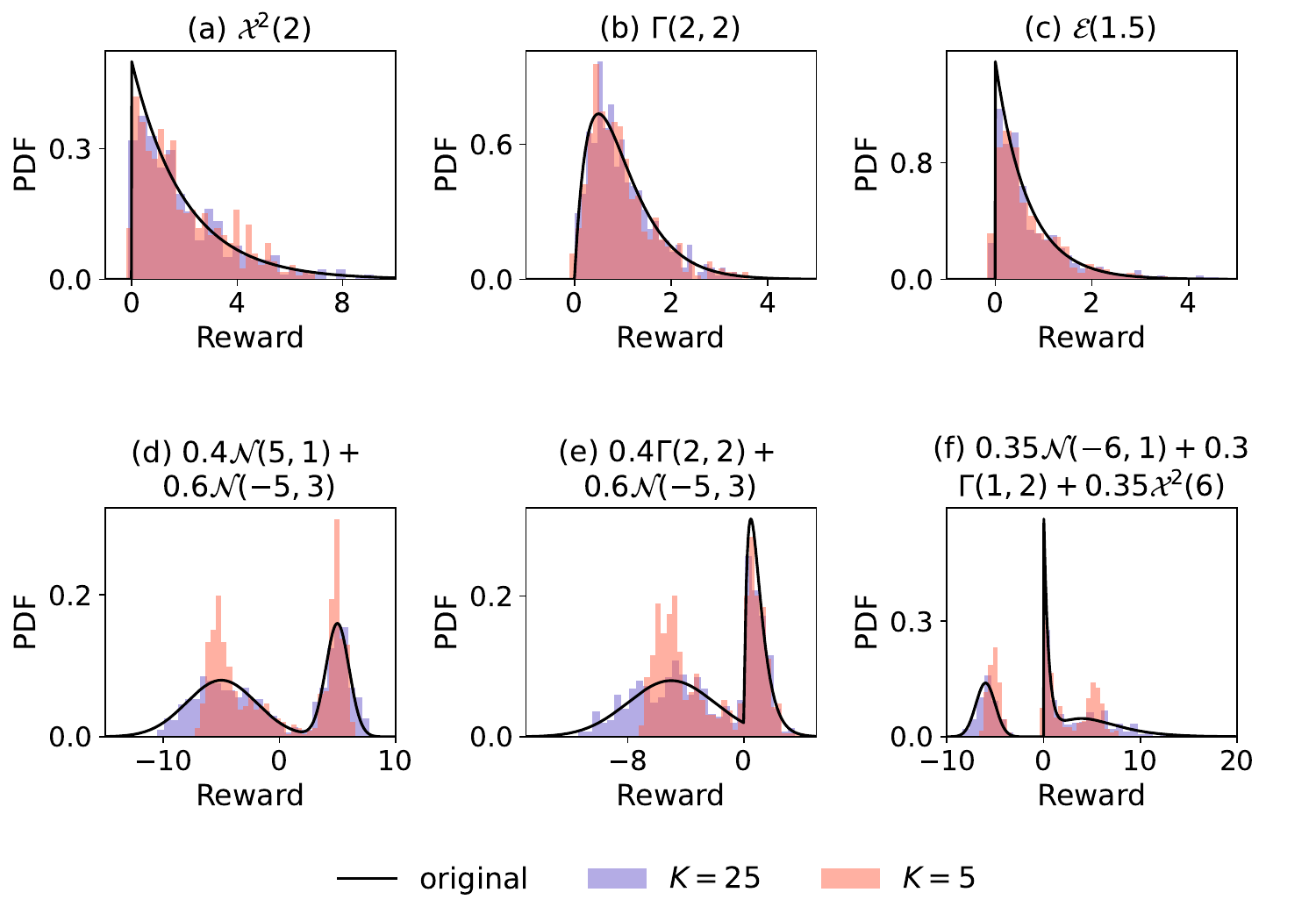}}
    \caption{The performance of DM in generated samples is illustrated. $\mathcal{N}, \mathcal{X}^2, \Gamma, \mathcal{E}$ are a Gaussian, a Chi-Square, a Gamma, and an exponential distribution, respectively.}
    \label{DG_fig}
\end{figure}

\begin{figure*}
    \centerline{\includegraphics[width=.895\textwidth, height=0.325\textheight]{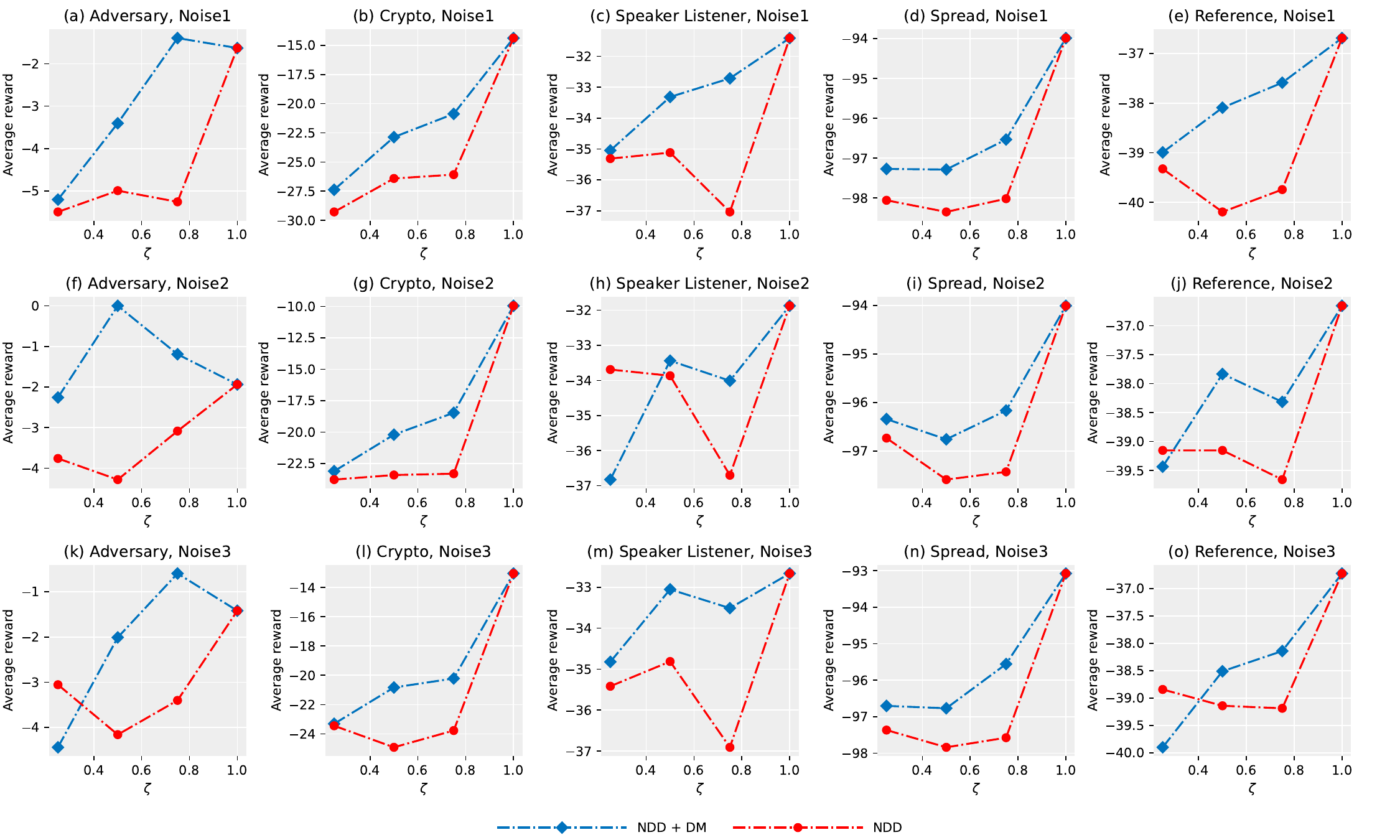}}%
    \caption{The performance comparison of the NDD algorithm before and after using DM.}
    \label{dm_res}
\end{figure*}
\subsection{DM-based Data Augmentation}\label{DM_experiment}

In this section, our experiments mainly focus on assessing the data preciseness provided by DM for NDD. 
Beforehand, we first evaluate the distribution of the data generated by DM and compare it with the practical distribution in Fig. \ref{DG_fig}. In the case with $K=25$, benefiting from the alignment capability of DM, the simulated data can resemble the original data, thus reducing the sampling and communication costs.

Next, Fig. \ref{dm_res} depicts the performance of the NDD approach with and without DM-based data augment. In this part, we intentionally hide some proportion of data for DM to supplement. For simplicity, the ratio of the remaining data quantity to the complete one is denoted as $\zeta$.  Generally speaking, along with the increase of $\zeta$, the NDD approach gives superior performance, consistent with our intuition.
More importantly, the use of DM significantly improves performance, strongly demonstrating the effectiveness of DM. \blut{That's to say, the same training effect can be achieved with significantly fewer training data than that without DM. }
On the other hand, for an over-small $\zeta$, the incorporation of DM leads to a decline in the performance of NDD, as too little data induces a significant gap between the distribution learned by DM and the original distribution. Specifically, according to analyses based on \eqref{eq_theorm_error_exp_upper_bound}, the estimated expectations $\mathbb{E}(r)$ and variances $\mathbb{D}(r)$ are distorted in this data-lacking case, with non-negligible generation error. 
Therefore \blus{Fig. \ref{dm_res} also provides a good reference for balancing performance and interaction cost. Generally speaking, supplementing data within $50\%$ from DM sounds reasonable.}

\section{Conclusion}\label{sectionCon}
In this paper, we propose NDD to tackle the noisy rewards during cooperative and partially observed MARL tasks. In particular, we leverage the GMM-based reward distribution decomposition to characterize and approximate the distributional global noisy rewards by local Gaussian components, so as to mitigate the negative impact of noises. We also introduce distortion risk functions to leverage additional information learned from the action value distribution and discuss their applicability. Additionally, after providing the bounded generation and approximation error, we use diffusion models for data augmentation to address the high training cost issue. Moreover, we theoretically establish the consistency between the globally optimal action and locally optimal ones and carefully calibrate the loss function designed to update NDD networks. Extensive experimental results in MPE and SMAC also prove the effectiveness and superiority of NDD.

\blut{Our future work is dedicated to analyzing the time and space complexity of NDD, as well as exploring the mathematical relationship between hyperparameters in DM and two errors of generation and approximation.}

\bibliographystyle{IEEEtran}
\bibliography{reference}

\begin{IEEEbiography}[{\includegraphics[width=1in,height=1.25in,clip,keepaspectratio]{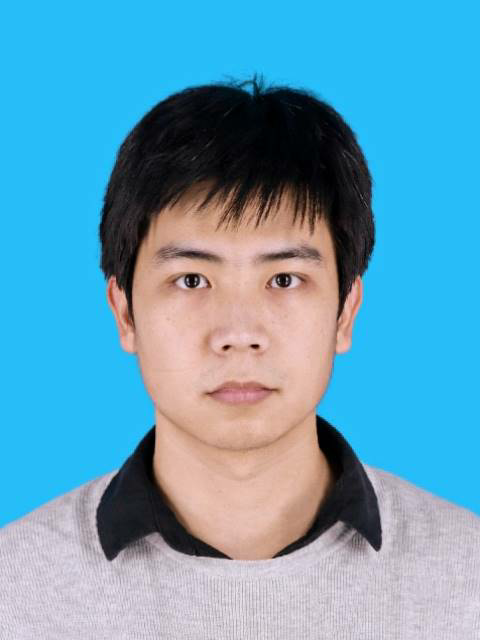}}]{Wei Geng}
He graduated with an M.S. degree in software engineering from the Hangzhou Institute for Advanced Study, UCAS in 2024. Now, he is an intern researcher with Zhejiang Lab. 
His research interests include multi-agent reinforcement learning.
\end{IEEEbiography}
\begin{IEEEbiography}[{\includegraphics[width=1in,height=1.25in,clip,keepaspectratio]{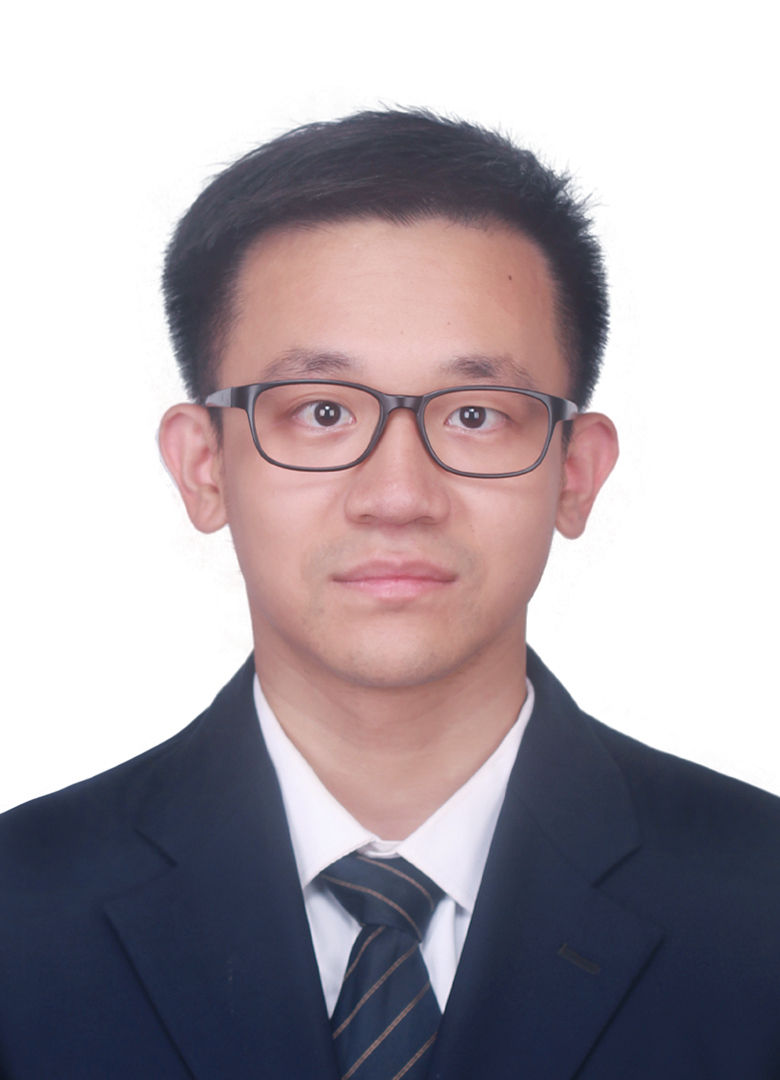}}]{Baidi Xiao}
(Graduate Student Member, IEEE) received the M.S. degree from the College of Information
Science and Electronic Engineering, Zhejiang University, Hangzhou, China. His research interests currently focus on deep learning, multi-agent reinforcement learning, and the Internet of Vehicles.
\end{IEEEbiography}
\begin{IEEEbiography}[{\includegraphics[width=1in,height=1.25in,clip,keepaspectratio]{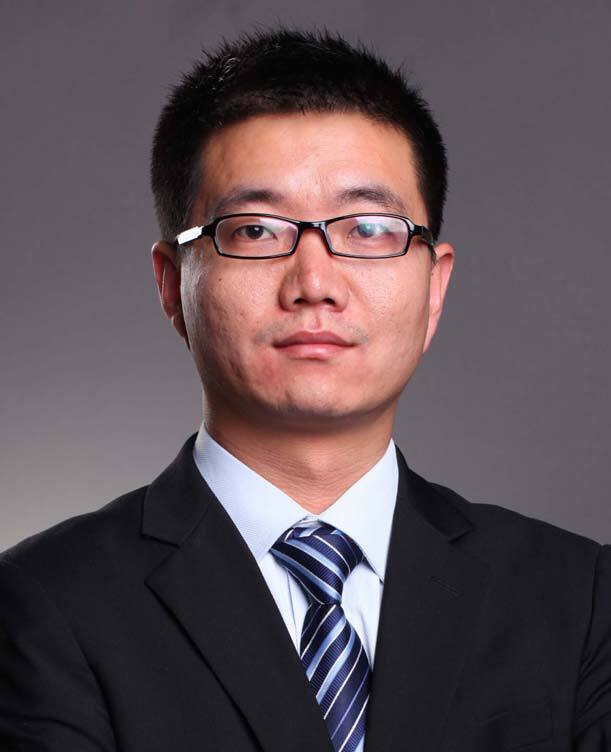}}]{Rongpeng Li}
(Senior Member, IEEE) is currently an Associate Professor with the College of Information Science and Electronic Engineering, at Zhejiang University, Hangzhou, China. He was a Research Engineer with the Wireless Communication Laboratory, Huawei Technologies Company, Ltd., Shanghai, China, from August 2015 to September 2016. He was a Visiting Scholar with the Department of Computer Science and Technology, University of Cambridge, Cambridge, U.K., from February 2020 to August 2020. His research interest currently focuses on networked intelligence for communications evolving (NICE). He received the Wu Wenjun Artificial Intelligence Excellent Youth Award in 2021. He serves as an Editor for \emph{China Communications}.
\end{IEEEbiography}
\begin{IEEEbiography}[{\includegraphics[width=1in,height=1.25in,clip,keepaspectratio]{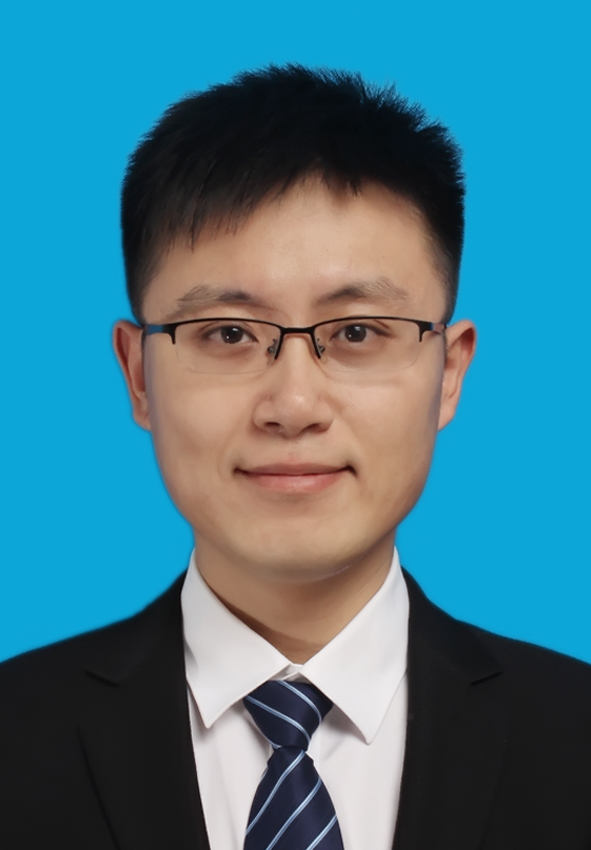}}]{Ning Wei} is currently an Engineering Expert with Zhejiang Lab, Hangzhou, China. He was an AI$\&$Robotic Engineer with KUKA Robotics, Shanghai, China, from May 2014 to November 2017. He was an AI Researcher with the Hikvision Research Institute, Hangzhou, China, from December 2017 to July 2022. His research interest currently focuses on Reinforcement Learning and Simulation$\&$Data Mixed-Driven Intelligence. He is also a member of the China Computer Federation. 
\end{IEEEbiography}
\begin{IEEEbiography}[{\includegraphics[width=1in,height=1.25in,clip,keepaspectratio]{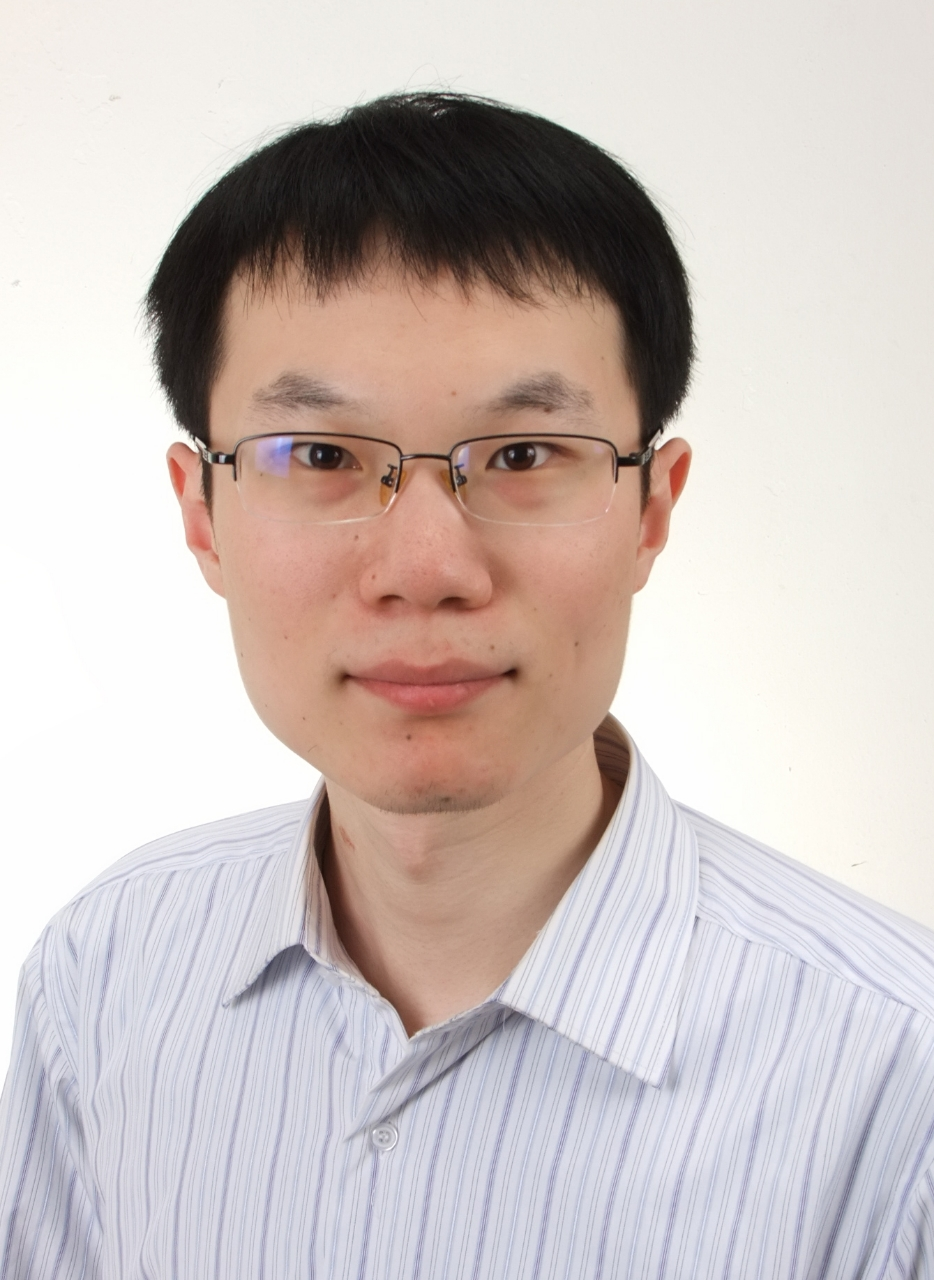}}]{Dong Wang} received the Ph.D. degree in computing from the University of the West of Scotland, Paisley, U.K., in 2020.
He is a Senior Research Engineer at 6G Research Center, China Telecom Research Institute, Beijing, China, and leads the network architecture and services team. He also serves as a TSC co-chair of ONAP, Linux Foundation Networking.
His research interests include 6G mobile networks, network architecture and operation, intent-driven networks, and complex heterogeneous networks.
\end{IEEEbiography}
\begin{IEEEbiography}[{\includegraphics[width=1in,height=1.25in,clip,keepaspectratio]{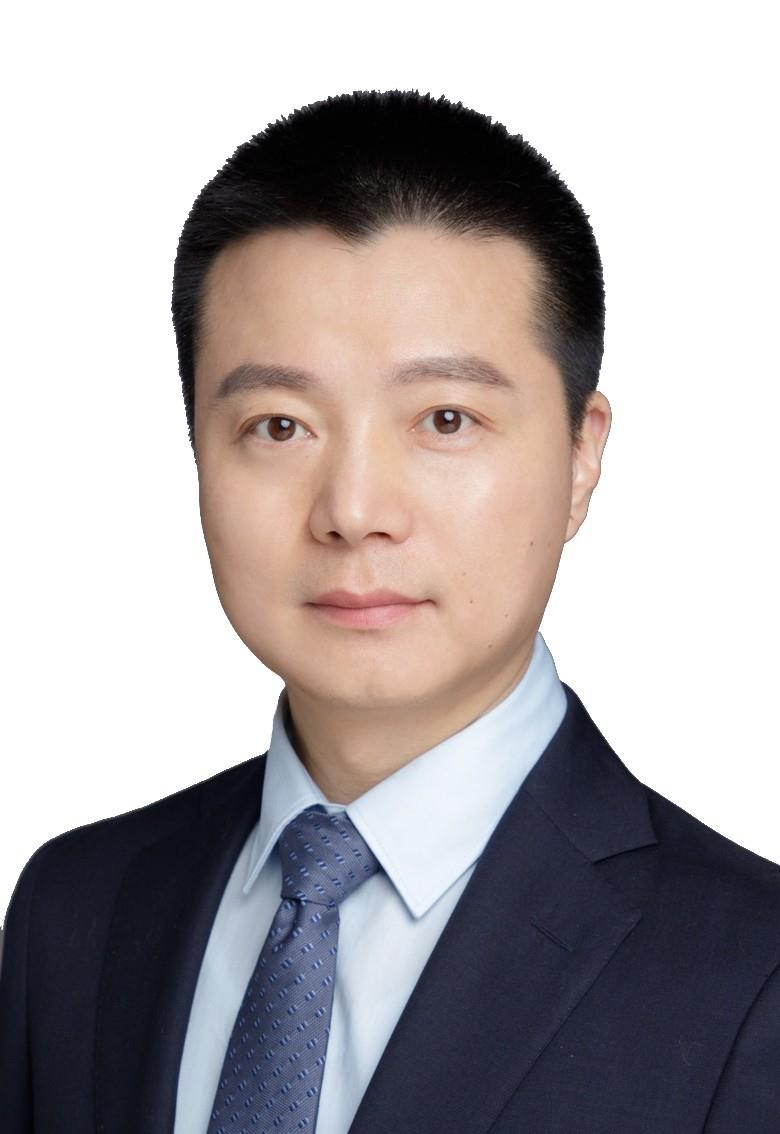}}]{Zhifeng Zhao}
(Member, IEEE) received the B.E. degree in computer science, the M.E. degree in communication and information systems, and the Ph.D. degree in communication and information systems from the PLA University of Science and Technology, Nanjing, China, in 1996, 1999, and 2002, respectively. From 2002 to 2004, he acted as a Post-Doctoral Researcher with Zhejiang University, Hangzhou, China, where his researches were focused on multimedia next-generation networks (NGNs) and soft switch technology for energy efficiency. From 2005 to 2006, he acted as a Senior Researcher with the PLA University of Science and Technology, where he performed research and development on advanced energy-efficient wireless routers, \emph{ad-hoc} network simulators, and cognitive mesh networking test-bed.
From 2006 to 2019, he was an Associate Professor at the College of Information Science and Electronic Engineering, Zhejiang University. Currently, he is with the Zhejiang Lab, Hangzhou as the Chief Engineering Officer. His research areas include software-defined networks (SDNs), wireless networks in 6G, computing networks, and collective intelligence. He is the Symposium Co-Chair of ChinaCom 2009 and 2010. He is the Technical Program Committee (TPC) Co-Chair of the 10th IEEE International Symposium on Communication and Information Technology (ISCIT 2010).
\end{IEEEbiography}

\clearpage
\appendices 
{
\section{Typical Examples of Distortion Risk Functions}\label{drf_exams}
The action corresponding to the maximum expectation like the action value does not always mean the optimal decision under particular scenarios. Therefore, different risk strategies, which enhance the diversity of policies on account of the distributions over returns \cite{dabney2018implicit}, apply to different task scenarios using distributional RL and often reflect the underlying uncertainties associated with certain actions. Distortion risk functions $\rho_{\eta}(\tau)$ help MAS obtain these risk strategies by remapping action value distribution quantiles, which is particularly important in noisy environments.

Based on $\rho_{\eta}(\tau)$, four functions, which consider human decision-making habits and optimization costs and include CPW\cite{CPW, CPW071}, WANG\cite{WANG}, POW\cite{dabney2018implicit}, and CVaR\cite{CVaR}, are further presented as follows: 

\begin{itemize}
\item[(1)]
\begin{Eequation}
    \rho_{\eta}(\tau)=\text{CPW}(\eta, \tau)=\frac{\tau^{\eta}}{\big(\tau^{\eta}+(1-\tau)^{\eta}\big)^{\frac{1}{\eta}}}.
\end{Eequation}

The $\text{CPW}$ is the cumulative probability weighting parameterization proposed in cumulative prospect theory \cite{CPW}. It can be observed $\eta$ is locally concave for small values of $\tau$, while convex for larger values of $\tau$.
Concavity corresponds to risk-averse and convexity to risk-seeking policies \cite{CPW071}. \cite{CPW071} recommends $\eta = 0.71$.

\item[(2)]
\begin{Eequation}
    \rho_{\eta}(\tau)=\text{WANG}(\eta, \tau)=\Phi\big(\Phi^{-1}(\tau)+\eta\big).
\end{Eequation}

$\text{WANG}$ is proposed in \cite{WANG} where $\Phi$ and $\Phi^{-1}$ are taken to be the standard CDF of Normal distribution and its inverse.  Intuitively, it produces risk-averse policies when $\eta < 0$ and vice versa.

\item[(3)]
\begin{Eequation}
    \rho_{\eta}(\tau)=\text{POW}(\eta,\tau)=\begin{cases}  \tau^{\frac{1}{1+|\eta|}}, & \text{if } \eta \geq 0; \\  1-(1-\tau)^{\frac{1}{1+|\eta|}}, & \text{otherwise.} \end{cases}
\end{Eequation}

$\text{POW}$ \cite{dabney2018implicit} produces risk-averse effect with $\eta < 0$ or risk-seeking with $\eta \geq 0$.

\item[(4)]
\begin{Eequation}
    \rho_{\eta}(\tau)=\text{CVaR}(\eta,\tau)=\eta\tau.
\end{Eequation}

$\text{CVaR}$ \cite{CVaR} remaps $\tau$ from $[0, 1]$ to $[0, \eta]$, which is widely used in distributional RL.

\end{itemize}

\section{Diffusion Model Analysis}\label{dm_analysis}
DM introduces randomness by adding Gaussian noise $K$ times in the forward diffusion process and then obtains new samples through denoising in the reverse diffusion process \cite{DDPM}. 
Specifically, DM starts with $r \sim \mathcal{D}$ and forwards the addition of Gaussian noise. In forward process, $r^{<k>}$ means the generated sample in $k$-th step ($k=1,\cdots,K$) and we have $r^{<k>}\sim \mathcal{D}_q(r^{<k>})$. By the way, $\mathcal{D}_q(r) = \mathcal{D}$.
$\omega_k$ is a pre-defined variance schedule in the $k$-th step with $\upsilon_k = 1 - \omega_k$.
For the forward diffusion chain of DM which adds Gaussian noise to $r$ recursively \cite{pmlr-v37-sohl-dickstein15}, we have for $\omega_1 < \omega_2 < \cdots <\omega_K$,
\begin{equation}
    \mathcal{D}_q(r^{<k>}|r^{<k-1>})=\mathcal{N}(r^{<k>};\sqrt{1-\omega_k}r^{<k-1>},\omega_k),\\
\end{equation}
and
\begin{equation}
    \mathcal{D}_q(r^{<1:K>}|r):=\prod\nolimits_{k=1}^{K} \mathcal{D}_q(r^{<k>}|r^{<k-1>}).
\end{equation}
Unrolling the above recursion, the general term expression for $r^{<k>}$ is as follows \cite{DDPM}:
\begin{equation}
\label{DM_add_noise}    
\mathcal{D}_q(r^{<k>}|r)=\mathcal{N}\big(r^{<k>};\sqrt{\bar{\upsilon}_k}r,(1-\bar{\upsilon}_k)\big),
\end{equation}
where $\bar{\upsilon}_k = \prod_{i=1}^{k} \upsilon_i$.
A reparameterization trick\cite{DDPM} allows for the separation of a random variable (i.e., standard Gaussian distribution) and the mean-variance parameters of a Gaussian distribution, with the latter being inferred by DNNs. Consequently, \eqref{DM_add_noise} can be rewritten as \cite{DDPM}
\begin{equation} \label{dm_general_term}
    r^{<k>} = \sqrt{\bar{\upsilon}_k}r + \sqrt{1 - \bar{\upsilon}_k} \bar{\epsilon}_k,
\end{equation}
where $\bar{\epsilon}_k \sim \mathcal{N}(0, \textbf{I})$. After $K$ noise additions, $r^{<K>}$ approaches standard Gaussian distribution. 

Reversely, the reverse diffusion process predicts the Gaussian parameters (mean and variance) from $r^{<k+1>}$ to $r^{<k>}$($\forall k \in \{0, \cdots, K-1\}$), until obtaining the generated data $r^{<0>}$. Mathematically, based on $r^{<k>}$ and $r$, we have \cite{DDPM}
\begin{equation}
\begin{aligned}
    &\mathcal{D}_q(r^{<k-1>}|r^{<k>},r)\sim \\
    &\mathcal{N}\big(r^{<k-1>};\frac{1}{\sqrt{\upsilon_k}}(r^{<k>}-\frac{\omega_k}{\sqrt{1-\bar{\upsilon}_k}}\bar{\epsilon}_k),\frac{1-\bar{\upsilon}_{k-1}}{1-\bar{\upsilon}_k}\omega_k\big).
\end{aligned}
\end{equation}

\noindent Notably, at the $k$-th step of the diffusion process, $\bar{\epsilon}_k$ is inferred by a DNN with parameters $\Theta$. Therefore, by the reparameterization trick, the recursive relationship from $r^{<k>}$ to $r^{<k-1>}$ can be expressed as \cite{DDPM}
\begin{align} 
    & r^{<k-1>} \leftarrow \label{dm_reverse_eq} \\
    &\frac{1}{\sqrt{\upsilon_k}}\big(r^{<k>}-\frac{\omega_k}{\sqrt{1-\bar{\upsilon}_k}}{\epsilon_{\Theta}}(r^{<k>}, k)\big) + \sqrt{\frac{1-\bar{\upsilon}_{k-1}}{1-\bar{\upsilon}_k}\omega_k}\epsilon, \nonumber
\end{align}
where ${\epsilon_{\Theta}}(r^{<k>}, k)$ is the approximated value of $\bar{\epsilon}_k$ by DNNs and $\epsilon\sim \mathcal{N}(0,1)$. Thus the trainable reverse diffusion chain can be optimized by the loss function as \eqref{eq:loss_dm} on Page \pageref{eq:loss_dm}. The structure of the reverse diffusion process is in Fig. \ref{DM_fig}, where the recursive layer adopts \eqref{dm_reverse_eq}.
\begin{figure*}[tb]
    \begin{equation}
        \label{eq:loss_dm}
            \mathcal{L}_{\text{DM}}(\Theta):=
            \mathbb{E}_{k\sim\mathcal{U}(1,K),\iota \sim\mathcal{N}(0,1),r\sim \mathcal{D}_q(r) }
            [||\iota-\epsilon_{\Theta}(\sqrt{\bar{\upsilon}_k }r+\sqrt{1- 
             \bar{\upsilon}_k}\iota,k)||^2]
    \end{equation}
    \hrulefill
\end{figure*}

\blus{
With its adept distribution representation ability (e.g. $\mathcal{D}$), DM is used for data augmentation to offset the significant sample-collection cost of NDD to an extent. Specifically, DM-based data augmentation can be divided into the training process and the generation process. The training process is carried out independently over $K$ diffusion steps. In each step, there is an independent neural network used to predict the diffused distribution ${\epsilon_{\Theta}}(r^{<k>}, k)$. The neural network updates itself to minimize \eqref{eq:loss_dm}, which measures the difference between predicted and real distribution. Note that the data augmentation process corresponds to the reverse diffusion process in DM.
The whole algorithm for DM-based data augmentation is shown in Algorithm \ref{dm_train_algorithm}.
}

\begin{algorithm}[t]
\caption{DM-based Data Augmentation}\label{dm_train_algorithm}
\KwIn{Original reward sample $r$, the maximum training step $T$, no. of diffusion steps $K$, predefined hyper-parameter $\upsilon_k$ and $\omega_k, k\in \{1,\cdots,K\}$.}
\KwOut{Generated sample $r^{<0>}$.}
Initialize $\Theta$, $t\leftarrow 0$\;
\While{$t \leq T$}{
Randomly choose $k$ from $\{1,\cdots,K\}$\;
Sample $\iota$ from $\mathcal{N}(0,1)$\;
Update $\Theta$ by minimizing \eqref{eq:loss_dm}\;
}
Sample $r^{<K>}$ from $\mathcal{N}(0,1)$\;
\For{$k = K$ to $1$}{
Infer $\epsilon_{\Theta}$ based on $r^{<k>}$\;
Update $r^{<k-1>}$ by $r^{<k>}$ according to \eqref{dm_reverse_eq}\;
}
\Return $r^{<0>}$.
\end{algorithm}

\begin{figure}
\centerline{\includegraphics[width=4in]{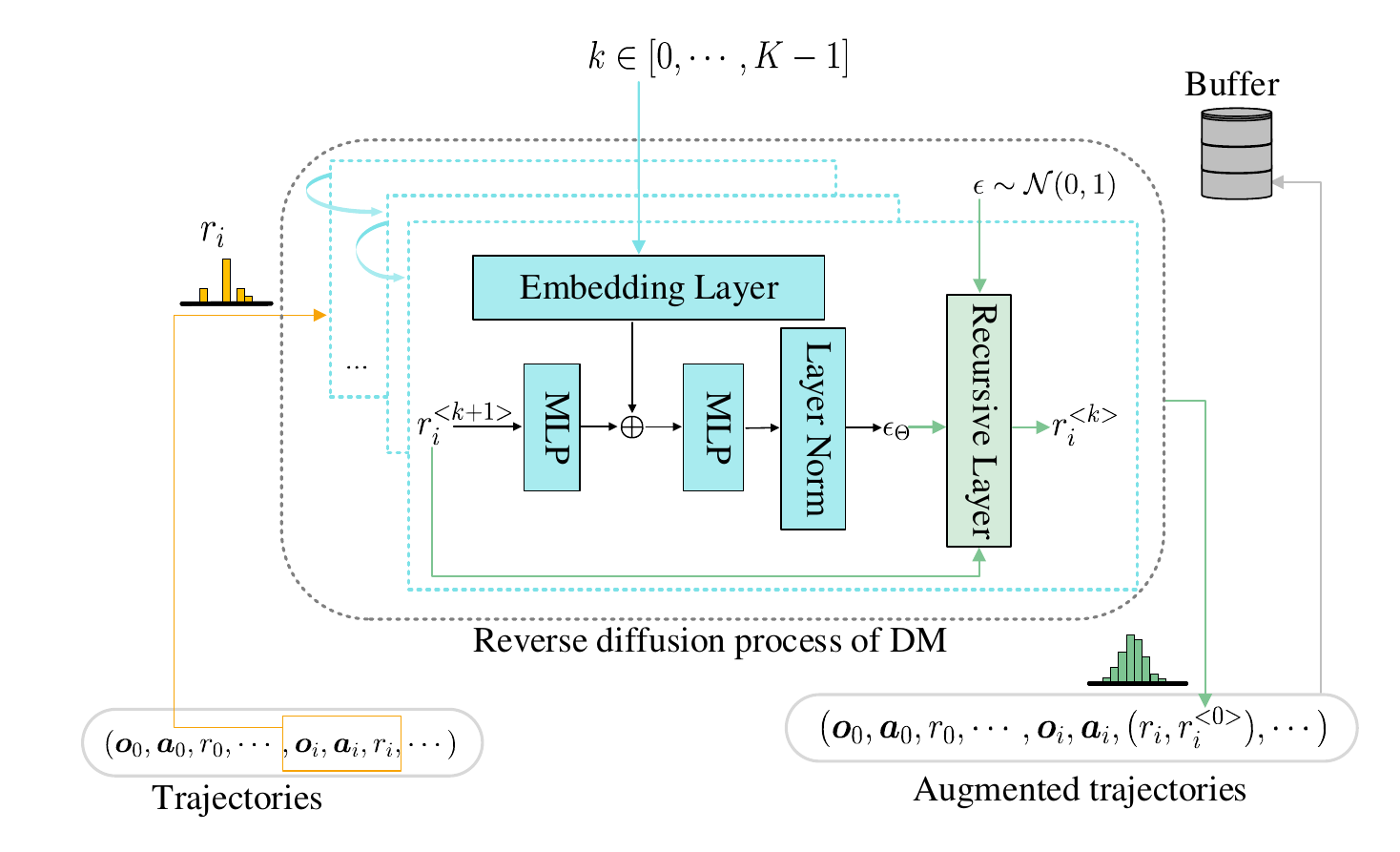}}
\caption{Overview of the DM structure.}
\label{DM_fig}
\end{figure}

\begin{table*}[t]
    \small
    \centering
    \caption{Noise Configuration along Simulation Environments. Here $\mathcal{N},\mathcal{U},\Gamma$, and $\chi^2$ Represent the Gaussian, Uniform, Gamma, and Chi-Square Distribution, Respectively.\protect\tablefootnote{ It's worth noting that Noise $1$ of MPE in Table \ref{tab:noise_config_along_scene} represents that the noise follows $\mathcal{N}(-1,5)$ with a probability of 0.5 and $\mathcal{N}(1,5)$ also with a probability of 0.5, instead of a simple weighted sum. Similar expressions in the following text should be interpreted accordingly.}}
    \begin{tabular}{|c|c|c|c|}
        \toprule
         Type & Noise $0$ & Noise $1$ & Noise $2$\\
        \midrule
        MPE & $\mathcal{N}(0,5)$ & $0.5\mathcal{N}(-1,5)+0.5\mathcal{N}(1,5)$ & $0.3\mathcal{N}(-1,5)+0.2\mathcal{N}(-2,5)+0.5\mathcal{N}(1.4,5)$\\
        \hline
        SMAC & $\mathcal{N}(0,0.1)$ & $0.5\mathcal{N}(-0.1,0.1)+0.5\mathcal{N}(0.1,0.1)$ &  $0.3\mathcal{N}(-0.1,0.1)+0.2\mathcal{N}(-0.2,0.1)+0.5\mathcal{N}(0.14,0.1)$\\
        \midrule

         Type & \multicolumn{2}{c|}{  Noise $3$} &   Noise $4$\\
        \midrule
         MPE & \multicolumn{2}{c|}{ $0.75\beta(2,2)+0.25\mathcal{N}(-1.5, 5)$} &  $0.3\Gamma(1,2)+0.3\mathcal{U}(-12,0)+0.4\chi^2(3)$\\
        \hline
         SMAC & \multicolumn{2}{c|}{ $0.75\beta(0.1,1.9)+0.25\mathcal{N}(-0.15, 0.1)$} & $ 0.3\mathcal{N}(-0.05,0.1)+0.3\mathcal{U}(-0.4,0.1)+0.4\chi^2(0.15)$\\
        \bottomrule

    \end{tabular}
    \label{tab:noise_config_along_scene}
\end{table*}

\begin{table}[]
    \caption{Hyper Parameters in MPE and SMAC.}
    \label{hyper_parameters_tab}
    \centering
    \begin{tabular}{l l l}
        \hline
         Hyper Parameter & MPE & SMAC \\
        \hline
        No. of iterations & $3000$ & $2 \times 10^4$\\
        $\lambda$ & $1$ & $1$\\
        $\alpha$ & $1$ & $1$\\
        Buffer size & $1024$ & $2048$\\
        Batch size & $32$ & $64$\\
        $\gamma$ & $0.99$ & $0.99$\\
        Error threshold $e_{\text{min}}$ & $10^{-3}$ & $10^{-3}$\\
        Learning rate & $3 \times 10^{-3}$ & $3 \times 10^{-3}$\\
        \hline
    \end{tabular}
\end{table}

\begin{table}[]
    \caption{Parameters in Distortion Risk Functions.}
    \label{riskfunnotations}
    \centering
    \begin{tabular}{l l}
        \hline
         $\rho_{\eta}(\tau)$ & $\eta$ \\
        \hline
        $\text{CPW}(\eta,\tau)$ & $0.71$\\
        $\text{WANG}(\eta,\tau)$ & $0.75,-0.75$\\
        $\text{POW}(\eta,\tau)$ & $-2$\\
        $\text{CVaR}(\eta,\tau)$ & $0.1,0.25$\\
        \hline
    \end{tabular}
\end{table}

\section{Proof of Theorem \ref{them_generation_error}} \label{generation_error}

\begin{proof}
In \eqref{dm_general_term}, $\bar{\epsilon}_k$ is always inferred by $\epsilon_{\Theta}(r^{<k>}, k)$, and thus with the training loss $\xi_1(k)$ for the $k$-th step in DM, $\epsilon_{\Theta}(r^{<k>}, k)$ can be
\begin{align} \nonumber
    \epsilon_{\Theta}(r^{<k>}, k) \nonumber
    =& \bar{\epsilon}_k - \xi_1(k)= \frac{r^{<k>} - \sqrt{\bar{\upsilon}_k}r}{\sqrt{1 - \bar{\upsilon}_k}} - \xi_1(k), \nonumber
\end{align}
where $r$ represents the original sample from the training dataset, which is different from the generated $r^{<0>}$ through the reverse diffusion chain.
And then \eqref{dm_reverse_eq} can be rewritten as \eqref{dm_reverse_r0_rk} where $\epsilon \sim \mathcal{N}(0,1)$ and $k=2,\cdots,K$.
\begin{figure*}[t]
\hrulefill
\begin{equation}
\begin{aligned} \label{dm_reverse_r0_rk}
r^{<k-1>} =& \frac{1}{\sqrt{\upsilon_k}}\Big[r^{<k>}-\frac{1-\upsilon_k}{{1-\bar{\upsilon}_k}}\big(r^{<k>}- \sqrt{\bar{\upsilon}_k}r - {\sqrt{1-\bar{\upsilon}_k}}\xi_1(k)\big)\Big] + \sqrt{\frac{1-\bar{\upsilon}_{k-1}}{1-\bar{\upsilon}_k} \omega_k} \epsilon \\
=& \frac{1}{\sqrt{\upsilon_k}} \frac{\upsilon_k-\bar{\upsilon}_k}{1-\bar{\upsilon}_k} r^{<k>} + \frac{1}{\sqrt{\upsilon_k}} \frac{1-\upsilon_k}{{1-\bar{\upsilon}_k}}\big(\sqrt{\bar{\upsilon}_k}r + {\sqrt{1-\bar{\upsilon}_k}}\xi_1(k)\big) +\sqrt{\frac{1-\bar{\upsilon}_{k-1}}{1-\bar{\upsilon}_k} \omega_k}  \epsilon.
\end{aligned}
\end{equation}  
\hrulefill
\end{figure*}


Let
\begin{align} \nonumber
    \Gamma_1(k)=
    \begin{cases}
    \frac{1}{\sqrt{\upsilon_k}}, &\quad k=1;\\
    \frac{1}{\sqrt{\upsilon_k}} \frac{\upsilon_k-\bar{\upsilon}_k}{1-\bar{\upsilon}_k}, &\quad k=2,\cdots,K,
    \end{cases}
\end{align}
and
\begin{align} \nonumber
    \Gamma_2(k)=
    \begin{cases}
    \frac{\sqrt{1-\upsilon_k}}{\sqrt{\upsilon_k}}\epsilon, & k=1;\\
    \frac{1}{\sqrt{\upsilon_k}} \frac{1-\upsilon_k}{{1-\bar{\upsilon}_k}}\big(\sqrt{\bar{\upsilon}_k}r + {\sqrt{1-\bar{\upsilon}_k}}\xi_1(k)\big) & \\
    \quad +\sqrt{\frac{1-\bar{\upsilon}_{k-1}}{1-\bar{\upsilon}_k} \omega_k} \epsilon, &  k=2,\cdots,K.
    \end{cases}
\end{align}
We have the formula of the general term as
\begin{equation} \label{dm_reverse_general_terms}
    {r}^{<0>}=
    {r}^{<K>}\prod\nolimits_{i=1}^{K}\Gamma_1(i) + \big[\sum\nolimits_{i=1}^{K}\Gamma_2(i) \prod\nolimits_{j=1}^{i-1}\Gamma_1(j)\big]
\end{equation}
Significantly, $\Gamma_1(k) \geq 0, \forall k = 1, \cdots, K$.
Let ${k_{\text{exp\_max}}} := \mathop{\text{argmax}}\nolimits_{k} \Big(\mathbb{E}\big[\Gamma_2(k)\big]\Big)$, 
recalling that $r^{<K>} \sim \mathcal{N}(0,1)$ for a sufficiently large $K$ and $\epsilon \sim \mathcal{N}(0,1)$,
we have \eqref{diff_upper_bound_B}.
\begin{figure*}[t]
\begin{align}  \label{diff_upper_bound_B}
    \mathbb{E}(r^{<0>} - r)
    &\le \frac{1}{\sqrt{\upsilon_{k_{\text{exp\_max}}}}} \frac{1-\upsilon_{k_{\text{exp\_max}}}}{{1-\bar{\upsilon}_{k_{\text{exp\_max}}}}}\left[\sum\nolimits_{i=1}^{K}\prod\nolimits_{j=1}^{i-1}\Gamma_1(j)\right] \cdot \left[\sqrt{\bar{\upsilon}_{k_{\text{exp\_max}}}}\mathbb{E}(r)+{\sqrt{1-\bar{\upsilon}_{k_{\text{exp\_max}}}}}\xi_1({k_{\text{exp\_max}}})\right]- \mathbb{E}(r).
\end{align}
\begin{align} \label{diff_upper_bound_D}
    \mathbb{E}(r^{<0>} - r)
    &\ge \frac{1}{\sqrt{\upsilon_{k_{\text{exp\_min}}}}} \frac{1-\upsilon_{k_{\text{exp\_min}}}}{{1-\bar{\upsilon}_{k_{\text{exp\_min}}}}}\left[\sum\nolimits_{i=1}^{K}\prod\nolimits_{j=1}^{i-1}\Gamma_1(j)\right] \cdot  \left[\sqrt{\bar{\upsilon}_{k_{\text{exp\_min}}}}\mathbb{E}(r)+{\sqrt{1-\bar{\upsilon}_{k_{\text{exp\_min}}}}}\xi_1({k_{\text{exp\_min}}})\right]- \mathbb{E}(r).
\end{align}
\hrulefill
\begin{align} \label{eq_sub_exp_abs}
    & \big|\mathbb{E}(r^{<0>} - r)\big| \nonumber \\
    \le
    & \text{max}\Bigg(\left|\frac{1}{\sqrt{\upsilon_{k_{\text{exp\_max}}}}} \frac{1-\upsilon_{k_{\text{exp\_max}}}}{{1-\bar{\upsilon}_{k_{\text{exp\_max}}}}}\left[\sum\nolimits_{i=1}^{K}\prod\nolimits_{j=1}^{i-1}\Gamma_1(j)\right]\left[\sqrt{\bar{\upsilon}_{k_{\text{exp\_max}}}}+{\sqrt{1-\bar{\upsilon}_{k_{\text{exp\_max}}}}}\xi_{1,\texttt{rel}}({k_{\text{exp\_max}}})\right]- 1\right|, \left|\frac{1}{\sqrt{\upsilon_{k_{\text{exp\_min}}}}} \right. \nonumber \\
    & \left. \frac{1-\upsilon_{k_{\text{exp\_min}}}}{{1-\bar{\upsilon}_{k_{\text{exp\_min}}}}}\left[\sum\nolimits_{i=1}^{K}\prod\nolimits_{j=1}^{i-1}\Gamma_1(j)\right]\left[\sqrt{\bar{\upsilon}_{k_{\text{exp\_min}}}}+{\sqrt{1-\bar{\upsilon}_{k_{\text{exp\_min}}}}}\xi_{1,\texttt{rel}}({k_{\text{exp\_min}}})\right]- 1\right|\Bigg) \cdot \left|\mathbb{E}(r)\right|.
\end{align}
\hrulefill
\begin{equation}
\label{eq_sub_var}
\mathbb{D}(r^{<0>} - r)
\le \left[\prod\nolimits_{i=1}^{K}\Gamma_1(i)\right]^{2} + 
\left[\frac{\bar{\upsilon}_{k_{\text{var\_max}}}}{\upsilon_{k_{\text{var\_max}}}} (\frac{1-\upsilon_{k_{\text{var\_max}}}}{{1-\bar{\upsilon}_{k_{\text{var\_max}}}}})^2 \cdot \mathbb{D}(r)+\frac{1-\bar{\upsilon}_{{k_{\text{var\_max}}}-1}}{1-\bar{\upsilon}_{k_{\text{var\_max}}}} \omega_{k_{\text{var\_max}}}\right]\left[\sum\nolimits_{i=1}^{K}\prod\nolimits_{j=1}^{i-1}\Gamma_1(j)\right]^2 + \mathbb{D}(r).
\end{equation}
\hrulefill
\begin{align}
    & \mathbb{E}\big[(r^{<0>} - r)^2\big] =\big|\mathbb{E}(r^{<0>} - r)\big|^2 + \mathbb{D}(r^{<0>} - r) \nonumber \\
     \le & \text{max}\bigg(\Big\{\frac{1}{\sqrt{\upsilon_{k_{\text{exp\_max}}}}} \frac{1-\upsilon_{k_{\text{exp\_max}}}}{{1-\bar{\upsilon}_{k_{\text{exp\_max}}}}}\big[\sum\nolimits_{i=1}^{K}\prod\nolimits_{j=1}^{i-1}\Gamma_1(j)\big]
    \big[\sqrt{\bar{\upsilon}_{k_{\text{exp\_max}}}}+{\sqrt{1-\bar{\upsilon}_{k_{\text{exp\_max}}}}} \xi_{1,\texttt{rel}}({k_{\text{exp\_max}}})\big]- 1\Big\}^2, \nonumber\\
    & \Big\{\frac{1}{\sqrt{\upsilon_{k_{\text{exp\_min}}}}}
    \frac{1-\upsilon_{k_{\text{exp\_min}}}}{{1-\bar{\upsilon}_{k_{\text{exp\_min}}}}}\big[\sum\nolimits_{i=1}^{K}\prod\nolimits_{j=1}^{i-1}\Gamma_1(j)\big] \big[\sqrt{\bar{\upsilon}_{k_{\text{exp\_min}}}}+{\sqrt{1-\bar{\upsilon}_{k_{\text{exp\_min}}}}}\xi_{1,\texttt{rel}}({k_{\text{exp\_min}}})\big]- 1\Big\}^2\bigg)[\mathbb{E}(r)]^2 + \nonumber \\
    & \left[\prod\nolimits_{i=1}^{K}\Gamma_1(i)\right]^{2} + \left[\frac{\bar{\upsilon}_{k_{\text{var\_max}}}}{\upsilon_{k_{\text{var\_max}}}} (\frac{1-\upsilon_{k_{\text{var\_max}}}}{{1-\bar{\upsilon}_{k_{\text{var\_max}}}}})^2
    \mathbb{D}(r)+\frac{1-\bar{\upsilon}_{{k_{\text{var\_max}}}-1}}{1-\bar{\upsilon}_{k_{\text{var\_max}}}} \omega_{k_{\text{var\_max}}}\right] \left[\sum\nolimits_{i=1}^{K}\prod\nolimits_{j=1}^{i-1}\Gamma_1(j)\right]^2 + \mathbb{D}(r). \label{eq_sum_mean_var}
\end{align}
\hrulefill
\end{figure*}
For the same reason, if let ${k_{\text{exp\_min}}} := \mathop{\text{argmin}}\limits_{k}\Big(\mathbb{E}\big[\Gamma_2(k)\big]\Big)$, we have \eqref{diff_upper_bound_D}. Furthermore, \eqref{eq_sub_exp_abs} holds, where $\xi_{1,\texttt{rel}}(k) = \frac{\xi_{1}(k)}{\mathbb{E}(r)}$ means the relative error in the $k$-th step of the diffusion process.

Similarly for the variance, let $\mathop{\text{argmax}}\limits_{k} \Big(\mathbb{D}\big[\Gamma_2(k)\big]\Big)={k_{\text{var\_max}}}$, we have \eqref{eq_sub_var}. And according to \eqref{eq_sub_exp_abs} and \eqref{eq_sub_var}, \eqref{eq_sum_mean_var} is confirmed.

Based on our settings in Appendix \ref{hyper_parameters_setting}, we have $\sum_{i=1}^{K}\prod_{j=1}^{i-1} \Gamma_1(j) \approx 2.0337$ and $\prod_{i=1}^{K}\Gamma_1(i) \approx 3.5590\times 10^{-4}$, as well as 
the expected error between the generated samples and original samples is as
\begin{align} 
    &\mathbb{E}\big[(r^{<0>} - r)^2\big] \nonumber\\
    &\le \bigg[2.0337 \times \Big(1 + 0.2466\text{max}\big(|\xi_{1,\texttt{rel}}(k)|\big)\Big) -1\bigg]^2\big(\mathbb{E}(r)\big)^2 \nonumber\\
    &\quad + 1.2667\times 10^{-7} + \big[\mathbb{D}(r) + 0.9226 \times 4.9877 \times 10^{-3}\big] \nonumber \\
    &\quad \times 2.0337^2 +\mathbb{D}(r) \label{eq_theorm_error_exp_upper_bound} \\ 
    & \approx \Big[1.0337 + 0.5016\text{max}\big(|\xi_{1,\texttt{rel}}(k)|\big)\Big]^2 \big(\mathbb{E}(r)\big)^2 + \nonumber\\
    &\quad 5.1359\mathbb{D}(r) + 0.01904.\nonumber
\end{align}
It's worth noting that $|\xi_{1,\texttt{rel}}(k)|$ is much less than $1$, and $0.01904$ is much less than the first two terms. Therefore, we can have the conclusion. 
\end{proof}

\section{Proof of Theorem \ref{them_approximation_error}} \label{approximation_error}

Beforehand, we introduce the following useful lemmas.
\begin{lemma} \label{th_gmm_linear}
$Z=aX+bY$ obeys GMM
when given $X \sim \texttt{\large GMM}(\mu_{1,1},\cdots,\mu_{1,M},\sigma^2_{1,1},\cdots,\sigma^2_{1,M},w_{1,1},\cdots,w_{1,M})$, $Y \sim \texttt{\large GMM}(\mu_{2,1},\cdots,\mu_{2,N},\sigma^2_{2,1},\cdots,\sigma^2_{2,N},w_{2,1},\cdots,w_{2,N})$ and scalar $a,b$.
\end{lemma}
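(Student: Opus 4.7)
The plan is to reduce the statement to two standard facts about Gaussian random variables: (i) scaling a Gaussian by a constant yields a Gaussian with suitably adjusted mean and variance, and (ii) the sum of two independent Gaussians is again Gaussian, with means and variances adding. Since a GMM is by definition a mixture of Gaussian components, linearity of PDFs will then carry the argument through.

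First, I would make the (implicit) independence assumption on $X$ and $Y$ explicit, since otherwise the statement is false in general. Next, using the scaling property component-wise, I would observe that $aX \sim \texttt{\large GMM}(a\mu_{1,1},\dots,a\mu_{1,M},a^2\sigma_{1,1}^2,\dots,a^2\sigma_{1,M}^2,w_{1,1},\dots,w_{1,M})$, and similarly that $bY$ is a GMM with parameters $(b\mu_{2,j}, b^2\sigma_{2,j}^2, w_{2,j})$. These follow immediately from computing the PDF of $aX$ via the change-of-variables formula applied term by term in \eqref{gmm_eq}.

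The core step is then a convolution argument. Writing $f_{aX}$ and $f_{bY}$ as mixtures and using independence,
\begin{align}
f_Z(z) &= \int f_{aX}(u)\, f_{bY}(z-u)\,\mathrm{d}u \nonumber\\
&= \sum_{i=1}^{M}\sum_{j=1}^{N} w_{1,i}\, w_{2,j} \int g(u; a\mu_{1,i}, a^2\sigma_{1,i}^2)\, g(z-u; b\mu_{2,j}, b^2\sigma_{2,j}^2)\,\mathrm{d}u \nonumber\\
&= \sum_{i=1}^{M}\sum_{j=1}^{N} w_{1,i}\, w_{2,j}\, g\!\left(z;\, a\mu_{1,i}+b\mu_{2,j},\, a^2\sigma_{1,i}^2+b^2\sigma_{2,j}^2\right),\nonumber
\end{align}
where the last equality uses the well-known fact that the convolution of two Gaussian densities is Gaussian with summed means and variances.

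Finally, I would verify that the resulting expression is a valid GMM with $MN$ components by checking that the mixture weights sum to one: $\sum_{i,j} w_{1,i}w_{2,j} = \big(\sum_i w_{1,i}\big)\big(\sum_j w_{2,j}\big) = 1$. The main obstacle is essentially bookkeeping—carefully relabeling the $MN$ product weights and (mean, variance) pairs into the GMM parameter format of \eqref{gmm_eq}—since the underlying analytical facts (Gaussian scaling and Gaussian convolution) are standard. If the paper wants to avoid any independence assumption, one would need to replace the convolution step with a joint-density argument, but for the NDD setting where local components are treated as independent, the above route is both direct and sufficient.
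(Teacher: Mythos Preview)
Your proposal is correct and follows essentially the same approach as the paper: both arguments compute the density of $Z$ by convolving the two mixture PDFs, pull the double sum of weights outside the integral, and then invoke the standard fact that the convolution of two Gaussian densities is Gaussian with summed means and variances, finally checking $\sum_{i,j} w_{1,i}w_{2,j}=1$. The only cosmetic difference is that you first scale $X$ and $Y$ separately and then convolve, whereas the paper folds the scaling into a single change-of-variables inside the convolution integral (hence the $\tfrac{1}{b}$ Jacobian factor there); the underlying idea is identical.
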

\begin{proof}
    The PDF of $Z$ can be rewritten as \eqref{th_gmm_linear_pdf}, where $\sum\nolimits_{i=1}^{M} \sum\nolimits_{j=1}^{N} w_{1,i}w_{2,j} = 1$. The PDF of $Z$ is also in the form of a GMM.
\begin{figure*}
\begin{align} 
    &f_Z(z) \nonumber\\
    =&\frac{1}{b} \int_{-\infty}^{+\infty} f(x;\mu_{1,1},\cdots,\mu_{1,M},\sigma_{1,1}^2,\cdots,\sigma_{1,M}^2, w_{1,1} ,\cdots,w_{1,M}) f(\frac{z-ax}{b};\mu_{2,1},\cdots,\mu_{2,N},\sigma_{2,1}^2,\cdots, \sigma_{2,N}^2, w_{2,1},\cdots,w_{2,N})\text{d}x \nonumber \\
    =& \frac{1}{b}\int_{-\infty}^{+\infty} \sum\nolimits_{i=1}^{M} w_{1,i} g(x;\mu_{1,i},\sigma^2_{1,i}) \sum\nolimits_{j=1}^{N} w_{2,j} g(\frac{z-ax}{b}; \mu_{2,j},\sigma^2_{2,j})\text{d}x \nonumber\\         
    =& \frac{1}{b}\sum\nolimits_{i=1}^{M} \sum\nolimits_{j=1}^{N} w_{1,i}w_{2,j} \int_{-\infty}^{+\infty} g(x;\mu_{1,i},\sigma^2_{1,i})g(\frac{z-ax}{b}; \mu_{2,j},\sigma^2_{2,j}) \text{d}x \label{th_gmm_linear_pdf} \\
    =& \sum\nolimits_{i=1}^{M} \sum\nolimits_{j=1}^{N} w_{1,i}w_{2,j} g(x;a\mu_{1,i}+b\mu_{2,j}, a^2\sigma_{1,i}^2+b^2\sigma_{2,j}^2). \nonumber
\end{align}
\hrulefill
\end{figure*}
\end{proof}
Specifically, Gaussian distribution is a GMM with only one component. 
Therefore, the linearity additivity of Gaussian distribution and GMM is given without proof.
\begin{corollary} \label{th_gauss_linear}
    $Z=aX+bY \sim \texttt{\large GMM}(a\mu_1+b\mu_k,\cdots,a\mu_N+b\mu_k,a^2\sigma_1^2+b^2\sigma^2_k,\cdots,a^2\sigma_N^2+b^2\sigma^2_k, w_1,\cdots,w_N)$,     
    when given $X \sim \texttt{\large GMM} (\mu_1,\cdots,\mu_N,\sigma_1^2,\cdots,\sigma_N^2,w_1, \cdots,w_N), Y \sim \mathcal{N}(\mu_k,\sigma^2_k)$ and scalar $a,b$.
\end{corollary}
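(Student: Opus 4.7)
The plan is to derive Corollary \ref{th_gauss_linear} as an immediate specialization of Lemma \ref{th_gmm_linear}, by viewing the Gaussian $Y\sim\mathcal{N}(\mu_k,\sigma_k^2)$ as a degenerate GMM with a single component: namely $Y\sim\texttt{\large GMM}(\mu_k,\sigma_k^2,1)$ where the lone mixture weight equals $1$. Once this identification is made, the statement about $aX+bY$ follows by directly invoking the already-proven lemma, so no fresh probabilistic machinery is required.

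Concretely, I would first rewrite the density of $Y$ as $f_Y(y) = 1\cdot g(y;\mu_k,\sigma_k^2)$, thereby placing it in the form required by Lemma \ref{th_gmm_linear} with the number of components equal to $1$. Applying that lemma with $(M,N)\mapsto (N,1)$ (renaming to avoid the notational overload of $N$ between the corollary and the lemma), the double sum in \eqref{th_gmm_linear_pdf} collapses to a single sum: the index $j$ ranges only over $j=1$, with $w_{2,1}=1$ and $(\mu_{2,1},\sigma_{2,1}^2)=(\mu_k,\sigma_k^2)$. The resulting density is
\begin{equation*}
f_Z(z)=\sum\nolimits_{i=1}^{N} w_i\, g\!\bigl(z;\, a\mu_i+b\mu_k,\, a^2\sigma_i^2+b^2\sigma_k^2\bigr),
\end{equation*}
which is exactly the PDF of the claimed GMM. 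Since $\sum_{i=1}^N w_i=1$ by hypothesis on $X$, the mixture weights of $Z$ automatically sum to $1$, so no renormalization is needed.

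There is essentially no obstacle here; the only mild care point is bookkeeping of the index ranges and the notational collision between the corollary's $N$ and the lemma's $N$, which I would handle by locally relabeling. As an independent sanity check, one could also prove the corollary from first principles by convolving the mixture density $f_X(x)=\sum_i w_i\, g(x;\mu_i,\sigma_i^2)$ with the Gaussian density of $Y$ and exchanging sum and integral; the standard fact that the convolution of two Gaussians is Gaussian with added means and variances then yields the same conclusion, confirming that no hidden condition (e.g.\ independence of $X$ and $Y$, which is implicit in the lemma and inherited here) has been overlooked.
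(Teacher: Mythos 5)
Your proposal is correct and matches the paper's own (implicit) justification: the paper states the corollary without proof precisely because, as it notes, a Gaussian is a one-component GMM, so the result is the specialization of Lemma \ref{th_gmm_linear} with the second mixture having a single component of weight $1$ --- exactly your argument. Your collapsed single-sum density and the remark that independence is inherited from the lemma are both consistent with the paper's derivation in \eqref{th_gmm_linear_pdf}.
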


\begin{lemma}
    \label{lem:r^k-GMM}
    For $k \in \{0,\cdots, K\}$, if $r \sim \texttt{\large GMM}(\cdot)$, $r^{<k>} \sim \texttt{\large GMM}(\cdot)$.
\end{lemma}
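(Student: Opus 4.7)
The plan is to prove Lemma \ref{lem:r^k-GMM} by induction on the diffusion step $k$, using Corollary \ref{th_gauss_linear} (linear combination of a GMM with an independent Gaussian is again a GMM) as the workhorse. The base case $k=0$ is immediate since $r^{<0>} = r \sim \texttt{\large GMM}(\cdot)$ by hypothesis.

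For the inductive step, I would assume $r^{<k-1>} \sim \texttt{\large GMM}(\cdot)$ and then exploit the one-step forward recursion
\begin{equation*}
r^{<k>} = \sqrt{\upsilon_k}\, r^{<k-1>} + \sqrt{\omega_k}\, \epsilon_k, \qquad \epsilon_k \sim \mathcal{N}(0,1),
\end{equation*}
which follows from the reparameterization of $\mathcal{D}_q(r^{<k>}\mid r^{<k-1>}) = \mathcal{N}(\sqrt{1-\omega_k}\,r^{<k-1>}, \omega_k)$ together with $\upsilon_k = 1-\omega_k$. Since the forward diffusion noise $\epsilon_k$ is drawn independently of the state $r^{<k-1>}$, Corollary \ref{th_gauss_linear} applies directly with $a = \sqrt{\upsilon_k}$, $b = \sqrt{\omega_k}$, $X = r^{<k-1>}$, and $Y = \epsilon_k$, giving $r^{<k>} \sim \texttt{\large GMM}(\cdot)$. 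Explicitly, if $r^{<k-1>} \sim \texttt{\large GMM}(\mu_1^{(k-1)},\ldots,\mu_N^{(k-1)}, (\sigma_1^{(k-1)})^2,\ldots,(\sigma_N^{(k-1)})^2, w^{[1]},\ldots,w^{[N]})$, then the new components have means $\sqrt{\upsilon_k}\,\mu_i^{(k-1)}$ and variances $\upsilon_k (\sigma_i^{(k-1)})^2 + \omega_k$, with weights unchanged.

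An equivalent one-shot proof would use the closed form $r^{<k>} = \sqrt{\bar\upsilon_k}\,r + \sqrt{1-\bar\upsilon_k}\,\bar\epsilon_k$ from \eqref{dm_general_term} with $\bar\epsilon_k \sim \mathcal{N}(0,1)$ independent of $r$, again invoking Corollary \ref{th_gauss_linear} in a single step; this avoids induction but requires citing the unrolled recursion. I would present the inductive version since it makes the role of each diffusion step transparent and matches Algorithm \ref{dm_train_algorithm}.

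No real obstacle is expected: the only subtle point is ensuring the independence of $r^{<k-1>}$ (respectively $r$) and the injected Gaussian noise, which is built into the definition of the forward diffusion chain, so Corollary \ref{th_gauss_linear} applies cleanly. Range-of-$k$ bookkeeping (namely handling $k=0$ separately, since no noise has yet been added) is the only edge case worth flagging in the write-up.
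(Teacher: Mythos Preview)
Your argument is clean and correct for the \emph{forward} diffusion chain: taking $r^{<0>}=r$ and adding noise, each $r^{<k>}$ is an affine combination of a GMM variable with an independent Gaussian, hence GMM by Corollary~\ref{th_gauss_linear}. The one-shot version via \eqref{dm_general_term} is even shorter.

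The paper, however, argues along the \emph{reverse} (generative) chain. Its induction runs in the opposite direction: the base case is $r^{<K>}\sim\mathcal{N}(0,1)$ (the fresh seed drawn in Algorithm~\ref{dm_train_algorithm}), and the inductive step uses the reverse recursion \eqref{dm_reverse_r0_rk}, in which $r^{<k-1>}$ is a linear combination of $r^{<k>}$, the original $r$, the scalar residual $\xi_1(k)$, and fresh Gaussian noise $\epsilon$. Lemma~\ref{th_gmm_linear} and Corollary~\ref{th_gauss_linear} then carry the GMM property from step $k$ down to the generated $r^{<0>}$.

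This distinction matters because the lemma is invoked in the proof of Theorem~\ref{them_approximation_error}, which concerns the \emph{generated} sample $r^{<0>}$, and in the paper's notation $r^{<0>}\neq r$ in general (cf.\ Theorem~\ref{them_generation_error} and the output of Algorithm~\ref{dm_train_algorithm}). Your forward-chain argument, while valid for the noised iterates, says nothing directly about the reverse-process iterates that Theorem~\ref{them_approximation_error} actually needs; to supply that, you would redo the induction downward using \eqref{dm_reverse_r0_rk}, which is exactly the paper's route.
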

\begin{proof}
    We prove the lemma by induction. Naturally, by the procedures in DM, $r^{<K>} \sim \mathcal{N}(0,1)$. Afterwards, based on \eqref{dm_reverse_r0_rk}, $r^{<K-1>}$ is expressed as a linear combination of $r \sim \texttt{\large GMM}(\cdot)$ and Gaussian distributions (i.e., $r^{<K>} \sim \mathcal{N}(0,1)$, $\epsilon_{\Theta}$ and $ \epsilon$). Therefore $r^{<k-1>} \sim \texttt{\large GMM}(\cdot)$. Furthermore, if $k<K$, $r^{<k-1>} $ also obeys $ \texttt{\large GMM}(\cdot)$, which is represented linearly by $r,r^{<k>} \sim \texttt{\large GMM}(\cdot)$ and $\epsilon_{\Theta}, \epsilon \sim \mathcal{N}(\cdot)$. Thus $r^{<0>} \sim \texttt{\large GMM}(\cdot)$ according to mathematical induction \cite{dubinsky1989teaching}.
\end{proof}
Next, we give the proof of Theorem \ref{them_approximation_error}. 

\begin{proof}
Theorem \ref{th_gmm_linear} and Corollary \ref{th_gauss_linear} indicate that the samples generated by DM are more inclined to satisfy GMM, as GMM and Gaussian distribution are both additive. Linear additivity ensures that the approximation error between $\mathcal{D}$ and GMM does not further expand along with the diffusion process. The expected upper limit of this approximation error will be quantified below.

As there exist a bounded approximation error $\xi$ for $r \sim \texttt{GMM}(\cdot)$, we have $r = {r}_{\texttt{GMM}} + \xi$, where ${r}_{\texttt{GMM}} \sim \texttt{\large GMM}(\cdot), r \sim \mathcal{D}$. Let
\begin{align} \nonumber
    \Gamma_{2,1}(k)=
    \begin{cases}
    \frac{\sqrt{1-\upsilon_k}}{\sqrt{\upsilon_k}}\epsilon, & k=1;\\
    \frac{1}{\sqrt{\upsilon_k}} \frac{1-\upsilon_k}{{1-\bar{\upsilon}_k}} \Big(\sqrt{\bar{\upsilon}_k}{r}_{\texttt{GMM}} + \\ \nonumber
    \ {\sqrt{1-\bar{\upsilon}_k}}\xi_1(k)\Big)+\sqrt{\frac{1-\bar{\upsilon}_{k-1}}{1-\bar{\upsilon}_k} \omega_k} \epsilon, \quad & k=2,\cdots,K,
    \end{cases}
\end{align} \nonumber
and
\begin{align} \nonumber
    \Gamma_{2,2}(k)=
    \begin{cases}
    0, & k=1;\\
    \frac{1}{\sqrt{\upsilon_k}} \frac{1-\upsilon_k}{{1-\bar{\upsilon}_k}} \sqrt{\bar{\upsilon}_k}\xi, \quad & k=2,\cdots,K,
    \end{cases}
\end{align}

\noindent we have $\Gamma_2(k) = \Gamma_{2,1}(k)+\Gamma_{2,2}(k)$.
According to \eqref{dm_reverse_general_terms}, 
        \begin{align}
        {r}^{<0>}
        = &\underbrace{{r}^{<K>}\prod\nolimits_{i=1}^{K}\Gamma_1(i)}_{{r}^{<0>}_1} + \underbrace{\big[\sum\nolimits_{i=1}^{K}\Gamma_{2,1}(i)\prod\nolimits_{j=1}^{i-1}\Gamma_1(j)\big]}_{{r}^{<0>}_2}+\\ \nonumber
        &\underbrace{\big[\sum\nolimits_{i=1}^{K}\Gamma_{2,2}(i)\prod\nolimits_{j=1}^{i-1}\Gamma_1(j)\big]}_{{r}^{<0>}_3}. \nonumber
    \end{align} 

Based on Lemma \ref{th_gmm_linear}, Corollary \ref{th_gauss_linear}, and Lemma \ref{lem:r^k-GMM}, ${r}^{<0>}_1+{r}^{<0>}_2 \sim \texttt{\large GMM}(\cdot)$. Meanwhile, $r^{<0>}_3$ resembles the approximation error for $r^{<0>}$. On condition of the aforementioned settings,
\begin{align} \nonumber
    \mathbb{E}({r}^{<0>}_3)
    \le& 0.9997 \mathbb{E}(\xi), 
\end{align}
when $K=25$. This means the expected approximation error for the sample generated by DM will be less than that for the original sample. 
\end{proof}

\noindent\emph{Remark}: 
When $K$ is smaller, the expected approximation error for the generated sample will be closer to that for the original one. 
But the condition for the reverse diffusion process in DM is $\mathcal{D}_p(r^{<K>})\sim \mathcal{N}(0,1)$ given $K \rightarrow +\infty$. 
If $K$ is too small, the condition is no longer satisfied.
Especially in complex multi-source noise, generation error tends to amplify as shown in Fig. \ref{DG_fig} when $K=5$. Considering such a trade-off problem, we set $K=25$ in our experiments. 

\section{DNN Structure and Hyperparameter Configurations}\label{hyper_parameters_setting}
\blus{
In the experiment, the NDD is trained based on MAPPO \cite{yu2022surprising} with two DNNs, that is, the actor network and the critic network. The former consists of $3$-layer recurrent neural networks (RNN) and the latter is a $3$-layer MLP. Both networks update parameters alternately, following the CTDE framework. In addition, policies for agents are from actor networks, which can be optimized by generalized advantage estimation (GAE)\cite{yu2022surprising, schulman2017proximal}.}

\blus{In addition to configurations about noises in MPE and SMAC in Table \ref{tab:noise_config_along_scene}, the number of agents in Adversary, Crypto, Speaker Listener, Spread, and Reference is 3, 2, 2, 3, and 3, respectively. In SMAC, the number and types of agents are included in tasks' names. For example, ``3m" means 3 Marines in MAS, and ``z, s, sc" in tasks' names are ``Zealot, Stalker, Spine Crawler", respectively. Other hyperparameter settings used in NDD are provided in Table \ref{hyper_parameters_tab}.}

\blus{In DM settings, consistent with Theorem \ref{them_generation_error} and Theorem \ref{them_approximation_error}, $K = 25$, $[\omega_1, \cdots, \omega_K]= 0.499\times10^{-2} \times \frac{1}{1+e^{-\boldsymbol{h}}}+10^{-5}$, where $\boldsymbol{h}$ is an array with a length of $K$, evenly spaced in $[-6,6]$. As for distortion risk functions, Table \ref{riskfunnotations} shows the details of $\eta$.} 

}
\end{document}